
\documentclass{article}

\usepackage{microtype}
\usepackage{graphicx}
\usepackage{subfigure}
\usepackage{booktabs} 

\usepackage{hyperref}


\usepackage[utf8]{inputenc} 
\usepackage[T1]{fontenc}    
\usepackage{url}            
\usepackage{booktabs}       
\usepackage{amsfonts}       
\usepackage{nicefrac}       
\usepackage{microtype}      


\usepackage{url}
\usepackage{graphicx}
\usepackage{amsmath}

\usepackage{amsfonts}
\usepackage{graphicx} 
\usepackage{algorithm,algorithmic}
\usepackage{tabularx} 
\usepackage{amsthm}
\usepackage{amsfonts}
\usepackage{amsmath}
\usepackage{amssymb}
\usepackage{mathrsfs}
\usepackage{multirow}
\usepackage{bbm}
\usepackage{wrapfig}


\newtheorem{theorem}{Theorem}
\newtheorem{definition}{Definition}

\newtheorem{lemma}{Lemma}

\usepackage{thmtools,thm-restate}

\DeclareMathOperator*{\argmin}{arg\,min}

\DeclareMathOperator*{\minimize}{minimize}
\DeclareMathOperator*{\maximize}{maximize}
\DeclareMathOperator*{\unif}{Unif}

\DeclareMathOperator*{\rank}{rank}

\usepackage{enumitem}
\usepackage{wrapfig}

\newcommand{\beq}{\begin{equation}}

\newcommand{\eeq}{\end{equation}}
\newcommand{\beqs}{\begin{equation*}}
\newcommand{\eeqs}{\end{equation*}}

\newcommand{\st}{\text{ s.t. }}

\newcommand{\emp}{\textsf{EMP}}
\newcommand{\smp}{\textsf{SMP}}
\newcommand{\proj}{\textsf{Proj}}
\newcommand{\update}{\textsf{Update}}

\newcommand{\aemp}{\textsf{Accel-EMP}}
\newcommand{\asmp}{\textsf{Accel-SMP}}

\renewcommand{\L}{\mathcal{L}}

\newcommand{\LL}{\mathbb{L}}

\newcommand{\R}{\mathbb{R}}

\newcommand{\U}{\mathcal{U}}

\newcommand{\M}{\mathcal{M}}

\newcommand{\E}{\mathbb{E}}

\newcommand{\1}{\mathbf{1}}
\newcommand{\<}{\langle}
\renewcommand{\>}{\rangle}

\usepackage{setspace}
\usepackage{bm}

\newcommand{\y}{\mathbf{y}}
\renewcommand{\v}{\mathbf{v}}
\newcommand{\lambdad}{\bm \lambda}
\newcommand{\gam}{\mu}

\renewcommand{\Pr}{\mathbb{P}}

\renewcommand{\tilde}[1]{\widetilde{#1}}

\usepackage{natbib}

\usepackage[accepted]{icml2020}

\icmltitlerunning{Accelerated Message Passing for Entropy-Regularized MAP Inference}

\begin{document}

\twocolumn[
\icmltitle{Accelerated Message Passing for Entropy-Regularized MAP Inference}



\icmlsetsymbol{equal}{*}

\begin{icmlauthorlist}
\icmlauthor{Jonathan N. Lee}{stan}
\icmlauthor{Aldo Pacchiano}{eecs}
\icmlauthor{Peter Bartlett}{eecs,stat}
\icmlauthor{Michael I. Jordan}{eecs,stat}
\end{icmlauthorlist}

\icmlaffiliation{stan}{Department of Computer Science, Stanford University, USA}
\icmlaffiliation{eecs}{Department of Electrical Engineering and Computer Sciences, University of California, Berkeley, USA}
\icmlaffiliation{stat}{Department of Statistics, University of California, Berkeley, USA}

\icmlcorrespondingauthor{Jonathan Lee}{jnl@stanford.edu}
\icmlcorrespondingauthor{Aldo Pacchiano}{pacchiano@berkeley.edu}

\icmlkeywords{Machine Learning, ICML}

\vskip 0.3in
]



\printAffiliationsAndNotice{}  

\begin{abstract}
   
Maximum a posteriori (MAP) inference in discrete-valued Markov random fields is a fundamental problem in machine learning that involves identifying the most likely configuration of random variables given a distribution.  Due to the difficulty of this combinatorial problem, linear programming (LP) relaxations are commonly used to derive specialized message passing algorithms that are often interpreted as coordinate descent on the dual LP. To achieve more desirable computational properties, a number of methods regularize the LP with an entropy term, leading to a class of smooth message passing algorithms with convergence guarantees.
In this paper, we present randomized methods for accelerating these algorithms by leveraging techniques that underlie classical accelerated gradient methods. The proposed algorithms incorporate the familiar steps of standard smooth message passing algorithms, which can be viewed as coordinate minimization steps. 
We show that these accelerated variants achieve faster rates for finding $\epsilon$-optimal points of the unregularized problem, and,
when the LP is tight, we prove that the proposed algorithms recover the true MAP solution in fewer iterations than standard message passing algorithms.

\end{abstract}

\section{Introduction}
\label{section::introduction}

Discrete undirected graphical models are extensively used in machine learning since they provide a versatile and powerful way of modeling dependencies between variables~\citep{wainwright2008graphical}. In this work we focus on the important class of discrete-valued pairwise models. Efficient inference in these models has multiple applications, ranging from computer vision~\citep{jegelka2011submodularity}, to statistical physics~\citep{mezard2009information}, information theory~\citep{mackay2003information} and even genome research~\citep{torada2019imagene}.

In this paper we study and propose efficient methods for maximum a posteriori (MAP) inference in pairwise, discrete-valued Markov random fields. The MAP problem corresponds to finding a configuration of all variables achieving a maximal probability and is a key problem that arises when using these undirected graphical models. There exists a vast literature on MAP inference spanning multiple communities, where it is known as constraint satisfaction \citep{schiex1995valued} and energy minimization \citep{kappes2013comparative}. Even in the binary case, the MAP problem is known to be NP-hard to compute exactly or even to approximate  \citep{kolmogorov2004energy,cooper1990computational}. 

As a result, there has been much emphasis on devising methods that may work on settings under which the problem becomes tractable. A popular way to achieve this goal is to express the problem as an integer program and then relax this to a linear program (LP). If the LP constraints are set to the convex hull of marginals corresponding to all global settings, also known as the marginal polytope~\citep{wainwright2008graphical}, then the LP would yield the optimal integral solution to the MAP problem. Unfortunately, writing down this polytope would require exponentially many constraints and therefore it is not tractable. We can consider larger polytopes defined over subsets of the constraints required to define the marginal polytope. This is a popular approach that underpins the family of LP relaxations known as the Sherali-Adams (SA) hierarchy \cite{sherali1990hierarchy}. 
Instead of enforcing  global consistency, we enforce only pairwise consistency via the local polytope, thus yielding pseudo-marginals that are pairwise consistent but may not correspond to any true global distribution. Despite the local polytope requiring a number of constraints that is linear in the number of edges of the input graph, the runtime required for solving this linear program for large graphs may be prohibitive in practice \citep{yanover2006linear}. These limitations have motivated the  design and theoretical analysis of message passing algorithms that exploit the structure of the problem. In this paper we study a class of smooth message passing algorithms, derived from a regularized version of the local polytope LP relaxation \citep{ravikumar2010message,meshi2012convergence, savchynskyy2011study,hazan2012convergent}. 

The technique of using entropy penalties to regularize linear programs has a long and successful history. It has been observed, both practically and in theory that, in some problems, solving a regularized linear program yields algorithms with computational characteristics that make them  preferable to simply using an LP solver, particularly with large scale problems. Previous work has studied and analyzed convergence rates, and even rounding guarantees for simple message passing algorithms \cite{ravikumar2010message,lee2019approximate,meshi2012convergence} based on iterative Bregman projections onto the constraints. 
These algorithms are sometimes described as being \textit{smooth}, as the dual objective is smooth in the dual variable as a result of the entropy regularization.
Inspired by accelerated methods in optimization \cite{lee2013efficient,nesterov2012efficiency} we propose and analyze two new variants of accelerated message passing algorithms \aemp{}  and \asmp{}. In this paper, we are able to show our methods drastically improve upon the convergence rate of previous message passing algorithms.

\subsection{Related Work}
\paragraph{MAP Inference}
	The design and analysis of convergent message passing algorithms has attracted a great deal of attention over the years. Direct methods of deriving asymptotically convergent algorithms have been extensively explored. Examples include tree-reweighted message passing \citep{kolmogorov2006convergent}, max-sum diffusion \citep{werner2007linear}, MP-LP \citep{globerson2008fixing}, and other general block-coordinate ascent methods \citep{kappes2013comparative,sontag2011introduction}. Our work builds upon regularized inference problems that directly regularize the linear objective with strongly convex terms, often leading to "smooth" message passing \citep{savchynskyy2011study,savchynskyy2012efficient,hazan2012convergent,weiss2012map,meshi2015smooth}. This formalization has led to a number of computationally fast algorithms, but often with asymptotic guarantees. 
	
	The focus of this paper is non-asymptotic convergence guarantees for families of these algorithms.  \citet{ravikumar2010message} provided one of the first directions towards this goal leveraging results for proximal updates, but ultimately the rates were not made explicit due to the approximation at every update. \citet{meshi2012convergence} provided a comprehensive analysis of a message passing algorithm derived from the entropy-regularized objective and later a gradient-based one for the quadratically regularized objective \citep{meshi2015smooth}. However, the convergence rates were only given for the regularized objective, leaving the guarantee on the unregularized problem unknown. Furthermore, the objective studied by \citet{meshi2015smooth} did not ultimately yield  a message passing (or coordinate minimization) algorithm, which could be more desirable from a computational standpoint. \citet{lee2019approximate} provided rounding guarantees for a related message passing scheme derived from the entropy regularized objective, but did not consider convergence on the unregularized problem either.
	\citet{savchynskyy2011study} studied the direct application of the acceleration methods of \citet{nesterov2018lectures}; however, this method also forwent the message passing scheme and convergence on the unregularized problem was only shown asymptotically. \citet{jojic2010accelerated} gave similar results for a dual decomposition method that individually smooths subproblems. Acceleration was applied to get fast convergence but on the dual problem.
	
		In addition to problem-specific message passing algorithms, there are numerous general purpose solvers that can be applied to MAP inference to solve the LP with strong theoretical guarantees.  Notably, interior point methods \citep{karmarkar1984new,renegar1988polynomial,gondzio2012interior} offer a promising alternative for faster algorithms. For example recent work provides a $\widetilde O(\sqrt{\rank})$ iteration complexity by \citet{lee2014path}, where $\rank$ is the rank of the constraint matrix. 
In this paper, we only consider comparisons between message passing algorithms; however, it would be interesting to compare both empirical and theoretical differences between message passing and interior point methods in the future.

    \paragraph{Accelerating Entropy-Regularized Linear Programs}
    We also highlight a connection with similar problems in other fields.
    Notably, optimal transport also admits an LP form and has seen a surge of interest recently. As in MAP inference, these approximations are conducive to empirically fast algorithms, such as the celebrated \textit{Sinkhorn} algorithm, that outperforms generic solvers \citep{cuturi2013sinkhorn, benamou2015iterative,genevay2016stochastic}. In theory, \citet{altschuler2017near} showed convergence guarantees for Sinkhorn and noted that it can be viewed as a block-coordinate descent algorithm on the dual, similar to the MAP problem. Since this work, several methods have striven to obtain faster rates \citep{lin2019acceleration,dvurechensky2018computational}, which can be viewed as building on the seminal acceleration results of \citet{nesterov2018lectures} for general convex functions.  
    It is interesting to note that the entropy-regularized objectives in optimal transport and MAP inference effectively become softmax minimization problems, which have also been studied generally in the context of smooth approximations \citep{nesterov2005smooth} and maximum flow  \citep{sidford2018coordinate}.

\subsection{Contributions}

For the case of MAP inference from entropy-regularized objectives, we address the question: is it possible to directly accelerate message passing algorithms with faster non-asymptotic convergence and improved rounding guarantees? We answer this question affirmatively from a theoretical standpoint. We propose a method to directly accelerate standard message passing schemes, inspired by Nesterov.
We prove a convergence guarantee for standard schemes on the unregularized MAP objective over $\LL_2$, showing convergence on the order of $\widetilde O(m^5/\epsilon^3)$ iterations where $m$ is the number of edges, assuming the number of vertices and labels and the potential functions are fixed. We then prove that the accelerated variants converge in expectation on the order of $\widetilde O(m^{9/2}/\epsilon^2)$ iterations. We conclude by showing that the accelerated variants recover the true MAP solution with high probability in fewer iterations compared to prior message passing analyses when the LP relaxation is tight and the solution is unique \citep{lee2019approximate}.

%

\subsection{Notation} Let $\R_+$ denote the set of non-negative reals. The $d$-dimensional probability simplex over the finite set $\chi$ is $\Sigma^d := \left\{ p \in \R^d_+  \ : \ \sum_{x \in \chi} p(x) = 1 \right\}$. A joint distribution, $P \in \Sigma^{d\times d}$, is indexed by $x_c = (x_p, x_q) \in \chi^2$.
The transportation polytope of $p, q\in \Sigma^d$ is defined as the set of pairwise joint distributions that marginalize to $p$ and $q$, written as $\U_d(p, q) = \{ P \in \R^{d\times d}_+ \ : \ \sum_{x_p} P(x_p, x) = q(x), \ \sum_{x_q} P(x, x_q) = p(x) \}$. For any vector $p \in \R^d_+$, we write the entropy as $H(p) := - \sum_{x} p(x) (\log p(x) - 1)$. While this is a somewhat unusual definition, it simplifies terms later and has been used by \cite{benamou2015iterative,lee2019approximate}. We will use $\<\cdot, \cdot\>$ generally to mean the sum of the elementwise products between two equal-length indexable vectors.
 For any two vectors $p, q \in \R^d_+$, the Hellinger distance is  $h(p, q): = \frac{1}{\sqrt 2}\|\sqrt p - \sqrt q\|_2$. The vector $\1_d \in \R^d$ consists of all ones.



\section{Coordinate Methods for MAP Inference}\label{sec::setup}

\subsection{Smooth Approximations}
For the pairwise undirected graph $G = (V, E)$ with $ n := |V| $ and $m:= |E|$\footnote{In this paper we study pairwise models with only vertices and edges, implying only pairwise interaction between variables. However, our results can be extended to more general graphs.}, we associate each vertex $i \in V$ with the random variable $X_i$ on the finite set of labels $\chi$ of size $d= |\chi| \geq 2$ and a distribution that factorizes as $p(x_V) = \frac{1}{Z(\phi) } \prod_{e \in E} \phi_e(x_e) \prod_{i \in V} \phi_i(x_i)$ where $\phi_i \in \mathbb{R}^{d}$ and $\phi_e \in \mathbb{R}^{d^2}$. For any edge $e \in E$, we use $i \in e$ to denote that $i$ is one of the two endpoints of $e$.
For $i \in V$, we define $N_i := \{ e \in E \ : \ i \in e\}$, as the set of all incident edges to $i$. We assume that each vertex has at least one edge. MAP inference refers to the combinatorial problem of identifying the configuration that maximizes this probability distribution. In our case, we cast it as the following minimization problem:
\begin{align}\label{eq::map}\tag{MAP}
\minimize_{x_V \in \chi^{|V|}} \quad \textstyle \sum_{i \in V} C_i(x_i) + \textstyle \sum_{e \in E} C_e(x_e),
\end{align}
where $C = - \log \phi$, i.e., we view $C$ as indexable by vertices, edges, and their corresponding labels. 
It can be shown \cite{wainwright2008graphical} that (\ref{eq::map}) is equivalent to the following linear program:
\begin{align*}
\min_{\mu \in \M} \ \< C, \mu \> \quad \st \quad \mu \in \M,
\end{align*}
where $\mu \in \R^{r_P}$ for $r_P = nd + md^2$ is known as a marginal vector, and $\<C, \mu\> = \sum_{i \in V} \sum_{x_i \in \chi} C_i(x_i) \mu_i(x_i) + \sum_{e \in E} \sum_{x_e \in \chi^2} C_e(x_e) \mu_e(x_e)$, and $\M$ is the marginal polytope defined as
\begin{align*}
    \M := \left\{ \mu  \ : \ \exists \ \Pr \ \st\begin{array}{lr}
	\mathbb P_{X_i}(x_i) = \mu_i(x_i), \ \forall i, x_i \\
	\mathbb P_{X_i X_j}(x_e) = \mu_{e}(e), \  \forall e, x_e   
	\end{array} \right\}.
\end{align*}
Here, $\Pr$ is any valid distribution over the random variables $\{ X_i\}_{i \in V}$. Since $\M$ is described by exponentially many constraints in the graph size, outer-polytope relaxations are a standard paradigm to approximate the above problem by searching instead over the local polytope:
\begin{align*} 
	\LL_2 := \left\{ \mu  \ : \ \begin{array}{lr}
	\mu_i \in \Sigma^d & \forall i \in V \\
	\mu_{e} \in \U_d(\mu_i, \mu_j)  & \forall e = ij \in E\\    
	\end{array} \right\}.
\end{align*}
The local polytope $\LL_2$ enforces only pairwise consistency between variables while $\M$ requires the marginal vector to be generated from a globally consistent distribution of $\{X_i\}_{i \in V}$, so that $\M\subseteq \LL_2$. We refer the reader to the survey of \citet[\S3]{wainwright2008graphical} for details. Thus, our primary objective throughout the paper will be finding solutions to the approximate problem
\begin{align*}\label{eq::original}\tag{P}
    \minimize \quad \< C, \mu \> \quad \st \quad \mu \in \LL_2.
\end{align*}
Let $\epsilon > 0$. We say that a point $\widehat \mu \in \LL_2$ is $\epsilon$-optimal for (\ref{eq::original}) if it satisfies $\< C, \widehat \mu \> \leq \min_{\mu \in \LL_2} \< C, \mu\> + \epsilon$. 
For a random $\widehat \mu$, we say that it is expected $\epsilon$-optimal if
\begin{align*}
    \E \left[ \< C, \widehat \mu \> \right] \leq \min_{\mu \in \LL_2} \<C, \mu\> + \epsilon.
\end{align*}
Despite the simple form of the linear program, it has been observed to be difficult to solve in practice for large graphs even with state-of-the-art solvers \cite{yanover2006linear}, motivating researchers to study an approximate version with entropy regularization:
\begin{align}\label{eq::primal}\tag{Reg-P}
\minimize \quad \< C, \mu\> - \frac{1}{\eta} H(\mu) \quad \st \quad \mu \in \LL_2, 
\end{align}
where $\eta \in \R_+$ controls the level of regularization. Intuitively, the regularization encourages $\mu_i$ and $\mu_e$ to be closer to the uniform distribution for all vertices and edges.

The dual problem takes on the succinct form of an unconstrained log-sum-exp optimization problem. Thus, when combined, the local polytope relaxation and entropy-regularizer result in a smooth approximation.
\begin{restatable}{proposition}{propdualobj}
\label{prop::dual-obj}
	The dual objective of (\ref{eq::primal}) can be written as
	\begin{align}\label{eq::dual}\tag{Reg-D}
	\minimize_{\lambdad} \quad L(\lambdad),
	\end{align}
	where $L$ is defined as
	\begin{align*}
	L(\lambdad) & = \frac{1}{\eta} \sum_{i \in V} \log  \sum_{x \in \chi} \exp\left( - \eta C_i(x) + \textstyle \sum_{e \in N_i} \lambdad_{e, i}(x) \right) \\
	& + \frac{1}{\eta} \sum_{e \in E}  \log  \sum_{x \in \chi^2} \exp\left( - \eta C_e(x) - \textstyle \sum_{i \in e} \lambdad_{e, i}(x_i) \right).
	\end{align*}
	Furthermore, primal variables can be recovered directly by
	\begin{align*}
	\mu_i^{\lambdad}(x_i) & \propto \exp \left( - \eta C_i(x_i) + \eta \textstyle \sum_{e \in N_i} \lambdad_{e, i}(x_i) \right)  \\
	\mu_{e}^{\lambdad} (x_e) & \propto \exp \left( - \eta C_e(x_e) - \eta \textstyle \sum_{i \in e} \lambdad_{e, i}((x_e)_i)\right).
	\end{align*}
\end{restatable}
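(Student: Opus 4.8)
\emph{Proof plan.} The plan is to derive both claims by a standard Lagrangian duality computation in which we dualize only the marginalization (coupling) constraints defining $\LL_2$, while keeping the simplex constraints inside the domain. Recall that $\mu \in \LL_2$ asks that each $\mu_i \in \Sigma^d$ and that, for every edge $e = ij$, the identities $\sum_{x_e : (x_e)_i = x}\mu_e(x_e) = \mu_i(x)$ and $\sum_{x_e : (x_e)_j = x}\mu_e(x_e) = \mu_j(x)$ hold; given $\mu_i,\mu_j \in \Sigma^d$ these identities already force $\mu_e$ to be a probability vector, so it is harmless to regard $\mu_e \in \Sigma^{d\times d}$ as part of the domain too. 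For each $e \in E$, endpoint $i \in e$, and label $x \in \chi$ I would introduce a multiplier $\lambdad_{e,i}(x)$ for the constraint $\sum_{x_e:(x_e)_i = x}\mu_e(x_e) - \mu_i(x) = 0$ and form the partial Lagrangian
\begin{align*}
\mathcal{L}(\mu,\lambdad) &= \<C,\mu\> - \tfrac1\eta H(\mu) \\
&\quad {}+ \sum_{e\in E}\sum_{i\in e}\sum_{x\in\chi}\lambdad_{e,i}(x)\Big(\textstyle\sum_{x_e:(x_e)_i = x}\mu_e(x_e) - \mu_i(x)\Big).
\end{align*}
The structural fact that makes everything work is that $\mathcal{L}$ separates across blocks: regrouping, $\mathcal{L}(\mu,\lambdad) = \sum_{i\in V}F_i(\mu_i;\lambdad) + \sum_{e\in E}G_e(\mu_e;\lambdad)$, where each $F_i$ and each $G_e$ is an entropy-regularized linear function of a single simplex variable, with the multiplier $\lambdad_{e,i}$ appearing in the coefficient of $\mu_e$ with a $+$ sign (on labels whose $i$-coordinate equals $x$) and in the coefficient of $\mu_i$ with a $-$ sign.

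Next I would invoke the elementary Gibbs variational identity: for any $a \in \R^k$,
\begin{equation*}
\min_{p\in\Sigma^k}\ \<a,p\> - \tfrac1\eta H(p) \;=\; -\tfrac1\eta\log\textstyle\sum_{j} e^{-\eta a_j} \;+\; \mathrm{const},
\end{equation*}
attained uniquely at $p^{\star}_j \propto e^{-\eta a_j}$, the additive constant depending only on the shifted normalization of $H$ and not on $a$ (the minimizer lands in the relative interior of the simplex because $-\tfrac1\eta H$ has gradient diverging on each boundary facet, so the nonnegativity constraints are inactive). Applying this to every $F_i$ and every $G_e$ accomplishes two things at once. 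First, it produces the dual function $g(\lambdad) := \min_\mu \mathcal{L}(\mu,\lambdad)$ in closed form; up to an additive constant it equals $-L(\lambdad)$, so that the Lagrangian dual problem $\max_\lambdad g(\lambdad)$ is precisely $\min_\lambdad L(\lambdad)$. Second, it exhibits the unique inner minimizer $\mu^{\lambdad}$, whose $i$-block and $e$-block are exactly the normalized exponentials in the statement — the signs of the incident multipliers being inherited from $\mathcal{L}$ and the factor of $\eta$ arising from inverting the $-\tfrac1\eta H$ term — which is the claimed recovery map for $\mu^{\lambdad}_i$ and $\mu^{\lambdad}_e$.

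Finally, to justify that $\min_\lambdad L(\lambdad)$ is genuinely the dual of (\ref{eq::primal}) with no duality gap, I would verify Slater's condition. Problem (\ref{eq::primal}) minimizes a strictly convex function (the linear term plus the strictly convex $-\tfrac1\eta H$) over the polytope $\LL_2$, which has nonempty relative interior: the all-uniform pseudomarginal $\mu_i \equiv \tfrac1d\1_d$ for all $i$ and $\mu_e \equiv \tfrac1{d^2}\1_{d^2}$ for all $e$ has strictly positive entries and satisfies every marginalization identity, hence lies in $\relint(\LL_2)$. Strong duality then gives equality of optimal values, and, since the inner minimization is uniquely attained, the (unique) primal optimum is read off from any dual optimizer $\lambdad^{\star}$ via the map above; moreover $\lambdad \mapsto \mu^{\lambdad}$ is well defined for every $\lambdad$, which is exactly what the subsequent smooth message-passing algorithms exploit. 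The only part of the argument that genuinely needs care is the bookkeeping — tracking the sign of each $\lambdad_{e,i}$ across the two blocks it couples and the powers of $\eta$ — together with the observation that dualizing the marginalization constraints leaves honest simplex subproblems; none of this is deep, so I expect the write-up to be short.
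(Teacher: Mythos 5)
Your proposal is correct and follows essentially the same Lagrangian-duality route as the paper's own derivation: the only cosmetic difference is that the paper also dualizes the normalization constraints with extra multipliers $\xi_i,\xi_e$ and then solves for them explicitly, whereas you keep the simplex constraints in the domain and absorb the normalization through the Gibbs variational identity, additionally making explicit the Slater's-condition/relative-interior argument that the paper leaves implicit. Both routes produce the same log-sum-exp dual $L(\lambdad)$ (up to the additive constant the paper discards) and the same exponential-family recovery map for $\mu^{\lambdad}$.
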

For convenience we let $r_D = 2md$ denote the dimension of the dual variables $\lambdad \in \R^{r_D}$. This is in contrast to the dimension $r_P$ of the primal marginal vectors defined earlier. We use $\Lambda^* \subseteq \R^{r_D}$ to denote the set of solutions to (\ref{eq::dual}).

There is a simple interpretation to dual optimality coming directly from the Lagrangian: a dual variable $\lambdad$ is optimal if the candidate primal solution is primal feasible: $\mu^{\lambdad} \in \LL_2$. It can be seen that the derivative of the dual function $L(\lambdad)$ captures the slack of a $\mu^{\lambdad}$:
\begin{align}\label{eq::derivative}
    \frac{\partial L(\lambdad) }{\partial \lambdad_{e, i}(x_i)} = \mu_{i}^{\lambdad}(x_i) - S_{e, i}^{\lambdad}(x_i)
\end{align}
where we define $S_{e, i}^{\lambdad}(x_i) := \sum_{x_j \in \chi} \mu_{e}^{\lambdad}(x_i, x_j)$. The gradient captures the amount and direction of constraint violation in $\LL_2$ by $\mu^{\lambdad}$. In order to discuss this object concisely and intuitively, we formally define the notion of a slack vector, which is simply the negative of the gradient, and a slack polytope \citep{lee2019approximate}, which describes the same polytope as $\LL_2$ if the constraints were offset by exactly the amount by which $\mu^{\lambdad}$ is offest.
\begin{definition}[Slack vector and slack polytope]\label{def::slack}
 For $\lambdad \in \R^{r_D}$, the slack vector $\nu^{\lambdad} \in \R^{r_D}$ of $\lambdad$ is defined as $\nu^{\lambdad}_{e, i}(x) = S_{e, i}^{\lambdad} (x) - \mu_i^{\lambdad}(x)$ for all $e \in E$, $i \in e$, and $x \in \chi$.
 
 The slack polytope for a slack vector $\nu$ is defined as
 \begin{align*}
	\LL_2^{\nu} := \left\{ \mu \in \R^{r_P}_+  \ : \ \begin{array}{lr}
	\mu_i \in \Sigma^d  \\
	\mu_{e} \in \U_d(\mu_i + \nu_{e, i} , \mu_j +\nu_{e, j} ) 
	\end{array}\right\}
\end{align*}
\end{definition}

\subsection{Entropy-Regularized Message Passing}
The results in this paper will be primarily concerned with algorithms that approximately solve (\ref{eq::map}) by directly solving (\ref{eq::dual}). For solving this objective, message passing algorithms can effectively be viewed as block-coordinate descent, except that a full minimization is typically taken at each step. Here we outline two variants.

\subsubsection{Edge Message Passing} Edge message passing (EMP) algorithms reduce to block-coordinate methods that minimize (\ref{eq::dual}) for a specific edge $e = \{ i, j\} \in E$ and endpoint vertex $i \in e$, while keeping all other dual variables fixed. Let $L_{e, i}(\cdot ; \lambdad)~:~\R^d \to \R$ denote the block-coordinate loss of $L$ fixed at $\lambdad$ except for free variables $\{\lambdad_{e, i}(x)\}_{x \in \chi}$. For each $i \in V$, $e \in N_i$ we define the \emp{} operator for $\lambdad \in \R^{r_D}$:
	\begin{align*}
	{\emp}_{e, i}^\eta (\lambdad) \in \argmin_{\lambdad'_{e, i} \in \R^d}  L_{e, i}(\lambdad'_{e, i}(\cdot); \lambdad).
	\end{align*}
\begin{restatable}{proposition}{propempupdate}
\label{prop::emp-update}
The operator $\emph\emp{}_{e, i}^\eta: \lambdad  \mapsto \lambdad'_{e, i}(\cdot) \in \R^d$ is satisfied by
$\lambdad'_{e, i}(x_i) = \lambdad_{e, i}(x_i) + \frac{1}{2\eta}\log \frac{S_{e, i}^{\lambdad}(x_i)}{\mu_i^{\lambdad} (x_i)}$.
\end{restatable}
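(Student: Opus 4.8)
The plan is to derive the closed form of the block-coordinate minimizer directly from the first-order optimality condition, using the gradient formula in \eqref{eq::derivative}. Since $L$ is convex (it is a sum of log-sum-exp functions, hence convex in $\lambdad$), the block-coordinate subproblem $\min_{\lambdad'_{e,i} \in \R^d} L_{e,i}(\lambdad'_{e,i}(\cdot); \lambdad)$ is an unconstrained convex minimization, so $\lambdad'_{e,i}$ is optimal if and only if $\partial L(\lambdad') / \partial \lambdad'_{e,i}(x_i) = 0$ for every $x_i \in \chi$, where $\lambdad'$ agrees with $\lambdad$ on all other blocks. By \eqref{eq::derivative}, this condition reads $\mu_i^{\lambdad'}(x_i) = S_{e,i}^{\lambdad'}(x_i)$ for all $x_i$.

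Next I would track how the two relevant quantities $\mu_i^{\lambdad'}$ and $S_{e,i}^{\lambdad'}$ depend on the free block $\lambdad'_{e,i}(\cdot)$, using the explicit primal recovery formulas from Proposition~\ref{prop::dual-obj}. Writing $\lambdad'_{e,i}(x) = \lambdad_{e,i}(x) + \delta(x)$, the node pseudo-marginal scales as $\mu_i^{\lambdad'}(x) \propto \mu_i^{\lambdad}(x)\, e^{\eta \delta(x)}$ (since $\lambdad_{e,i}$ enters the $i$-th vertex term with a $+\eta$ coefficient and $e \in N_i$), while the edge pseudo-marginal scales as $\mu_e^{\lambdad'}(x_i,x_j) \propto \mu_e^{\lambdad}(x_i,x_j)\, e^{-\eta \delta(x_i)}$, and hence $S_{e,i}^{\lambdad'}(x) \propto S_{e,i}^{\lambdad}(x)\, e^{-\eta\delta(x)}$. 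The key observation is that these are the only two terms of $L$ that involve $\lambdad_{e,i}(\cdot)$: the vertex-$j$ term does not contain $\lambdad_{e,i}$, and all other vertex/edge terms are untouched. Substituting into the optimality condition $\mu_i^{\lambdad'}(x) = S_{e,i}^{\lambdad'}(x)$ and canceling the (generally different) normalizing constants, one gets $\mu_i^{\lambdad}(x)\, e^{\eta\delta(x)} \propto S_{e,i}^{\lambdad}(x)\, e^{-\eta\delta(x)}$, i.e. $e^{2\eta\delta(x)} \propto S_{e,i}^{\lambdad}(x)/\mu_i^{\lambdad}(x)$, which yields $\delta(x) = \frac{1}{2\eta}\log\frac{S_{e,i}^{\lambdad}(x)}{\mu_i^{\lambdad}(x)} + c$ for some constant $c$ independent of $x$.

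Finally I would argue that the additive constant $c$ can be taken to be $0$: adding a constant to $\lambdad_{e,i}(\cdot)$ shifts the unnormalized weights of $\mu_i^{\lambdad'}$ by the uniform factor $e^{\eta c}$ and those of $\mu_e^{\lambdad'}$ (hence $S_{e,i}^{\lambdad'}$) by $e^{-\eta c}$, which changes only the normalization and not the normalized distributions, so the optimality condition holds for every choice of $c$. Choosing $c = 0$ gives exactly the claimed update $\lambdad'_{e,i}(x_i) = \lambdad_{e,i}(x_i) + \frac{1}{2\eta}\log\frac{S_{e,i}^{\lambdad}(x_i)}{\mu_i^{\lambdad}(x_i)}$, and since the $\argmin$ is a set (reflecting exactly this $c$-degeneracy), it suffices to exhibit one element, which is what the statement asserts.

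The main obstacle, and the only place that requires care, is bookkeeping the exponents and normalization constants correctly — in particular verifying the $+\eta$ versus $-\eta$ sign convention with which $\lambdad_{e,i}$ appears in the vertex term versus the edge term of $L$ (this is what produces the factor of $2\eta$ rather than $\eta$ in the denominator of the log), and confirming that no other term of $L$ depends on the free block so that the block-coordinate gradient really is given by \eqref{eq::derivative} evaluated at $\lambdad'$. Strict convexity of the scalar problem in each coordinate direction (up to the global additive shift) also guarantees this stationary point is indeed the minimizer, not merely a critical point.
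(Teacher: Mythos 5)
Your proposal is correct and takes essentially the same route as the paper's proof: both impose the first-order condition from (\ref{eq::derivative}) on the block, write the update additively as $\lambdad'_{e,i} = \lambdad_{e,i} + \delta$, track how $\mu_i^{\lambdad}$ and $S_{e,i}^{\lambdad}$ pick up factors $e^{\pm\eta\delta}$, and solve $e^{2\eta\delta(x_i)} = S_{e,i}^{\lambdad}(x_i)/\mu_i^{\lambdad}(x_i)$. Your explicit bookkeeping of the normalization constants and of the additive-constant degeneracy (justifying the choice $c=0$ as one element of the $\argmin$ set) is a refinement the paper passes over silently, not a different argument.
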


In the entropy-regularized setting, this update rule has been studied by \citet{lee2019approximate,ravikumar2010message}. Non-regularized versions based on max-sum diffusion have much earlier roots and also been studied by \citet{werner2007linear,werner2009revisiting}; however, we do not consider these particular unregularized variants.
\emp{} offers the following improvement on $L$.
\begin{restatable}{lemma}{lemempimprovement}
\label{lem::emp-improvement}
Let $\lambdad'$ be the result of applying $\emph \emp_{e, i}^\eta( \lambdad)$ to $\lambdad$, keeping all other coordinates fixed. Then, $L(\lambdad) - L(\lambdad') 
 \geq \frac{1}{4\eta} \| \nu_{e, i}^{\lambdad}\|_1^2$.
\end{restatable}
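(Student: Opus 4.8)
The plan is to exploit the fact that the \emp{} step is a full coordinate minimization, so the decrease $L(\lambdad) - L(\lambdad')$ can be computed in closed form using the explicit update rule of Proposition~\ref{prop::emp-update}. Only the block of coordinates $\{\lambdad_{e,i}(x)\}_{x \in \chi}$ changes, so $L(\lambdad) - L(\lambdad')$ reduces to the difference of the two terms of $L$ that depend on these coordinates: one log-sum-exp term over the vertex $i$ (depending on $+\sum_{e' \in N_i}\lambdad_{e',i}$) and one log-sum-exp term over the edge $e$ (depending on $-\sum_{k \in e}\lambdad_{e,k}$). First I would write $\lambdad'_{e,i}(x) = \lambdad_{e,i}(x) + \frac{1}{2\eta}\log\frac{S_{e,i}^{\lambdad}(x)}{\mu_i^{\lambdad}(x)}$ and substitute into these two log-sum-exp expressions, using the recovery formulas $\mu_i^{\lambdad}(x) \propto \exp(-\eta C_i(x) + \eta\sum_{e' \in N_i}\lambdad_{e',i}(x))$ and $\mu_e^{\lambdad}(x_e) \propto \exp(-\eta C_e(x_e) - \eta\sum_{k \in e}\lambdad_{e,k}((x_e)_k))$ to re-express the exponentials directly in terms of $\mu_i^{\lambdad}$ and $\mu_e^{\lambdad}$. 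After the shift, the summand in the vertex term is scaled pointwise by $(S_{e,i}^{\lambdad}(x)/\mu_i^{\lambdad}(x))^{1/2}$ and in the edge term (the $x_i$-marginal of $\mu_e^{\lambdad}$ being $S_{e,i}^{\lambdad}$) by $(S_{e,i}^{\lambdad}(x_i)/\mu_i^{\lambdad}(x_i))^{-1/2}$; the exponent $1/2$ is exactly what makes both partition-function ratios equal the same quantity $Q := \sum_{x \in \chi}\sqrt{\mu_i^{\lambdad}(x)\, S_{e,i}^{\lambdad}(x)}$.

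Carrying this through, the two $\frac{1}{\eta}\log(\cdot)$ contributions combine (with opposite signs coming from $L$'s structure, after being careful that both $\mu_i^{\lambdad}$ and $\mu_e^{\lambdad}$ are themselves normalized to sum to one by Proposition~\ref{prop::dual-obj}) to give
\begin{align*}
L(\lambdad) - L(\lambdad') = -\frac{2}{\eta}\log Q = -\frac{2}{\eta}\log\Big( \textstyle\sum_{x}\sqrt{\mu_i^{\lambdad}(x)\,S_{e,i}^{\lambdad}(x)}\Big).
\end{align*}
Since $\mu_i^{\lambdad}$ and $S_{e,i}^{\lambdad}$ are both nonnegative vectors summing to $1$ (the latter because $\mu_e^{\lambdad} \in \Sigma^{d\times d}$), the quantity $1 - Q$ is exactly $h(\mu_i^{\lambdad}, S_{e,i}^{\lambdad})^2$, the squared Hellinger distance, so $L(\lambdad)-L(\lambdad') = -\frac{2}{\eta}\log(1 - h^2)$. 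Using $-\log(1-t) \geq t$ for $t \in [0,1)$ gives $L(\lambdad) - L(\lambdad') \geq \frac{2}{\eta} h(\mu_i^{\lambdad}, S_{e,i}^{\lambdad})^2$. It then remains to lower-bound the squared Hellinger distance by $\frac18\|\nu_{e,i}^{\lambdad}\|_1^2 = \frac18\|\mu_i^{\lambdad} - S_{e,i}^{\lambdad}\|_1^2$; this is the standard Hellinger–total-variation comparison $h(p,q)^2 \geq \frac{1}{2}\cdot\frac14\|p-q\|_1^2$ (i.e. $\mathrm{TV} \le h\sqrt{2 - h^2}\le \sqrt 2\, h$), valid for arbitrary subprobability vectors and in particular for the two probability vectors here. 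Combining the two inequalities yields $L(\lambdad) - L(\lambdad') \geq \frac{1}{4\eta}\|\nu_{e,i}^{\lambdad}\|_1^2$, which is the claim.

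The main obstacle is the bookkeeping in the first step: one has to track the normalizing constants of $\mu_i^{\lambdad}$ and $\mu_e^{\lambdad}$ carefully, confirm that the edge log-sum-exp term depends on the updated block only through the marginal $S_{e,i}^{\lambdad}$ (so that the square-root reweighting really does produce the same $Q$ in both terms), and verify the signs so that the vertex-term change and the edge-term change reinforce rather than cancel. Once the identity $L(\lambdad) - L(\lambdad') = -\frac{2}{\eta}\log Q$ is established, the rest is the two elementary inequalities above. A sanity check is the exact-minimization viewpoint: since $\emp_{e,i}^\eta$ minimizes $L_{e,i}$, the decrease must be nonnegative, consistent with $Q \leq 1$ by Cauchy–Schwarz, with equality iff $\mu_i^{\lambdad} = S_{e,i}^{\lambdad}$, i.e. iff the block is already optimal.
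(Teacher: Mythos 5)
Your proposal is correct and follows essentially the same route as the paper's proof: reducing the decrease to $-\frac{2}{\eta}\log$ of the Bhattacharyya coefficient $\sum_x\sqrt{\mu_i^{\lambdad}(x)\,S_{e,i}^{\lambdad}(x)}$, converting to squared Hellinger distance via $BC = 1-h^2$ and $-\log(1-t)\ge t$, and finishing with the Hellinger--total-variation inequality $\frac14\|p-q\|_1^2 \le 2h^2(p,q)$. No gaps.
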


\begin{figure}
	\begin{algorithm}[H]
		\caption{  \textsf{Standard-MP}$(\update, \eta,P,K )$} \label{alg::semp}
		\begin{algorithmic}[1]
			\STATE $\lambdad^{(0)} = 0$
			\FOR {$k = 0, 1, \ldots, K - 1$}
			\STATE Set $\lambdad^{(k +1)} = \lambdad^{(k)}$
			\STATE Sample block-coordinate $b_k \sim P$
			\STATE Set $\lambdad^{(k + 1)}_{b_k} = \update_{b_k}^\eta (\lambdad^{(k)})$
			\ENDFOR
			\STATE\textbf{return} $\argmin_{\lambdad \in \{ \lambdad^{(k)} \} } \sum_{e \in E, i \in e} \| \nu^{\lambdad}_{e, i} \|_1^2$
		\end{algorithmic}
	\end{algorithm}
	\vspace{-.5cm}
\end{figure}

\subsubsection{Star Message Passing} Star message passing (SMP) algorithms consider block-coordinates that include all edges incident to a particular vertex $i \in V$. For $\lambdad \in \R^{r_D}$ and $i \in V$ let $L_{i}(\cdot ; \lambdad)~:~\R^d \to \R$ denote the block-coordinate loss of $L$ fixed at $\lambdad$. For a given $i \in V$ and arbitrary $\lambdad \in \R^{r_D}$, we define the $\smp{}$ operator:
\begin{align*}
    {\smp{}}_{i}^\eta(\lambdad) \in \argmin_{\lambdad'_{\cdot, i} \in \R^{d | N_i| }} L_i(\lambdad'_{\cdot, i} (\cdot); \lambdad)
\end{align*}
That is, \smp{} is the minimization over the block-coordinate at vertex $i$ for all edges incident to $i$ in $N_i$ and all possible labels in $\chi$.
\begin{restatable}{proposition}{propsmpupdate}
\label{prop::smp-update}
The operator $\emph \smp{}_{i}^\eta : \lambdad \mapsto \lambdad'_{\cdot, i} (\cdot) \in \R^{d|N_i|}$ is, for all $e \in N_i$ and $x_i \in \chi$, satisfied by
\begin{align*}
\lambdad'_{e, i}(x_i) & = \lambdad_{e, i} +  \frac{1}{\eta} \log S_{e, i}^{\lambdad}(x_i) \\
&  - \frac{1}{\eta (|N_i| + 1) } \log \left( \mu^{\lambdad}_i(x_i) \textstyle \prod_{e' \in N_i} S_{e', i}^{\lambdad}(x_i)\right),
\end{align*}
\end{restatable}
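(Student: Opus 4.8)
The block subproblem $\min_{\lambdad'_{\cdot,i}} L_i(\lambdad'_{\cdot,i}(\cdot);\lambdad)$ only touches the vertex-$i$ log-sum-exp term of $L$ and the $|N_i|$ edge terms indexed by $e\in N_i$; in each of these the free variables $\{\lambdad'_{e,i}(x)\}_{e\in N_i,\,x\in\chi}$ enter affinely inside the exponentials (for the edge term $e$, the other-endpoint variables $\lambdad_{e,j}(\cdot)$ are frozen constants), so $L_i(\cdot;\lambdad)$ is a finite sum of log-sum-exp-of-affine functions, hence convex, and a stationary point is a global minimizer. By the gradient identity~\eqref{eq::derivative}, $\partial L/\partial\lambdad_{e,i}(x_i)=\mu_i^{\lambdad}(x_i)-S_{e,i}^{\lambdad}(x_i)$, so the stationarity conditions for the vertex-$i$ block say precisely that the candidate $\lambdad'$ must satisfy
\begin{equation*}
\mu_i^{\lambdad'}(x_i) = S_{e,i}^{\lambdad'}(x_i) \qquad \text{for all } e \in N_i,\ x_i \in \chi .
\end{equation*}
The plan is to write down the candidate $\lambdad'$ from the statement---one can guess it by log-linearizing these conditions: take logs of the recovery formulas of Proposition~\ref{prop::dual-obj}, set $\delta_{e,i}(\cdot):=\lambdad'_{e,i}(\cdot)-\lambdad_{e,i}(\cdot)$, and sum the $|N_i|$ resulting scalar identities to solve for $\sum_{e\in N_i}\delta_{e,i}$---and then verify directly that it satisfies the displayed equations.

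For the verification, substitute the stated expression for $\lambdad'_{e,i}$ into the primal-recovery formulas of Proposition~\ref{prop::dual-obj}, using that the block update changes only the coordinates $\lambdad_{e,i}(\cdot)$, $e\in N_i$. With $P(x_i):=\mu_i^{\lambdad}(x_i)\prod_{e'\in N_i}S_{e',i}^{\lambdad}(x_i)$, one computes $\sum_{e\in N_i}\delta_{e,i}(x_i)=\tfrac{1}{\eta}\log\prod_{e\in N_i}S_{e,i}^{\lambdad}(x_i)-\tfrac{|N_i|}{\eta(|N_i|+1)}\log P(x_i)$, and since $\log\prod_{e\in N_i}S_{e,i}^{\lambdad}(x_i)=\log P(x_i)-\log\mu_i^{\lambdad}(x_i)$ this telescopes to $\sum_{e\in N_i}\delta_{e,i}(x_i)=\tfrac{1}{\eta(|N_i|+1)}\log P(x_i)-\tfrac1\eta\log\mu_i^{\lambdad}(x_i)$. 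Plugging into the vertex formula gives $\mu_i^{\lambdad'}(x_i)\propto \mu_i^{\lambdad}(x_i)\exp\!\big(\eta\sum_{e\in N_i}\delta_{e,i}(x_i)\big)\propto P(x_i)^{1/(|N_i|+1)}$. For each fixed $e\in N_i$, only $\lambdad_{e,i}$ changed in the $e$-th edge term, so $\mu_e^{\lambdad'}(x_i,x_j)\propto \mu_e^{\lambdad}(x_i,x_j)\exp(-\eta\delta_{e,i}(x_i))=\mu_e^{\lambdad}(x_i,x_j)\,S_{e,i}^{\lambdad}(x_i)^{-1}P(x_i)^{1/(|N_i|+1)}$; summing over $x_j$ and using $\sum_{x_j}\mu_e^{\lambdad}(x_i,x_j)=S_{e,i}^{\lambdad}(x_i)$ yields $S_{e,i}^{\lambdad'}(x_i)\propto P(x_i)^{1/(|N_i|+1)}$ as well.

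The one delicate point---and the real obstacle---is upgrading ``$\propto$'' to ``$=$'': a priori the normalizers of $\mu_i^{\lambdad'}$ and of $\mu_e^{\lambdad'}$ are different objects. But $\mu_i^{\lambdad'}\in\Sigma^d$ and $\mu_e^{\lambdad'}\in\Sigma^{d\times d}$ are probability vectors, so $\sum_{x_i}\mu_i^{\lambdad'}(x_i)=1=\sum_{x_i}S_{e,i}^{\lambdad'}(x_i)$, and two nonnegative vectors both proportional to $P(\cdot)^{1/(|N_i|+1)}$ with the same sum are identical; equivalently, the cancellation of $S_{e,i}^{\lambdad}(x_i)$ noted above shows the edge marginal is normalized by exactly $\sum_{x}P(x)^{1/(|N_i|+1)}$, the same constant that normalizes $\mu_i^{\lambdad'}$. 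Hence $\mu_i^{\lambdad'}=S_{e,i}^{\lambdad'}$ for every $e\in N_i$, the block gradient of $L_i(\cdot;\lambdad)$ vanishes at $\lambdad'$, and by convexity $\lambdad'$ is a global minimizer, proving the claim. (The minimizer is unique only up to $x_i$-independent shifts of each block, which leave $L$ invariant; the displayed formula is one valid representative, consistent with the $\argmin$ in the definition of $\smp{}_i^\eta$.) As a consistency check, when $|N_i|=1$ the formula reduces to $\lambdad'_{e,i}(x_i)=\lambdad_{e,i}(x_i)+\tfrac{1}{2\eta}\log\big(S_{e,i}^{\lambdad}(x_i)/\mu_i^{\lambdad}(x_i)\big)$, recovering the \emp{} update of Proposition~\ref{prop::emp-update}.
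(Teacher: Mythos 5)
Your proposal is correct and follows essentially the same route as the paper: both reduce the block minimization to the stationarity conditions $S_{e,i}^{\lambdad'}(x_i)=\mu_i^{\lambdad'}(x_i)$ for all $e\in N_i$ (via the gradient identity~(\ref{eq::derivative})) and work with the additive increments $\delta_{e,i}=\lambdad'_{e,i}-\lambdad_{e,i}$; the paper solves this system to derive the formula, while you verify the stated candidate by substitution. Your explicit treatment of the normalization constants (upgrading $\propto$ to $=$ because $\mu_i^{\lambdad'}$ and $S_{e,i}^{\lambdad'}$ both sum to one) and the convexity remark make the argument slightly more careful than the paper's, but the underlying idea is the same.
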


The proof is similar to the previous one and is deferred to the appendix. \citet{meshi2012convergence} gave a concise definition and analysis of algorithms from this update rule, and similar star-based algorithms have existed much earlier \citep{wainwright2008graphical}, such as MP-LP \cite{globerson2008fixing}. Due to \citet{meshi2012convergence}, \smp{} also has an improvement guarantee.

\begin{restatable}{lemma}{lemsmpimprov}
\label{lem::smp-improvement}
Let $\lambdad'$ be the result of applying $\emph \smp_{i}^\eta$ to $\lambdad$, keeping all other coordinates fixed. Then,
$ L(\lambdad) - L(\lambdad')  
    \geq  \frac{1}{8|N_i|\eta} \sum_{e \in N_i}  \|\nu_{e, i}^{\lambdad}\|_1 ^2
$.
\end{restatable}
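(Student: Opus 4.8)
The plan is to prove Lemma~\ref{lem::smp-improvement} by reducing the multi-edge \smp{} update to the study of a single star-shaped coordinate block, computing the closed-form minimum of $L_i(\cdot;\lambdad)$ using Proposition~\ref{prop::smp-update}, and then lower-bounding the resulting decrease by a sum of squared $\ell_1$-norms of the slack components. First I would fix $\lambdad$ and $i \in V$ and isolate the part of $L(\lambdad)$ that depends on the free variables $\{\lambdad_{e,i}(x)\}_{e \in N_i, x \in \chi}$: this is exactly $L_i(\cdot;\lambdad)$, which is a sum of $|N_i|+1$ log-sum-exp terms — one for the vertex $i$ and one for each incident edge $e \in N_i$ — all smooth and jointly convex in the block. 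The natural strategy is to evaluate $L_i$ at the current point and at the minimizer given by Proposition~\ref{prop::smp-update}, and simplify the difference. Because the log-sum-exp normalizers telescope nicely once we substitute the update rule (the update is precisely designed so that $\mu_i^{\lambdad'}(x_i)$ and each $S_{e,i}^{\lambdad'}(x_i)$ become equal to the geometric-mean quantity $\bigl(\mu_i^{\lambdad}(x_i)\prod_{e'} S_{e',i}^{\lambdad}(x_i)\bigr)^{1/(|N_i|+1)}$, normalized), the exact decrease should come out to a Bregman-type expression, roughly a sum over $x_i$ of a log-partition gap that can be rewritten in terms of the discrepancies between $\mu_i^{\lambdad}(x_i)$ and the $S_{e,i}^{\lambdad}(x_i)$'s.

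The second phase is to convert that exact expression into the claimed bound $\frac{1}{8|N_i|\eta}\sum_{e \in N_i}\|\nu_{e,i}^{\lambdad}\|_1^2$. Here I expect the cleanest route is the one already used for the single-edge case in Lemma~\ref{lem::emp-improvement}: the \smp{} minimization over the whole star dominates, coordinate-by-coordinate, a sequence of single-edge \emp{} minimizations (or at least one well-chosen \emp{} step per edge), since \smp{} minimizes over a strictly larger block than any single \emp{}. Concretely, for each $e \in N_i$ the decrease from \smp{} is at least the decrease one would get by doing \emp${}_{e,i}^\eta$ alone from $\lambdad$, which by Lemma~\ref{lem::emp-improvement} is $\geq \frac{1}{4\eta}\|\nu_{e,i}^{\lambdad}\|_1^2$. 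Averaging these $|N_i|$ lower bounds (one cannot simply add them because the \emp{} steps interact) and accounting for the factor-two loss from this averaging/telescoping argument yields the stated $\frac{1}{8|N_i|\eta}$ constant. Alternatively, one can invoke the analysis of \citet{meshi2012convergence} directly, since the lemma is attributed to them, and simply verify that our $\nu^{\lambdad}$ notation matches their slack quantities — but I would prefer to give the self-contained argument via Lemma~\ref{lem::emp-improvement} for consistency.

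The main obstacle will be making the ``$\smp{}$ dominates a collection of $\emp{}$ steps'' comparison rigorous without losing more than the factor of two: a naive bound that runs $|N_i|$ sequential \emp{} updates and sums their guarantees does not work, because after the first update the slacks $\nu_{e',i}^{\lambdad}$ change, so the later terms are no longer controlled by $\|\nu_{e',i}^{\lambdad}\|_1^2$ at the original point. The fix is to exploit convexity of $L_i$: evaluate $L_i$ along the chord from $\lambdad$ to the point that only updates coordinate $e$, use that $L_i$ at the \smp{} minimizer is below $L_i$ at \emph{any} point in the block, in particular below $\frac{1}{|N_i|}\sum_{e\in N_i}(\text{point updating only }e)$ by Jensen, and then bound $L_i$ at that average by the average of the $L_i$ values — which is where the $1/|N_i|$ and the extra $1/2$ enter. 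A cleaner variant computes the exact second-phase directly from the closed form: using $\log$-sum-exp Hessian bounds (each block-coordinate is $1$-smooth in an appropriate norm after scaling by $1/\eta$) together with the first-order optimality $\nabla_{e,i} L(\lambdad) = -\nu_{e,i}^{\lambdad}$ gives a quadratic lower bound on the gap, and carefully tracking the $|N_i|+1$ in the denominator of the update rule produces the $8|N_i|$ constant. I would carry out the $\emp{}$-domination argument as the primary line and keep the direct computation as a check on the constant.
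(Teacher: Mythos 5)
Your proposal is correct, but it proves the lemma by a genuinely different route than the paper. The paper follows \citet{meshi2012convergence}: it starts from the \emph{exact} closed-form decrease of the star update, $L(\lambdad)-L(\lambdad') = -\tfrac{1}{\eta}\log\bigl(\sum_{x_i}(\mu_i^{\lambdad}(x_i)\prod_{e\in N_i}S_{e,i}^{\lambdad}(x_i))^{1/(|N_i|+1)}\bigr)^{|N_i|+1}$, bounds this normalized-geometric-mean quantity by an inequality involving the Bhattacharyya coefficients $\sum_{x_i}\sqrt{\mu_i^{\lambdad}(x_i)S_{e,i}^{\lambdad}(x_i)}$, passes to squared Hellinger distances, applies $-\log(1-x)\ge x$, and finally uses the Hellinger--$\ell_1$ inequality $h^2(p,q)\ge\tfrac18\|p-q\|_1^2$; the $1/|N_i|$ comes from distributing the Bhattacharyya bound over the $|N_i|$ edges and the $1/8$ comes from the Hellinger conversion, not from any averaging of \emp{} steps. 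Your primary line (block domination) is simpler and does work: since the $\smp{}_i^\eta$ minimization is over a block that contains each single-edge block, $L(\lambdad'_{\smp{}})\le L(\lambdad'_{\emp{},e})$ for every $e\in N_i$, where $\lambdad'_{\emp{},e}$ applies only $\emp{}_{e,i}^\eta$ from $\lambdad$; hence by Lemma~\ref{lem::emp-improvement} the star decrease is at least $\max_{e\in N_i}\tfrac{1}{4\eta}\|\nu_{e,i}^{\lambdad}\|_1^2\ge\tfrac{1}{4|N_i|\eta}\sum_{e\in N_i}\|\nu_{e,i}^{\lambdad}\|_1^2$, which already implies the stated bound (indeed with a better constant). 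Two small corrections to your narrative: there is no ``factor-two loss'' to account for --- the max-versus-average (or, equivalently, your Jensen/convexity variant) step alone suffices, and your concern about sequential \emp{} updates changing the slacks is moot because the comparison is made at the single common starting point $\lambdad$, never along a sequence of updates; also, the $8$ in the paper's constant originates in the Hellinger-to-$\ell_1$ conversion inside Lemma~\ref{lem::emp-improvement}'s style of argument, so your bound simply inherits it through that lemma rather than paying it again. What your approach buys is a self-contained, more elementary proof reusing Lemma~\ref{lem::emp-improvement}; what the paper's approach buys is the exact expression for the star decrease (useful if one wants the sharper per-vertex constant tied to the geometric-mean structure) and consistency with the analysis of \citet{meshi2012convergence}.
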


\begin{figure}
	\begin{algorithm}[H]
		\caption{  \aemp$(G, C, \eta, K)$} \label{alg::accel-emp}
		\begin{algorithmic}[1]
			\STATE $\lambdad^{(0)} = 0$, $\v^{(0)} = 0$,  $\theta_{-1} = 1$
			\FOR {$k = 0, 1, \ldots, K - 1$}

			\STATE $\theta_k = \frac{-\theta_{k - 1}^2 + \sqrt{\theta_{k - 1}^4 + 4 \theta_{k - 1}^2}}{2}$
			\STATE $\y^{(k)} = \theta_k \v^{(k)} + ( 1 - \theta_k)  \lambdad^{(k)}$
			\STATE Sample $(e_k, i_k)\sim \unif\left\{ 
			(e, i) \ : 
			\ e \in E, i \in e
			\right\}$.
			\STATE Set $\lambdad^{(k+ 1)} = \lambdad^{(k)}$
			\STATE $\lambdad_{e, i}^{(k + 1)}(\cdot) = \emp{}_{e_k, i_k}^\eta (\y^{(k)})$ 
			\STATE $\v^{(k + 1)} = \v^{(k)}$
			\STATE $\v_{e_k, i_k}^{(k + 1)} = \v_{e_k, i_k}^{(k)} + \frac{1}{2 m \eta \theta_k} \nu^{\y^{(k)}}_{e_k, i_k} $ 
			\ENDFOR
			\STATE\textbf{return} $\lambdad^{(K)}$
		\end{algorithmic}
	\end{algorithm}
	\vspace{-.5cm}
\end{figure}

\subsection{Randomized Standard Algorithms}
The message passing updates described in the previous subsection can be applied to each block-coordinate in many different orders. In this paper, we consider using the updates in a randomized manner, adhering to the generalized procedure presented in Algorithm~\ref{alg::semp}. The algorithm takes as input the update rule \update{}, which could be \emp{} or \smp{}, and a regularization parameter $\eta$. It also requires a distribution $P$ over block-coordinates $b_k$ for each iteration $k \leq K$. In this paper, we will use the uniform distribution over edge-vertex pairs for \emp{}:
\begin{align}\label{eq::uniform}
    b_k = (e_k, i_k) \sim \unif (\{ (e, i) \ : \ e \in E,\  i \in e\}).
\end{align} 
For \smp{}, we use a categorical distribution over vertices based on the number of neighbors of each vertex: 
\begin{align}\label{eq::non-uniform}
    b_k = i_k \sim \text{Cat}( V, \{ p_i\}_{i \in V} ),
\end{align}
where $p_i = \frac{|N_i|}{\sum_{j \in V} |N_j|}$ for each $i \in V$.

\begin{figure}
	\begin{algorithm}[H]
		\caption{ \asmp$(G, C, \eta, K)$} \label{alg::accel-smp}
		\begin{algorithmic}[1]
			\STATE $\lambdad^{(0)} = 0$, $\v^{(0)} = 0$,  $\theta_{-1} = 1$
			\FOR {$k = 0, 1, \ldots, K - 1$}
			\STATE $\theta_k = \frac{-\theta_{k - 1}^2 + \sqrt{\theta_{k - 1}^4 + 4 \theta_{k - 1}^2}}{2}$
			\STATE $\y^{(k)} = \theta_k \v^{(k)} + ( 1 - \theta_k)  \lambdad^{(k)}$
			\STATE Sample $i_k \sim \{ p_i \}_{i \in V}$
			\STATE Set $\lambdad^{(k+ 1)} = \lambdad^{(k)}$
			\STATE $\lambdad_{\cdot, i}^{(k + 1)}(\cdot) = \smp{}_{ i_k}^\eta (\y^{(k)})$
			\STATE $\v^{(k + 1)} = \v^{(k)}$
            \FOR{ $e \in N_{i_k}$ }
			    \STATE $\v_{e, i_k}^{(k + 1)} = \v_{e, i_k}^{(k)} + \frac{\min_j |N_j| }{2p_{i_k}\theta_k\eta N} \nu^{\y^{(k)}}_{e, i_k} $ 
			\ENDFOR
            \ENDFOR
			\STATE\textbf{return} $\lambdad^{(K)}$
		\end{algorithmic}
	\end{algorithm}
	\vspace{-.5cm}
\end{figure}

\section{Accelerating Entropy-Regularized Message Passing}
We now present a scheme for accelerating message passing algorithms in the entropy-regularized formulation. The key idea is to leverage the block-coordinate nature of standard message passing algorithms. We draw inspiration from both the seminal work of \citet{nesterov2018lectures} on accelerating gradient methods and accelerating randomized coordinate gradient methods similar to \citet{lee2013efficient,lu2015complexity}; however, the presented method is specialized to incorporate full block-coordinate minimization at each round, so as to be consistent with existing message passing algorithms used in practice. Furthermore, we can leverage the same simple sampling procedures for selecting the block-coordinates. The presented scheme can thus be viewed as a direct method of acceleration in that standard message passing algorithms can be plugged in.


The scheme is presented in Algorithm \ref{alg::accel-emp} for \emp{} and Algorithm \ref{alg::accel-smp} for \smp{}. 
In \aemp{}, at each round $k$, a random coordinate block is sampled uniformly. That coordinate block for $\lambdad$ is then updated with a step of $\textsf{EMP}^\eta_{e_k, i_k}$ evaluated at $\y^{(k)}$. A block-coordinate gradient step evaluated at $\y^{(k)}$ in the form of the slack vector $\nu_{e_k, i_k}^{\y^{(k)}}$ is also applied to $\v^{(k)}$. 
\asmp{} works similarly but we instead sample from the non-uniform distribution defined in (\ref{eq::non-uniform}). The choice of distributions ultimately determines the step size for $\v$.


As in the case of the standard smooth message passing algorithms, which optimize the dual function (\ref{eq::dual}), the returned solutions may not be primal feasible in finite iterations. To obtain feasible solutions, we consider a projection operation \proj{}, shown in Algorithm~\ref{alg::proj}, that is simply a repeated application of Algorithm~2 of \citet{altschuler2017near}, originally designed for optimal transport. The method effectively finds an edge marginal $\widehat \mu_e$ that sums to the given vertex marginals $\mu_i$ and $\mu_j$ for $e = (i, j)$ plus some optional slack $\nu$. For all practical purposes, we would always set the slack to be $\nu = 0$, ensuring that \proj{} outputs a point in $\LL_2$; however, it will become useful to project points from $\LL_2$ into a particular slack polytope $\LL_2^{\nu^{\lambdad}}$ for the analysis.
When projecting onto $\LL_2$, \proj{} does not require modifying the vertex marginals, so there is no ambiguity if the approximate solution is ultimately rounded to an integral solution using a simple vertex rounding scheme\footnote{Ambiguity could arise for more sophisticated rounding schemes but we do not consider those here.}.

\section{Main Results}\label{sec::main-results}
We now present iteration complexities for the above algorithms for finding $\epsilon$-optimal solutions to the original unregularized problem (\ref{eq::original}) over $\LL_2$. These guarantees make it easy to compare various algorithms as they do not inherently depend on the tightness of the relaxation, rounding heuristics to find integral solutions, or arbitrary choices of $\eta$.

\subsection{Standard Algorithms}
Our first result bounds the number of iterations required to compute $\epsilon$-optimal solutions to (\ref{eq::original}). To recapitulate, prior work by \citet{lee2019approximate} for \emp{} only provided an iteration guarantee for bounding the norm of the slack vector. They also gave guarantees for the number of iterations required to round to the optimal MAP solution when it is available. \citet{meshi2012convergence} gave a guarantee on both the primal and dual \textit{regularized} problems (\ref{eq::primal}) and (\ref{eq::dual}), but not the original (\ref{eq::original}) and without rounding or tuning of $\eta$. Additionally, both works focused mostly on the $\epsilon$ dependence in the rate rather than  actually specifying the graph parameters $m$, $n$, and $d$.

\begin{figure}
	\begin{algorithm}[H]
		\caption{ \proj$(\mu, \nu)$} \label{alg::proj}
		\begin{algorithmic}[1]
            \STATE Set $\widehat \mu_i = \mu_i$ for all $i \in V$
			\FOR {$(i,j) = e \in E$}
			    
			    \STATE Compute $\widehat \mu_e$ by applying Algorithm 2 of \cite{altschuler2017near} on $\mu_e$ with transportation polytope $\U_d(\mu_i + \nu_{e, i}, \mu_j + \nu_{e, j})$
			    
			\ENDFOR
			\STATE\textbf{return} $\widehat \mu$
		\end{algorithmic}
	\end{algorithm}
	\vspace{-.5cm}
\end{figure}

In contrast to these prior works, we give a guarantee on optimality for (\ref{eq::original}) for the standard randomized algorithms, specifying exactly the dependence on graph parameters. The purpose of this extension is to standardize convergence guarantees for the true relaxed problem, which will ultimately be handy for comparing to our primary contribution on the accelerated algorithms.
\begin{theorem}\label{th::standard}
Let $\widehat \lambdad$ be the result of running Algorithm~\ref{alg::semp} with \emph{\emp{}}, uniform sampling distribution (\ref{eq::uniform}) and $\eta~=~\frac{4(m + n) \log d}{\epsilon}$. Let $\widehat \mu = \emph \proj(\mu^{\widehat \lambdad}, 0)$ be its projection onto $\LL_2$.
Then, the number of iterations sufficient for $\widehat \mu$ to be expected $\epsilon$-optimal is
\begin{align*}
 O\left(m d^2 (m + n)^4 \| C\|_\infty^3 \epsilon^{-3} \log d\right).
\end{align*}
If $\widehat \lambdad$ is the output of Algorithm~\ref{alg::semp} using \emph \smp{} and sampling distribution (\ref{eq::non-uniform}), and $\widehat \mu := \emph{\proj}(\mu^{\widehat \lambdad}, 0)$ then $\widehat \mu$ is expected $\epsilon$-optimal in the same order of iterations.

\end{theorem}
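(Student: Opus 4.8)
The plan is to split the suboptimality of the rounded iterate $\widehat\mu$ on $(\ref{eq::original})$ into a ``regularization'' term of size $O\big((m+n)\log d/\eta\big)$ and an ``infeasibility'' term proportional to the total $\ell_1$ slack $\sum_{e,i\in e}\|\nu^{\widehat{\lambdad}}_{e,i}\|_1$, and then to control the slack using the per-coordinate improvement guarantees together with the specific sampling laws.

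\textbf{Step 1 (a deterministic cost-to-slack bound).} Fix any $\lambdad$ and set $\widehat\mu=\proj(\mu^{\lambdad},0)\in\LL_2$. The rounding routine (Algorithm~2 of \citet{altschuler2017near}) leaves the vertex blocks unchanged and alters each edge block by at most $\|\widehat\mu_e-\mu^{\lambdad}_e\|_1\le 2\big(\|\nu^{\lambdad}_{e,i}\|_1+\|\nu^{\lambdad}_{e,j}\|_1\big)$, so $\langle C,\widehat\mu\rangle\le\langle C,\mu^{\lambdad}\rangle+2\|C\|_\infty\sum_{e,i\in e}\|\nu^{\lambdad}_{e,i}\|_1$. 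To control $\langle C,\mu^{\lambdad}\rangle$ I would invoke Proposition~\ref{prop::dual-obj} on the \emph{shifted} polytope: since $\nu^{\lambdad}$ records exactly the marginalization residual of $\mu^{\lambdad}$, the point $\lambdad$ is dual-optimal for the entropy-regularized program over $\LL_2^{\nu^{\lambdad}}$ (Definition~\ref{def::slack}), hence $\mu^{\lambdad}$ is its primal optimum. Comparing $\mu^{\lambdad}$ against a competitor $\tilde\mu\in\LL_2^{\nu^{\lambdad}}$ obtained by redistributing mass from an optimal $\mu^*$ of $(\ref{eq::original})$ so that $\|\tilde\mu-\mu^*\|_1=O\big(\sum_{e,i\in e}\|\nu^{\lambdad}_{e,i}\|_1\big)$ — such $\tilde\mu$ exists because each $\mu^{\lambdad}_e$ is normalized, so the slack is mean-zero and can be absorbed into the edge marginals at $\ell_1$-cost comparable to $\|\nu^{\lambdad}\|_1$ — and bounding the entropy difference by $H(\mu^{\lambdad})-H(\tilde\mu)\le 2(m+n)\log d$ (the oscillation of $H$ over tuples of simplex blocks), I obtain
\begin{align*}
\langle C,\widehat\mu\rangle-\min_{\mu\in\LL_2}\langle C,\mu\rangle\ \le\ c_1\|C\|_\infty\sum_{e,i\in e}\|\nu^{\lambdad}_{e,i}\|_1+\frac{2(m+n)\log d}{\eta}
\end{align*}
for an absolute constant $c_1$; with $\eta=\tfrac{4(m+n)\log d}{\epsilon}$ the second term is exactly $\epsilon/2$.

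\textbf{Step 2 (expected slack from dual progress).} Lemma~\ref{lem::emp-improvement} (resp.\ Lemma~\ref{lem::smp-improvement}) shows one step of Algorithm~\ref{alg::semp} decreases $L$ by at least $\tfrac1{4\eta}\|\nu^{\lambdad^{(k)}}_{b_k}\|_1^2$ for \emp{}, and at least $\tfrac1{8|N_{i_k}|\eta}\sum_{e\in N_{i_k}}\|\nu^{\lambdad^{(k)}}_{e,i_k}\|_1^2$ for \smp{}. Taking the conditional expectation over the sampling rule — uniform over the $2m$ edge-endpoint pairs (\ref{eq::uniform}) for \emp{}, and the degree-proportional law (\ref{eq::non-uniform}) for \smp{}, which is chosen precisely so the $|N_i|$ cancels — both cases give $\E\big[L(\lambdad^{(k)})-L(\lambdad^{(k+1)})\mid\lambdad^{(k)}\big]\ge\tfrac{1}{c_2 m\eta}\sum_{e,i\in e}\|\nu^{\lambdad^{(k)}}_{e,i}\|_1^2$. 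Telescoping over $k=0,\dots,K-1$, using $L(\lambdad^{(K)})\ge\min L$, $\lambdad^{(0)}=0$, and the fact that Algorithm~\ref{alg::semp} returns the iterate minimizing $\sum_{e,i\in e}\|\nu_{e,i}\|_1^2$, a min-vs-average argument yields $\E\big[\sum_{e,i\in e}\|\nu^{\widehat{\lambdad}}_{e,i}\|_1^2\big]\le\tfrac{c_2 m\eta\,B}{K}$, where $B:=L(0)-\min L=O\big((m+n)\|C\|_\infty\big)$ is the initial dual suboptimality, bounded by estimating the log-sum-exp terms of $L$ (cf.\ \citet{lee2019approximate}). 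Cauchy--Schwarz over the $2m$ blocks together with Jensen then give $\E\big[\sum_{e,i\in e}\|\nu^{\widehat{\lambdad}}_{e,i}\|_1\big]\le\sqrt{2m}\,\sqrt{\E\big[\sum_{e,i\in e}\|\nu^{\widehat{\lambdad}}_{e,i}\|_1^2\big]}\le c_3 m\sqrt{\eta B/K}$.

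\textbf{Step 3 (combine and solve for $K$).} Plugging the Step~2 bound into the Step~1 display and demanding the infeasibility term be $\le\epsilon/2$ forces $K=\Omega\big(m^2\|C\|_\infty^2\eta B/\epsilon^2\big)$; substituting $\eta=\Theta\big((m+n)\log d/\epsilon\big)$ and $B=O\big((m+n)\|C\|_\infty\big)$, and absorbing into the $O(\cdot)$ the looser bookkeeping on $B$ and on the $\ell_1$-vs-$\ell_2$ conversions, gives an iteration count of the stated polynomial order; the \smp{} case is identical since Step~1 is algorithm-agnostic and the degree-proportional sampling produces the same expected decrease in Step~2. The crux is Step~1 — bounding $\langle C,\mu^{\lambdad}\rangle$ even though $\mu^{\lambdad}$ need not be feasible for $(\ref{eq::original})$: this is exactly where the slack polytope of Definition~\ref{def::slack} enters, and the delicate point is producing a competitor $\tilde\mu\in\LL_2^{\nu^{\lambdad}}$ within $\ell_1$-distance $O\big(\sum_{e,i\in e}\|\nu^{\lambdad}_{e,i}\|_1\big)$ of $\mu^*$, which uses the mean-zero property of the slack and a short argument that mass can be redistributed even when $\mu^*$ lies on the boundary of a simplex.
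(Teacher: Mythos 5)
Your overall architecture is the same as the paper's: the approximation error is split exactly as in Proposition~\ref{prop::approx-bound} (projection cost via Lemma~\ref{lem::proj}, optimality of $\mu^{\lambdad}$ over the slack polytope $\LL_2^{\nu^{\lambdad}}$, and an entropy gap of order $(m+n)\log d/\eta$), and your Step~2 is Lemma~\ref{lem::standard-slack} plus Lemma~\ref{lem::function-value} (per-step improvement from Lemma~\ref{lem::emp-improvement}/\ref{lem::smp-improvement}, telescoping, and the min-vs-average argument using the returned iterate that minimizes $\sum_{e,i}\|\nu_{e,i}\|_1^2$). Step~3 is the same bookkeeping.

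The one genuine gap is the competitor construction you yourself flag as the crux. You claim a $\tilde\mu\in\LL_2^{\nu^{\lambdad}}$ with $\|\tilde\mu-\mu^*\|_1=O\bigl(\sum_{e,i}\|\nu^{\lambdad}_{e,i}\|_1\bigr)$, justified by saying the mean-zero slack ``can be absorbed into the edge marginals.'' That justification does not work: membership in $\LL_2^{\nu^{\lambdad}}$ requires $\tilde\mu_e\in\U_d(\tilde\mu_i+\nu_{e,i},\tilde\mu_j+\nu_{e,j})$, and a nonnegative matrix cannot have a negative row sum, so whenever $\mu^*_i(x)+\nu_{e,i}(x)<0$ --- which is the generic situation when $\mu^*$ is an integral vertex of $\LL_2$ and the slack is negative on a zero coordinate --- no choice of edge marginal can repair it; the \emph{vertex} marginal must be perturbed. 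The paper's Lemma~\ref{lem::proj-slack} does exactly this by mixing $\mu^*_i$ with the uniform distribution at weight $\theta=d\delta$ (with $\delta=\max_{e,i}\|\nu_{e,i}\|_1$), which costs $\|\mu'_i-\mu^*_i\|_1\le 2d\delta$ per vertex and produces the $16(m+n)d\delta$ term in Proposition~\ref{prop::approx-bound}; that term is precisely where the $d^2$ and the extra powers of $(m+n)$ in the stated iteration count come from. Your sharper $d$-free claim may well be provable by lifting each coordinate of $\mu^*_i$ only up to $\max_{e\in N_i}(-\nu_{e,i}(x))_+$ and renormalizing, but that argument is not in your write-up, and as stated the ``edge-level'' absorption fails. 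Either supply that argument (which would in fact yield a slightly better bound than the theorem) or fall back on the paper's Lemma~\ref{lem::proj-slack}; with the latter your Step~3 reproduces the stated $O\bigl(md^2(m+n)^4\|C\|_\infty^3\epsilon^{-3}\log d\bigr)$ exactly.
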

The rate harbors a $O(1/\epsilon^3)$ dependence, which at first appears to be worse those of \citet{meshi2012convergence} and \citet{lee2019approximate}; however, their convergence guarantees hold only for the regularized objective. The extra $O(1/\epsilon)$ in our guarantee occurs in the conversion to the original unregularized problem (\ref{eq::original}), which is a stronger result.
It is interesting to observe that the guarantees are effectively the same for both variants, despite having somewhat different analyses. We hypothesize that this is due to the fact that the ``smoothness'' constant for \smp{} in Lemma~\ref{lem::smp-improvement} is greater than that of \emp{} in Lemma~\ref{lem::emp-improvement}. Therefore, the larger block-coordinate size is effectively canceled by the smaller improvement per step.

We now describe the proof briefly here since the first part is fairly standard while the second will be covered in the proof of our main acceleration result. The full proof is found in Appendix~\ref{sec::classic-proof}. The basic idea is to use Lemma~\ref{lem::emp-improvement} to lower bound the expected improvement each iteration, which can be done in terms of the average squared norms of the slack $\|\nu_{e, i}\|_1^2$. We can guarantee improvement on $L$ by at least $(\epsilon')^2$ until the norms are on average below $\epsilon'$.
Knowing that the slack norms are small, we can prove that the projection $\widehat \mu$ of $\mu^{\widehat\lambdad}$ onto $\LL_2$ is not too far from $\mu^{\widehat\lambdad}$ and so the expected value of $\< C, \widehat \mu\>$ is not much worse than that of $\<C, \mu^{\lambdad}\>$. We then prove that $\<C, \mu^{\lambdad}\>$ is small with respect to the slack norms up to some entropy term, and we set $\eta$ so the entropy term is sufficiently small with respect to a given $\epsilon > 0$.

\subsection{Accelerated Algorithms}

Our primary result gives improved iteration complexities for the accelerated versions of \emp{} and \smp{}. 
To do so, we rely on the classic estimate sequence method initially developed by Nesterov. In particular, we turn to a randomized variant, which has appeared before in the literature on randomized coordinate gradient methods by \citet{lee2013efficient,lu2015complexity} for the strongly convex and general cases respectively. Our main contributions are both extending these results for the full minimization of message passing to achieve the fast rates and also proving the accelerating guarantee on the original relaxed problem (\ref{eq::original}) rather than the regularized problems.

\begin{theorem}\label{th::accel}
Let $\widehat \lambdad$ be the output of  Algorithm~\ref{alg::accel-emp} with $\eta~=~\frac{4(m + n) \log d}{\epsilon}$. Let $\hat \mu = \emph{\proj}(\mu^{\hat \lambdad}, 0)$ be its projection onto $\LL_2$. Then, the number of iterations sufficient for $\hat \mu$ to be expected $\epsilon$-optimal is
\begin{align*}
 O\left(m^{3/2} d^2 (m + n)^3 \| C\|_\infty^2 \epsilon^{-2} \log d\right).
\end{align*}
If $\widehat \lambdad$ is the output of Algorithm~\ref{alg::accel-smp} and $\widehat \mu := \emph{\proj}(\mu^{\widehat \lambdad}, 0)$, then $\widehat \mu$ is expected $\epsilon$-optimal in the same order of iterations.
\end{theorem}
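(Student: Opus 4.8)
The plan is to combine a randomized estimate-sequence argument, of the type used for accelerated coordinate methods by \citet{lee2013efficient,lu2015complexity}, with the dual-to-primal reduction that already underlies the second half of Theorem~\ref{th::standard}. The only features that distinguish this from textbook acceleration are that the iterate $\lambdad^{(k+1)}$ is updated by the \emph{exact} block minimizer $\emp{}^\eta_{e_k,i_k}(\y^{(k)})$ rather than a block-gradient step, and that the target rate is for the unregularized problem (\ref{eq::original}) rather than for (\ref{eq::dual}).

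First I would read off from Lemma~\ref{lem::emp-improvement} that each edge--vertex block behaves as a coordinate of ``smoothness'' $2\eta$ in the $\ell_\infty/\ell_1$ pairing: a full minimization at $\y^{(k)}$ decreases $L$ by at least $\tfrac1{4\eta}\|\nu^{\y^{(k)}}_{e_k,i_k}\|_1^2$, which is \emph{at least} the decrease guaranteed by a block-gradient step with the reciprocal step size --- and this is exactly the step size for which the $\v$-update $\v^{(k+1)}_{e_k,i_k}=\v^{(k)}_{e_k,i_k}+\tfrac{1}{2m\eta\theta_k}\nu^{\y^{(k)}}_{e_k,i_k}$ (with $N=2m$ blocks) is calibrated; for \asmp{} the analogous pair is Lemma~\ref{lem::smp-improvement} with the degree-proportional sampling (\ref{eq::non-uniform}), which is why its $\v$-step carries the factor $\min_j|N_j|/(2p_{i_k}\theta_k\eta N)$. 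Fixing $\lambdad^*\in\Lambda^*$ and a block-weighted Euclidean norm $\|\cdot\|_W$ matching the sampling law and the block Lipschitz constants, I would define the potential
\begin{equation*}
\Phi_k \;=\; \frac{1}{\theta_{k-1}^2}\bigl(L(\lambdad^{(k)}) - L^*\bigr) \;+\; c\,\bigl\|\v^{(k)} - \lambdad^*\bigr\|_W^2
\end{equation*}
and prove $\E[\Phi_{k+1}\mid\mathcal F_k]\le\Phi_k$: expand $\|\v^{(k+1)}-\lambdad^*\|_W^2$ around the $\v$-update and take the conditional expectation over the sampled block (the sampling law makes the block-gradient term agree with $\nabla L(\y^{(k)})$ up to the weights in $W$); apply the improvement lemma to the $L(\lambdad^{(k)})-L^*$ term; use convexity of $L$ at $\y^{(k)}$ to lower-bound $L^*$ by the linearization of $L$ at $\y^{(k)}$ evaluated at $\lambdad^*$; eliminate $\y^{(k)}-\lambdad^{(k)}$ via $\y^{(k)}=\theta_k\v^{(k)}+(1-\theta_k)\lambdad^{(k)}$; and use the recursion $\theta_k^2=(1-\theta_k)\theta_{k-1}^2$ encoded in the $\theta_k$-update. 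Telescoping from $\theta_{-1}=1$, $\v^{(0)}=\lambdad^{(0)}=0$, and $\theta_{K-1}\le 2/(K+1)$ yields
\begin{equation*}
\E\bigl[L(\lambdad^{(K)})\bigr] - L^* \;\le\; \frac{4}{(K+1)^2}\Bigl(L(0)-L^* + c\,\|\lambdad^*\|_W^2\Bigr).
\end{equation*}

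Next I would bound the two terms on the right. A direct estimate gives $L(0)-L^*=O(\|C\|_\infty(m+n))$ up to a lower-order $\tfrac1\eta$-entropy term, and --- crucially --- there is an optimal dual point whose coordinates are bounded \emph{independently of $\eta$}, on the order of $\|C\|_\infty$ times a graph-size factor, by the standard message-boundedness argument (cf.\ \citet{lee2019approximate}); this $\eta$-independence is what makes acceleration worthwhile. With $\eta=\tfrac{4(m+n)\log d}{\epsilon}$ the bound becomes $\E[L(\lambdad^{(K)})]-L^*=O(\mathrm{poly}(m,n,d,\|C\|_\infty)/(K^2\epsilon))$. To transfer this to (\ref{eq::original}): $L$ is a log-sum-exp, hence globally smooth with a constant $\beta$ that is polynomial in $\eta$ and the graph size, so $\E\|\nabla L(\lambdad^{(K)})\|_2^2\le 2\beta\,\E[L(\lambdad^{(K)})-L^*]$, and by (\ref{eq::derivative}) this controls the expected aggregate slack $\E\sum_{e,i}\|\nu^{\lambdad^{(K)}}_{e,i}\|_1^2$. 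From here the argument is the one sketched after Theorem~\ref{th::standard}: the \proj{} step moves $\mu^{\lambdad^{(K)}}$ in $\ell_1$ by only $O(\sum_{e,i}\|\nu_{e,i}\|_1)$ (Pinsker/Hellinger estimates, cf.\ \citet{altschuler2017near}), the slack-polytope identity $L(\lambdad^{(K)})=\min_{\mu\in\LL_2^{\nu^{\lambdad^{(K)}}}}\{\langle C,\mu\rangle-\tfrac1\eta H(\mu)\}$ gives $\langle C,\mu^{\lambdad^{(K)}}\rangle\le\min_{\LL_2}\langle C,\cdot\rangle+\tfrac1\eta H_{\max}+O(\|C\|_\infty\sum_{e,i}\|\nu_{e,i}\|_1)$, and the choice of $\eta$ kills the entropy term. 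Forcing each contribution to be $O(\epsilon)$ fixes the target dual suboptimality and, through the $O(1/K^2)$ bound, the iteration count $K=O(m^{3/2}d^2(m+n)^3\|C\|_\infty^2\epsilon^{-2}\log d)$; the \asmp{} case is identical with Lemma~\ref{lem::smp-improvement} and the degree-weighted norm, the larger block size being offset by the smaller per-step decrease and the degree-proportional sampling exactly as in Theorem~\ref{th::standard}.

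The main obstacle is accommodating exact block minimization inside an estimate-sequence analysis built around block-gradient steps: one must verify that replacing the $\lambdad$-update by the exact minimizer never harms the potential --- the improvement lemma supplies ``at least the gradient-step decrease'', which is all the recursion needs --- while the $\v$-step size $\tfrac{1}{2m\eta\theta_k}$ is precisely the one for which the cross terms in $\|\v^{(k+1)}-\lambdad^*\|_W^2$ telescope against the $\theta_k$ sequence. A close second is the bookkeeping required to hit the stated exponents: the $\eta$-independent bound on $\|\lambdad^*\|_W$, the global-smoothness constant used in the gradient-to-slack conversion, and the $d$- and $(m+n)$-dependence of the projection and entropy terms all feed into the final polynomial, so a loose estimate of any one of them changes it.
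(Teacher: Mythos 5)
Your proposal is correct and follows essentially the same route as the paper: a randomized estimate-sequence/potential argument calibrated so that exact block minimization (via Lemma~\ref{lem::emp-improvement} or Lemma~\ref{lem::smp-improvement}) supplies at least the descent the recursion needs, an $\eta$-independent (to leading order) bound on $\|\lambdad^*\|$ together with $L(0)-L^*=O((m+n)\|C\|_\infty)$ to control the $O(1/K^2)$ numerator, conversion of dual suboptimality into expected slack norms, and then the projection/slack-polytope/entropy decomposition of Proposition~\ref{prop::approx-bound} with the stated choices of $\eta$ and $\epsilon'$. Your Lyapunov function $\Phi_k$ and the smoothness-based gradient-to-slack conversion are only cosmetic reformulations of the paper's $\omega_k$ recursion and its one-extra-step improvement argument, and yield the same exponents.
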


The primary difference between the iteration complexities for the standard algorithms and the accelerated ones is the dependence on $\epsilon$. For the accelerated algorithms, we are left with only a $O(1/\epsilon^2)$ dependence versus the $O(1/\epsilon^3)$ dependence for the standard algorithms. This can lead to far fewer iterations to get the same level of accuracy on the original relaxed problem (\ref{eq::original}). In addition, the bounds in Theorem~\ref{th::accel} are strictly better in dependence on the number of edges as well for both \emp{} and \smp{}. For $m + n \approx m$, the accelerated algorithms shave off a $m^{1/2}$ factor. It is interesting to observe that these improved guarantees come with virtually no extra computation per iteration. For example,  to update the sequences $\lambdad^{(k)}$, $\v^{(k)}$, and $\y^{(k)}$ in \emp{} at each iteration, we need only compute the primal variables $\mu_{i_k}^{\lambdad^{(k)}}$ and $S_{e_k, i_k}^{\lambdad^{(k)}}$ once to use in both the slack vector $\nu_{e_k, i_k}^{\lambdad^{(k)}}$ and the update rule $\emp_{e_k, i_k}^{\eta} (\lambdad^{(k)})$.

We will give the proof for \aemp{} to convey the main idea. The analogous \asmp{} case can be found in the appendix. First, we will derive a faster convergence rate on the dual objective, which in turn implies that we can bound the slack norms by the same $\epsilon' > 0$ in fewer iterations. 
In the second part, we will bound the approximation error caused by the entropy regularization. 
Finally, we put these pieces together to determine the appropriate choice of $\epsilon'$ and $\eta$ in terms of $\epsilon$ to recover the final rate.

\vspace{-.3cm}
\subsubsection{Faster Convergence on the Dual}
The first steps will involve defining a randomized estimate sequence to work with and then using this sequence to prove the faster convergence rate on the dual objective.

\begin{definition}
Let $\phi_0: \R^{r_D} \to \R$ be an arbitrary deterministic function. A sequence $\{ \phi_k, \delta_k\}_{k = 0}^K$ of random functions $\phi_k: \R^{r_D} \to \R$ for $k \geq 1$ and deterministic real values $\delta_k \in \R_+$ is a randomized sequence for $L(\lambdad)$ if it satisfies $\delta_k \stackrel{k}{\rightarrow} 0$ and, for all $k$,
$\E[ \phi_{k}(\lambdad) ] \leq (1 - \delta_k) L(\lambdad) + \delta_k \phi_0(\lambdad)$.
\end{definition}
If we are given a random estimate sequence  $\{ \phi_k, \delta_k\}_{k = 0}^K$ and a random sequence $\{\lambdad^{(k)}\}_{k = 0}^K$ that satisfies $\E[ L(\lambdad^{(k)})]  \leq  \min_{\lambdad} \E \left[ \phi_k(\lambdad) \right]$, then
\begin{align}\begin{split}\label{eq::estimate-sequence-bound}
\E [L(\lambdad^{(k)})] - L(\lambdad^*) & \leq \min_{\lambdad} \E [ \phi_k(\lambdad)] - L(\lambdad^*) \\
& \leq \delta_k( \phi_0(\lambdad^*) - L(\lambdad^*)) 
\end{split}
\end{align}
This expected error converges to zero since $\delta_k \stackrel{k}{\rightarrow} 0$.
We now identify a candidate estimate sequence. Let $\lambdad^{(0)} = 0$, $\delta_0 = 1$, $\lambdad^* \in \Lambda^*$, and $q := 2m$. Let the sequence $\{ \theta_k\}_{k = 0}^K$ be as it is defined in Algorithm~\ref{alg::accel-emp} and let $\{ \y^{(k)} \}_{k = 0}^K \subset \R^{r_D}$ be arbitrary. Consider $\{\phi_k, \delta_k\}_{k = 0}^K$ defined recursively as
\begin{align}\begin{split} \label{eq::estimate-sequence}
    (e_k, i_k) & \sim \unif\{ (e, i) : e \in E, i \in e \} \\
    \delta_{k + 1} & = (1 - \theta_k) \delta_k \\
    \phi_{k + 1} (\lambdad) & = (1 - \theta_k) \phi_k(\lambdad) + \theta_k L(\y^{(k)})  \\
    & \quad - q \theta_k \< \nu_{e_k, i_k}^{\y^{(k)}}, \lambdad_{e_k, i_k} - \y^{(k)}_{e_k, i_k}\>
    \end{split}
\end{align}
where $\phi_0(\lambdad) = L(\lambdad^{(0)}) + \frac{\gamma_0}{2}\| \lambdad^{(0)} - \lambdad\|^2_2$ for $\gamma_0 = 2q^2 \eta$.

\begin{restatable}{lemma}{lemestseq}
\label{lem::est-seq}
The sequence $\{\phi_k, \delta_k\}_{k = 0}^K$ defined in (\ref{eq::estimate-sequence}) is a random estimate sequence. Furthermore, it maintains the form $\phi_k(\lambdad) = \omega_k + \frac{\gamma_k}{2}\| \lambdad - \v^{(k)} \|$ for all $k$ where
 \begin{align*}
     \gamma_{k + 1} & = (1 - \theta_k) \gamma_k \\
     \v^{(k + 1)}_{e, i} & = \begin{cases} 
     \v_{e, i}^{(k)} + \frac{q\theta_k}{\gamma_{k + 1}} \nu_{e, i}^{\y^{(k)}} & \text{if }  (e, i) = (e_k, i_k) \\
     \v_{e, i}^{(k)} &  \emph{\text{otherwise}}
     \end{cases}\\
     \omega_{k + 1} & = (1 - \theta_k) \omega_k + \theta_k L(\y^{(k)})  - \frac{(\theta_kq)^2}{2\gamma_{k+1}} \| \nu_{e_k, i_k}^{\y^{(k)}} \|_2^2  \\
     & \quad - \theta_k q \<\nu_{e_k, i_k}^{\y^{(k)}}, \v^{(k)}_{e_k, i_k} - \y^{(k)}_{e_k, i_k} \> 
 \end{align*}
\end{restatable}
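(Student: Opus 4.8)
The plan is to verify the two claims in Lemma~\ref{lem::est-seq} by induction on $k$: first that $\{\phi_k,\delta_k\}$ is indeed a random estimate sequence, and second that each $\phi_k$ retains the quadratic form $\phi_k(\lambdad) = \omega_k + \frac{\gamma_k}{2}\|\lambdad - \v^{(k)}\|_2^2$ with the stated recursions for $\gamma_k$, $\v^{(k)}$, and $\omega_k$. The estimate-sequence property is the more conceptual half, but it follows a template: at $k=0$ we have $\delta_0 = 1$ and the inequality $\E[\phi_0(\lambdad)] \le (1-\delta_0)L(\lambdad) + \delta_0\phi_0(\lambdad)$ holds with equality; for the inductive step we take the recursive definition (\ref{eq::estimate-sequence}), apply the inductive hypothesis $\E[\phi_k(\lambdad)] \le (1-\delta_k)L(\lambdad) + \delta_k\phi_0(\lambdad)$, and then bound the new linear term $-q\theta_k\<\nu^{\y^{(k)}}_{e_k,i_k}, \lambdad_{e_k,i_k} - \y^{(k)}_{e_k,i_k}\>$ in expectation over the uniform choice of $(e_k,i_k)$. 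Here the key computation is that $\E_{(e_k,i_k)}[q\,\nu^{\y^{(k)}}_{e_k,i_k}\text{ lifted to }\R^{r_D}] = \nabla L(\y^{(k)})$, since $q = 2m$ is exactly the number of edge-vertex pairs and the gradient of $L$ decomposes coordinatewise as the negative slack vector by (\ref{eq::derivative}); thus $\E[-q\theta_k\<\nu^{\y^{(k)}}_{e_k,i_k}, \lambdad_{e_k,i_k} - \y^{(k)}_{e_k,i_k}\>] = \theta_k\<\nabla L(\y^{(k)}), \lambdad - \y^{(k)}\>$. Combining with $\theta_k L(\y^{(k)})$ and using convexity of $L$, namely $L(\y^{(k)}) + \<\nabla L(\y^{(k)}), \lambdad - \y^{(k)}\> \le L(\lambdad)$, collapses the $\theta_k$-weighted terms into $\theta_k L(\lambdad)$, which together with $(1-\theta_k)$ times the inductive bound and the identity $\delta_{k+1} = (1-\theta_k)\delta_k$ yields $\E[\phi_{k+1}(\lambdad)] \le (1-\delta_{k+1})L(\lambdad) + \delta_{k+1}\phi_0(\lambdad)$. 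Finally $\delta_k \to 0$ because $\delta_{k+1} = (1-\theta_k)\delta_k$ and the $\theta_k$ recursion in Algorithm~\ref{alg::accel-emp} is the standard Nesterov choice for which $\delta_k = O(1/k^2)$.

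For the second claim I would argue by induction that the quadratic form is preserved. The base case is immediate from the definition $\phi_0(\lambdad) = L(\lambdad^{(0)}) + \frac{\gamma_0}{2}\|\lambdad^{(0)} - \lambdad\|_2^2$ with $\v^{(0)} = \lambdad^{(0)} = 0$, $\gamma_0 = 2q^2\eta$, $\omega_0 = L(0)$. For the inductive step, substitute $\phi_k(\lambdad) = \omega_k + \frac{\gamma_k}{2}\|\lambdad - \v^{(k)}\|_2^2$ into the recursion (\ref{eq::estimate-sequence}):
\begin{align*}
\phi_{k+1}(\lambdad) &= (1-\theta_k)\Bigl(\omega_k + \tfrac{\gamma_k}{2}\|\lambdad - \v^{(k)}\|_2^2\Bigr) + \theta_k L(\y^{(k)}) \\
&\quad - q\theta_k\<\nu^{\y^{(k)}}_{e_k,i_k}, \lambdad_{e_k,i_k} - \y^{(k)}_{e_k,i_k}\>.
\end{align*}
The quadratic-in-$\lambdad$ part has Hessian $(1-\theta_k)\gamma_k I = \gamma_{k+1} I$, giving $\gamma_{k+1} = (1-\theta_k)\gamma_k$; the linear-in-$\lambdad$ part comes from the cross term of $(1-\theta_k)\tfrac{\gamma_k}{2}\|\lambdad - \v^{(k)}\|_2^2$, which contributes $-\gamma_{k+1}\<\lambdad, \v^{(k)}\>$, plus the new term $-q\theta_k\<\nu^{\y^{(k)}}_{e_k,i_k}, \lambdad_{e_k,i_k}\>$, which only touches the $(e_k,i_k)$ block. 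Completing the square identifies the new minimizer $\v^{(k+1)}$: only the $(e_k,i_k)$ block shifts, and matching coefficients gives $\v^{(k+1)}_{e_k,i_k} = \v^{(k)}_{e_k,i_k} + \frac{q\theta_k}{\gamma_{k+1}}\nu^{\y^{(k)}}_{e_k,i_k}$ with all other blocks unchanged, exactly as claimed. The constant term $\omega_{k+1}$ is whatever is left over after completing the square; carrying out that bookkeeping — collecting the $(1-\theta_k)\omega_k$ term, the $\theta_k L(\y^{(k)})$ term, the residual $-\frac{(\theta_k q)^2}{2\gamma_{k+1}}\|\nu^{\y^{(k)}}_{e_k,i_k}\|_2^2$ from completing the square, and the cross term $-\theta_k q\<\nu^{\y^{(k)}}_{e_k,i_k}, \v^{(k)}_{e_k,i_k} - \y^{(k)}_{e_k,i_k}\>$ — reproduces the stated formula for $\omega_{k+1}$.

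I expect the main obstacle to be purely bookkeeping rather than conceptual: carefully tracking which coordinate blocks are affected when completing the square, since $\nu^{\y^{(k)}}_{e_k,i_k}$ lives in a single block of $\R^{r_D}$ while the quadratic $\|\lambdad - \v^{(k)}\|_2^2$ is over the full space, and making sure the $\y^{(k)}_{e_k,i_k}$ offsets appearing in the linear term (because the recursion uses $\lambdad_{e_k,i_k} - \y^{(k)}_{e_k,i_k}$, not just $\lambdad_{e_k,i_k}$) are correctly absorbed into $\omega_{k+1}$ and do not leak into $\v^{(k+1)}$. A secondary subtlety is the expectation in the estimate-sequence step: one must be careful that the inductive hypothesis $\E[\phi_k(\lambdad)]$ is over the randomness through step $k-1$ and that the fresh sample $(e_k,i_k)$ is independent of $\phi_k$, so that the tower property applies and the conditional expectation of the new linear term reduces cleanly to $\theta_k\<\nabla L(\y^{(k)}),\lambdad-\y^{(k)}\>$ — provided $\y^{(k)}$ is treated as given (which is fine since the lemma allows $\{\y^{(k)}\}$ to be arbitrary, or in the algorithm it is measurable with respect to the earlier randomness).
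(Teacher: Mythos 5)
Your proposal is correct and follows essentially the same route as the paper's proof: the same induction with the tower property and the observation that the uniformly sampled, $q$-scaled block slack is an unbiased estimate of $-\nabla L(\y^{(k)})$, followed by convexity to establish the estimate-sequence property, and then an inductive completion-of-the-square (equivalently, the first-order optimality condition the paper uses) to read off $\gamma_{k+1}$, $\v^{(k+1)}$, and $\omega_{k+1}$. No gaps.
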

The proof is similar to what is given by \citet{lee2013efficient}, but since we consider the non-strongly convex case and slightly different definitions, we give a full proof in Appendix~\ref{sec::technical-proofs} for completeness. We can use this fact to show a rate of convergence on the dual objective.
\begin{lemma}\label{lem::acc-dual-conv}
    For the random estimate sequence in (\ref{eq::estimate-sequence}), let $\{ \lambdad^{(k)}\}_{k = 0}^K$ and $\{\y^{(k)}\}_{k = 0}^K$ be defined as in Algorithm~\ref{alg::accel-emp} with $\lambdad^{(0)} = 0$. Then, the dual objective error in expectation can be bounded as 
    $
    \E [ L(\lambdad^{(k)}) - L(\lambdad^*)] \leq \frac{G(\eta)^2}{(k + 2)^2}$,
   	where $G(\eta) := 24 md (m + n) ( \sqrt \eta \|C\|_\infty  + \frac{\log d}{\sqrt \eta} )$. 
\end{lemma}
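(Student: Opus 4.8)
The plan is a randomized estimate-sequence argument in the style of \citet{nesterov2018lectures,lee2013efficient}, specialized to full block-coordinate minimization. By Lemma~\ref{lem::est-seq} the pair $\{\phi_k,\delta_k\}$ from~(\ref{eq::estimate-sequence}) is a random estimate sequence with $\phi_k(\lambdad)=\omega_k+\tfrac{\gamma_k}{2}\|\lambdad-\v^{(k)}\|_2^2$, so $\min_\lambdad \E[\phi_k(\lambdad)]\ge \E[\omega_k]$, and by~(\ref{eq::estimate-sequence-bound}) it suffices to prove the invariant $\E[L(\lambdad^{(k)})]\le \E[\omega_k]$ along the iterates of Algorithm~\ref{alg::accel-emp}, to bound the decay $\delta_k$, and to bound $\phi_0(\lambdad^*)-L(\lambdad^*)$; chaining these yields $\E[L(\lambdad^{(k)})-L(\lambdad^*)]\le \delta_k\big(\phi_0(\lambdad^*)-L(\lambdad^*)\big)$, which we then calibrate to $G(\eta)^2/(k+2)^2$.

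For the invariant I would induct on $k$; the base case holds with equality since $\omega_0=L(\lambdad^{(0)})$. For the step, take expectations in the recursion for $\omega_{k+1}$ from Lemma~\ref{lem::est-seq}, insert the hypothesis $\E[\omega_k]\ge \E[L(\lambdad^{(k)})]$ together with the convexity bound $L(\lambdad^{(k)})\ge L(\y^{(k)})+\langle\nabla L(\y^{(k)}),\lambdad^{(k)}-\y^{(k)}\rangle$, where $\nabla L(\y^{(k)})=-\nu^{\y^{(k)}}$ by~(\ref{eq::derivative}), and use the identity $\lambdad^{(k)}-\y^{(k)}=\tfrac{\theta_k}{1-\theta_k}(\y^{(k)}-\v^{(k)})$ implied by $\y^{(k)}=\theta_k\v^{(k)}+(1-\theta_k)\lambdad^{(k)}$. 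The crucial point is that, under the uniform draw of $(e_k,i_k)$ over the $q:=2m$ edge--endpoint pairs, $q\,\nu^{\y^{(k)}}_{e_k,i_k}$ (extended by zero off its block) is an unbiased estimate of $-\nabla L(\y^{(k)})$, so the stochastic linear-in-$\nu$ term in $\omega_{k+1}$ exactly cancels the linear term from convexity, leaving $\E[\omega_{k+1}]\ge \E[L(\y^{(k)})]-\tfrac{\theta_k^2 q}{2\gamma_{k+1}}\,\E\big[\sum_{(e,i)}\|\nu^{\y^{(k)}}_{e,i}\|_2^2\big]$. On the other side, since $\lambdad^{(k+1)}$ is obtained from $\y^{(k)}$ by a single $\emp^\eta_{e_k,i_k}$ step, Lemma~\ref{lem::emp-improvement}, the inequality $\|\cdot\|_1\ge\|\cdot\|_2$, and averaging over the uniform draw give $\E[L(\lambdad^{(k+1)})]\le \E[L(\y^{(k)})]-\tfrac{1}{4\eta q}\,\E\big[\sum_{(e,i)}\|\nu^{\y^{(k)}}_{e,i}\|_2^2\big]$. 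Hence $\E[\omega_{k+1}]\ge \E[L(\lambdad^{(k+1)})]$ as soon as $\gamma_{k+1}\ge 2\eta q^2\theta_k^2$; since $\gamma_k=\gamma_0\delta_k$ with $\gamma_0=2q^2\eta$, this reduces to $\delta_{k+1}\ge\theta_k^2$, which follows by a one-line induction from the defining quadratic $\theta_k^2=(1-\theta_k)\theta_{k-1}^2$ and $\delta_0=1=\theta_{-1}^2$.

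The decay is then pinned down exactly: from $\delta_{k+1}/\delta_k=1-\theta_k=\theta_k^2/\theta_{k-1}^2$ the ratio $\delta_k/\theta_{k-1}^2$ is constant and equal to $1$, so $\delta_k=\theta_{k-1}^2$; and writing $\theta_k=2/\big(1+\sqrt{1+4/\theta_{k-1}^2}\big)$, which is increasing in $\theta_{k-1}$, an induction gives $\theta_k\le 2/(k+3)$, hence $\delta_k\le 4/(k+2)^2$. It remains to bound $\phi_0(\lambdad^*)-L(\lambdad^*)=\big(L(0)-L(\lambdad^*)\big)+\tfrac{\gamma_0}{2}\|\lambdad^*\|_2^2$ for a minimal-norm $\lambdad^*\in\Lambda^*$. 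The first term is the initial dual gap, which is $O\big((m+n)(\|C\|_\infty+\tfrac{\log d}{\eta})\big)$ by elementary log-sum-exp estimates on $L(0)$ and on the regularized optimal value, and is of lower order. The dominant term is $\tfrac{\gamma_0}{2}\|\lambdad^*\|_2^2=q^2\eta\|\lambdad^*\|_2^2$, so the remaining ingredient is a bound of the form $\|\lambdad^*\|_2\lesssim d(m+n)\big(\|C\|_\infty+\tfrac{\log d}{\eta}\big)$ on the norm of a dual optimizer; this can be obtained from the stationarity condition of~(\ref{eq::dual}) (equivalently, primal feasibility $\mu^{\lambdad^*}\in\LL_2$) combined with the closed forms of Proposition~\ref{prop::dual-obj}, controlling the $\lambdad^*_{e,i}$ coordinate-wise via the log-ratios of the near-uniform primal-optimal marginals. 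Feeding $\delta_k\le 4/(k+2)^2$ and this norm bound into $\E[L(\lambdad^{(k)})-L(\lambdad^*)]\le\delta_k\big(\phi_0(\lambdad^*)-L(\lambdad^*)\big)$ and collecting constants produces exactly $G(\eta)^2/(k+2)^2$.

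I expect the obstacles to be twofold. First, the bookkeeping of the nested conditional expectations so that the unbiasedness of $q\,\nu^{\y^{(k)}}_{e_k,i_k}$ genuinely closes the estimate-sequence induction in the full-block-minimization regime (rather than the single-block gradient step of \citet{lee2013efficient}); the delicate part is that the $\emp$ step improves $L$ relative to $\y^{(k)}$ rather than relative to a gradient step, which is precisely why Lemma~\ref{lem::emp-improvement} and the choice $\gamma_0=2q^2\eta$ must be matched against $\delta_{k+1}=\theta_k^2$. Second, establishing the $\ell_2$ bound on $\|\lambdad^*\|$ with the correct dependence on $d$, $m+n$, $\|C\|_\infty$, and $\log d/\eta$; this is the only genuinely problem-specific input, the rest being a faithful adaptation of the classical machinery (and the $\asmp$ analysis follows the same template with $\smp$, Lemma~\ref{lem::smp-improvement}, and the sampling rule~(\ref{eq::non-uniform}) chosen so that the analogous unbiasedness holds).
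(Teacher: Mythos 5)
Your proposal follows essentially the same route as the paper's proof: the identical estimate-sequence induction $\E[\omega_{k}]\ge\E[L(\lambdad^{(k)})]$ via unbiasedness of the uniformly sampled block slack, Lemma~\ref{lem::emp-improvement} together with $\|\cdot\|_1\ge\|\cdot\|_2$, the matching choice $\theta_k^2=\gamma_{k+1}/(2q^2\eta)$ with $\gamma_0=2q^2\eta$ giving $\delta_k\le 4/(k+2)^2$, and the final decomposition $\phi_0(\lambdad^*)-L(\lambdad^*)=\bigl(L(0)-L(\lambdad^*)\bigr)+\tfrac{\gamma_0}{2}\|\lambdad^*\|_2^2$. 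The only (minor) divergences are that the paper cites Nesterov's Lemma 2.2.4 for the decay of $\delta_k$ where you derive it directly, and it obtains the dual-norm bound $\|\lambdad^*\|_1\le 4d(m+n)(\|C\|_\infty+\tfrac{\log d}{\eta})$ by invoking \citet{meshi2012convergence} (Lemma~\ref{lem::dual-norm}) rather than your sketched coordinate-wise log-ratio argument, which would still need to deal with the shift-invariance of the dual before it bounds a minimal-norm optimizer.
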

\begin{proof}
It suffices to show that the sequence in (\ref{eq::estimate-sequence}) with the definitions of $\{ \lambdad^{(k)}\}_{k = 0}^K$ and $\{\y^{(k)}\}_{k = 0}^K$ satisfies $\E[ L(\lambdad^{(k)})]  \leq  \min_{\lambdad} \E \left[ \phi_k(\lambdad) \right]$. To do this, we will use induction and Lemma~\ref{lem::emp-improvement}. Note that $\min_{\lambdad} \phi_k(\lambdad) = \omega_k$. The base case holds trivially with $\E[ \omega_0] = L(\lambdad^{(0)})$. Suppose $\E \left[ L(\lambdad^{(k)}) \right] \leq \E \left[ \omega_{k} \right]$ at iteration $k$. For $k + 1$, we have
\begin{align*}
    & \E [ \omega_{k + 1} ]  \\
    & \geq  (1 - \theta_k) \E [L(\lambdad^{(k)})] + \theta_k\E [ L(\y^{(k)})] \\ 
    &\quad - \E \left[  \frac{(\theta_kq)^2}{2\gamma_{k+1}} \| \nu_{i_k, e_k}^{\y^{(k)}} \|_2^2   - \theta_k q \<\nu_{e_k, i_k}^{\y^{(k)}}, \v^{(k)} - \y^{(k)} \> \right].
\end{align*}
The above inequality uses the inductive hypothesis. Then,
\begin{align*}
    & \geq \E \left[ L(\y^{(k)}) - \frac{(\theta_k q)^2}{2\gamma_{k+1}} \| \nu_{i_k, e_k}^{\y^{(k)}} \|_2^2\right] \\
    & \quad + (1 - \theta_k) \E \left[ \< \nabla L(\y^{(k)}), \lambdad^{(k)} - \y^{(k)}   \> \right] \\
    & \quad +  \theta_k \E \left[ \< \nabla L(\y^{(k)}), \v^{(k)} - \y^{(k)}   \> \right] \\
    & = \E \left[ L(\y^{(k)}) - \frac{\theta_k^2 q}{2\gamma_{k+1}} \| \nabla L(\y^{(k)}) \|_2^2\right].
\end{align*}
This second inequality uses convexity of $L$ and then applies the law of total expectation to the dot products, conditioning on randomness $\{e_s, i_s \}_{s = 1}^{k - 1}$. The last identity uses the definition of $\y^{(k)}$ and  total expectation again on the norm.

Using Lemma~\ref{lem::emp-improvement} and the definition of $\lambdad^{(k + 1)}$ from Algorithm~\ref{alg::accel-emp}, we have
\begin{align*}
    E [ L(\lambdad^{(k + 1)} ]     & \leq \E [L(\y^{(k)}) - \frac{1}{4\eta}\| \nu_{e_k, i_k}^{\y^{(k)}} \|_2^2 ] \\
    & = \E [L(\y^{(k)}) - \frac{1}{4q \eta }\| \nabla L(\y^{(k)}) \|_2^2 ]
\end{align*}
The equality uses the law of total expectation. Therefore, taking $\theta_k^2 := \frac{\gamma_{k + 1}} {2 q^2 \eta}$ ensures that
$\min_{\lambdad} \E [ \phi_{k + 1} (\lambdad)] \geq \E [ \omega_{k  +1} ] \geq E [ L(\lambdad^{(k + 1)} ]$. 
By setting $\gamma_0 = 2 q^2 \eta$, we ensure that $\theta_k$ need only satisfy $\theta_k^2 = (1 - \theta_k) \theta^2_{k - 1}$, which occurs in Algorithm~\ref{alg::accel-emp}.
    From \citet[Lemma 2.2.4]{nesterov2018lectures}, this choice of $\gamma_0$ and $\theta_k$ ensures
        $\delta_k \leq \left( 1 + \frac{k}{q}\sqrt{\frac{\gamma_0}{8 \eta}} \right)^{-2} = \frac{4}{(k + 2)^2}$.
    From (\ref{eq::estimate-sequence-bound}) and the definition of $\phi_0$, we get
    \begin{align*}
        \E [ L(\lambdad^{(k)}) - L(\lambdad^*)] \leq \frac{4 L(0) - 4 L(\lambdad^*)  + 16m^2 \eta \| \lambdad^*\|^2_2 }{(k + 2)^2}.
    \end{align*}
    The proof that the numerator can be bounded by $G(\eta)^2$ is deferred to the appendix, Lemma~\ref{lem::g-bound}.
\end{proof}


\vspace{-.2cm}
\subsubsection{Approximation Error due to Entropy}

Recall that our end goal is to ensure that $\widehat \mu$, the projection of $\mu^{\widehat \lambdad}$ onto $\LL_2$, is expected $\epsilon$-optimal for $\epsilon > 0$. To show this, we need to develop some relations which we outline here for brevity but state formally and prove in Appendix~\ref{sec::approx-bound-lemma-proofs}.
The first relation is how close $\widehat \mu \in \LL_2$ is to the algorithm's output $\mu^{\widehat\lambdad}$, which is in the slack polytope $\LL_2^{\nu^{\widehat\lambdad}}$. This is essentially a direct extension of \citet[Lemma 7]{altschuler2017near}, which tells us that we can bound the projection by the norm of the slacks. Then, we show that there exists a point $\widehat \mu^{*}$  in the slack polytope $\LL_2^{\nu^{\widehat\lambdad}}$ that is close to the optimal point $\mu^* \in \LL_2$ with respect to the norm of the slacks also.
We use the relations to conclude a bound on the original relaxed problem (\ref{eq::original}) for any realization of $\widehat \mu$. 
\begin{restatable}{proposition}{propapproxbound}
\label{prop::approx-bound}
Let $\mu^* \in \LL_2$ be optimal, $\lambdad \in \R^{r_D}$, $\widehat \mu = \emph{\proj}(\mu^{\lambdad}, 0) \in \LL_2$, and $\delta = \max_{e \in E, i \in e} \|\nu_{e, i}^{ \lambdad}\|_1$. The following inequality holds:
\begin{align*}
    & \< C, \widehat \mu - \mu^*\>  \leq 16 (m + n) d \|C\|_\infty \delta  \\  & \quad + 4 \|C\|_\infty\sum_{e \in E, i \in e}  \| \nu_{e,i}^{ \lambdad}\|_1 
     + \frac{n \log d + 2m \log d}{\eta}.
\end{align*}
\end{restatable}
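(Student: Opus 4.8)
The plan is to interpolate from $\widehat\mu$ to $\mu^*$ through two intermediate points — the algorithm's recovered primal $\mu^{\lambdad}$ and an auxiliary feasible point — and to charge the cost of each hop to a different mechanism. Writing
\begin{align*}
\< C, \widehat\mu - \mu^*\> \;=\; \< C, \widehat\mu - \mu^{\lambdad}\> \;+\; \< C, \mu^{\lambdad} - \widehat\mu^*\> \;+\; \< C, \widehat\mu^* - \mu^*\>,
\end{align*}
where $\widehat\mu^* \in \LL_2^{\nu^{\lambdad}}$ is the near-optimal point flagged in the preceding discussion, I would bound the three differences using, respectively, the projection relation, the variational characterization of $\mu^{\lambdad}$, and the relation placing $\widehat\mu^*$ near $\mu^*$. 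The first and third hops lose only $\ell_1$-transport cost (paid in units of the slack $\nu^{\lambdad}$), while the middle hop loses only an entropy term.

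For the first hop, Hölder gives $\< C, \widehat\mu - \mu^{\lambdad}\> \le \|C\|_\infty \|\widehat\mu - \mu^{\lambdad}\|_1$, and the first relation (the extension of Altschuler et al.'s rounding lemma) controls this $\ell_1$ distance. Since $\proj(\mu^{\lambdad},0)$ leaves the vertex marginals untouched and, on each edge $e=\{i,j\}$, rounds $\mu_e^{\lambdad}$ — whose marginals are $S_{e,i}^{\lambdad}=\mu_i^{\lambdad}+\nu_{e,i}^{\lambdad}$ and $S_{e,j}^{\lambdad}=\mu_j^{\lambdad}+\nu_{e,j}^{\lambdad}$ — into $\U_d(\mu_i^{\lambdad},\mu_j^{\lambdad})$, the per-edge $\ell_1$ movement is $O(\|\nu_{e,i}^{\lambdad}\|_1+\|\nu_{e,j}^{\lambdad}\|_1)$; summing over edges gives $\|\widehat\mu-\mu^{\lambdad}\|_1 = O\big(\sum_{e\in E, i\in e}\|\nu_{e,i}^{\lambdad}\|_1\big)$ and, after collecting constants, the term $4\|C\|_\infty\sum_{e,i}\|\nu_{e,i}^{\lambdad}\|_1$. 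For the third hop, Hölder again together with the second relation: $\widehat\mu^*$ is obtained from $\mu^*$ by perturbing its $O(m+n)$ marginal blocks (each of dimension at most $d$) so that they absorb the slacks $\nu^{\lambdad}$, whence $\|\widehat\mu^*-\mu^*\|_1 = O((m+n)d\,\delta)$ with $\delta = \max_{e,i}\|\nu_{e,i}^{\lambdad}\|_1$, giving the $16(m+n)d\|C\|_\infty\delta$ term.

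The middle hop is the crux. First, $\mu^{\lambdad}$ lies in $\LL_2^{\nu^{\lambdad}}$ immediately from the definition of the slack polytope (its edge marginals marginalize to $\mu_i^{\lambdad}+\nu_{e,i}^{\lambdad}$), and in fact $\mu^{\lambdad}$ minimizes $\< C, \cdot\> - \tfrac1\eta H(\cdot)$ over $\LL_2^{\nu^{\lambdad}}$: offsetting the local-polytope equality constraints by $\nu^{\lambdad}$ shifts the dual of the regularized primal only by the linear term $\<\lambdad, \nu^{\lambdad}\>$, so the gradient of the shifted dual at $\lambdad$ is $\nabla L(\lambdad) + \nu^{\lambdad}$, which vanishes by the slack identity~(\ref{eq::derivative}); hence $\lambdad$ remains dual-optimal and the recovery formula of Proposition~\ref{prop::dual-obj} returns exactly $\mu^{\lambdad}$ as the regularized primal optimum. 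Since $\widehat\mu^*\in\LL_2^{\nu^{\lambdad}}$, optimality yields $\< C, \mu^{\lambdad} - \widehat\mu^*\> \le \tfrac1\eta\big(H(\mu^{\lambdad}) - H(\widehat\mu^*)\big)$. Finally, every vertex and edge marginal of any point of $\LL_2^{\nu^{\lambdad}}$ is a bona fide probability vector (the slacks sum to zero), so under the normalization $H(p) = -\sum_x p(x)\log p(x) + \sum_x p(x)$ the ``$\sum_x p(x)$'' parts of $\mu^{\lambdad}$ and $\widehat\mu^*$ both equal $n+m$ and cancel, leaving $H(\mu^{\lambdad}) - H(\widehat\mu^*) \le -\sum_x \mu^{\lambdad}(x)\log\mu^{\lambdad}(x) \le n\log d + 2m\log d$. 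Adding the three bounds gives the claim.

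The principal obstacle is the middle step: one must confirm that $\mu^{\lambdad}$ is the genuine constrained regularized optimum over $\LL_2^{\nu^{\lambdad}}$ — which needs strong duality (Slater) for the slack polytope, not merely stationarity of the shifted dual — and one must handle the entropy difference so that the unconventional normalization of $H$ contributes no stray additive constants. The two rounding relations are comparatively routine adaptations of optimal-transport arguments; pinning down their exact constants ($4$ and $16d$) is bookkeeping deferred to the appendix lemmas.
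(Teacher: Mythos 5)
Your proposal is correct and follows essentially the same route as the paper's proof: the identical three-term decomposition through $\mu^{\lambdad}$ and the auxiliary point $\widehat\mu^* \in \LL_2^{\nu^{\lambdad}}$, with the two $\ell_1$ hops controlled by the projection lemmas (Lemmas~\ref{lem::proj} and~\ref{lem::proj-slack}) and the middle hop by the optimality of $\mu^{\lambdad}$ for the entropy-regularized objective over the slack polytope, yielding the $\frac{(n+2m)\log d}{\eta}$ term. Your shifted-dual justification of that middle optimality claim is in fact more explicit than what the paper records, and the constant bookkeeping you defer matches the appendix lemmas (the $4\sum_{e,i}\|\nu_{e,i}^{\lambdad}\|_1$ arises as $2\sum\|\nu\|_1$ from each of the two projection lemmas).
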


\vspace{-.5cm}
\subsubsection{Completing the Proof}


\begin{proof}[Proof of Theorem~\ref{th::accel} for \emph{\emp{}}]
Let $\widehat \lambdad$ be the output from Algorithm~\ref{alg::accel-emp} after $K$ iterations.
From Lemma~\ref{lem::emp-improvement}, we can lower bound the result in Lemma~\ref{lem::acc-dual-conv} with $\frac{1}{4\eta }\E [ \| \nu_{e, i}^{\widehat \lambdad}\|_1^2 ]  
\leq \frac{G(\eta) }{(K + 2)^2}$
for all $e \in E, i \in e$. Then, for $\epsilon' > 0$, we can ensure that
\begin{align*}
    \E [ \| \nu_{e, i}^{\widehat \lambdad}\|_1 ] & \leq \epsilon'  & & \text{and} & &  
    \E \sum_{e \in E, i \in e}  \| \nu_{e, i}^{\widehat \lambdad}\|_1^2 & \leq 2m (\epsilon')^2
\end{align*}
in $K = \frac{\sqrt{4 \eta }G(\eta)}{\epsilon'}$ iterations.
Let $\widehat\mu \in \LL_2$ be the projected version of $\mu^{\widehat \lambdad}$.
Taking the expectation of both sides of the result in Proposition~\ref{prop::approx-bound} gives us
\begin{align*}
    \E [\< C, \widehat \mu-\mu^*\>] & \leq \| C\|_\infty  \left( 16( m + n) d \E [\delta]  + 8m \epsilon' \right) \\
    & \quad + \frac{n \log d + 2 m \log d}{\eta},
\end{align*}
where $\E \left[ \delta \right]^2  \leq  \E [ \delta^2 ]  \leq \E  \sum_{e \in E, i \in e} \| \nu_{e, i}^{\widehat \lambdad}\|_1^2   \leq 2m (\epsilon')^2$.
Then we can conclude
\begin{align*}
        \E [ \< C, \widehat \mu-\mu^*\>] & \leq   16\sqrt{2m} ( m + n) d \| C\|_\infty \epsilon'  \\
        & \quad + 8m \| C\|_\infty \epsilon'  + \frac{n \log d + 2 m \log d}{\eta} \\
        & \leq 24\sqrt{2m} ( m + n) d \| C\|_\infty \epsilon' \\
        & \quad + \frac{n \log d + 2 m \log d}{\eta}.
    \end{align*}
    Therefore, $\widehat \mu$ is expected $\epsilon$-optimal with $\eta$ as defined in the statement and $\epsilon' = \frac{\epsilon}{48\sqrt{2m} ( m + n) d \| C\|_\infty}$. Substituting these values into $K$ and $G(\eta)$ yields the result.
\end{proof}

\vspace{-.5cm}
\section{Rounding to Integral Solutions} \label{sec::rounding}

\begin{figure*}
	\centering
	\includegraphics[width=.32\textwidth]{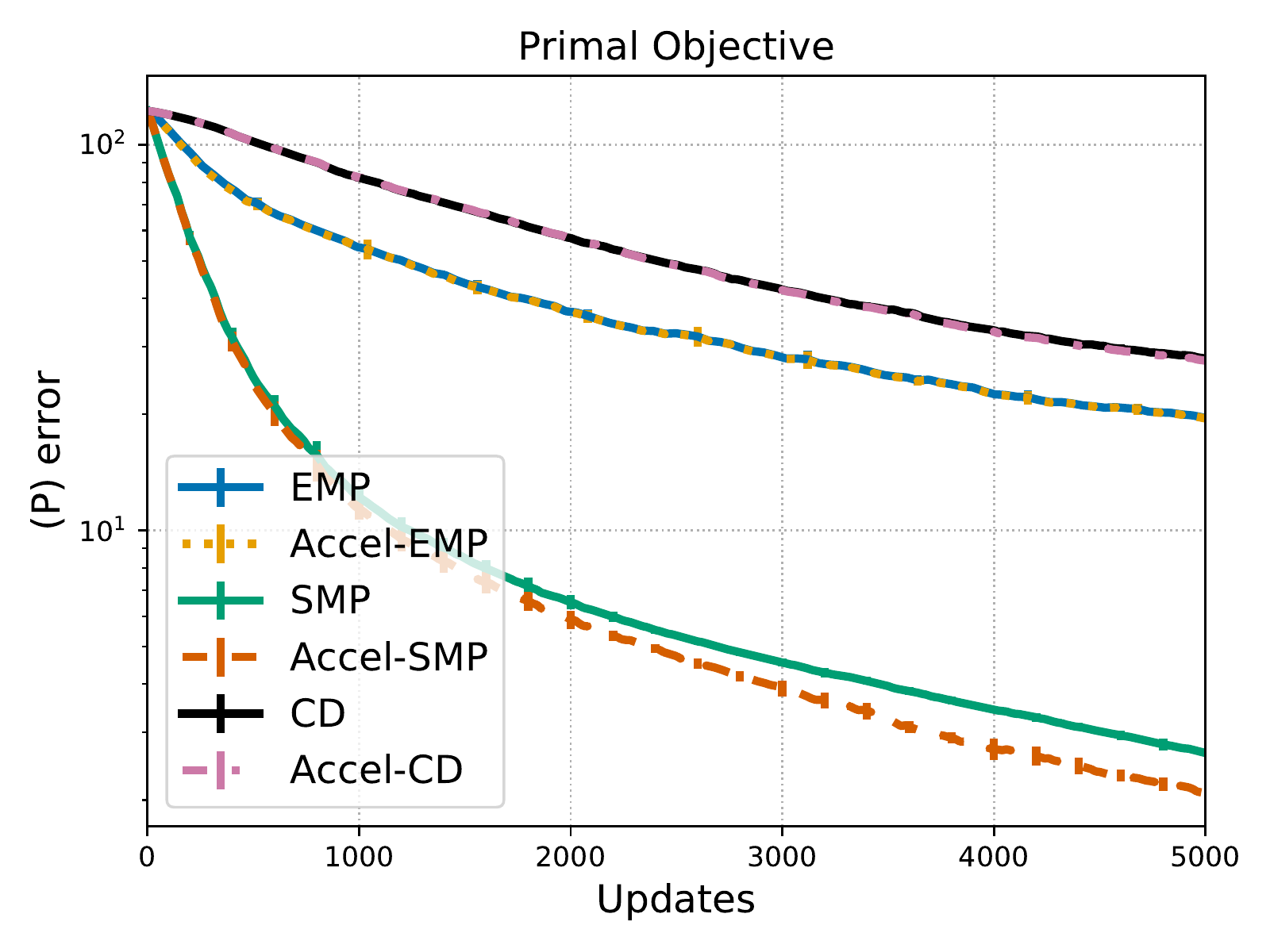}
	\includegraphics[width=.32\textwidth]{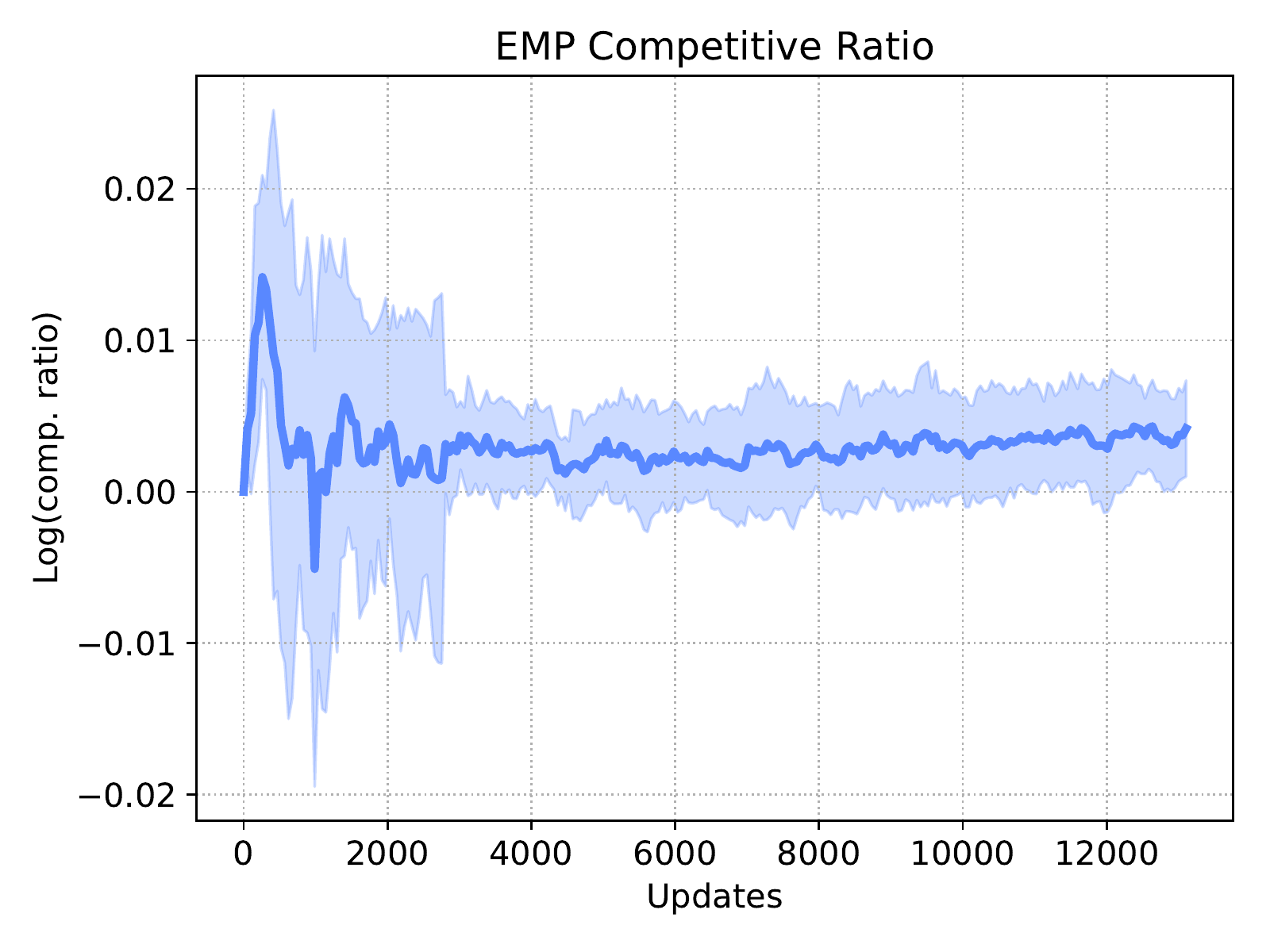}
	\includegraphics[width=.32\textwidth]{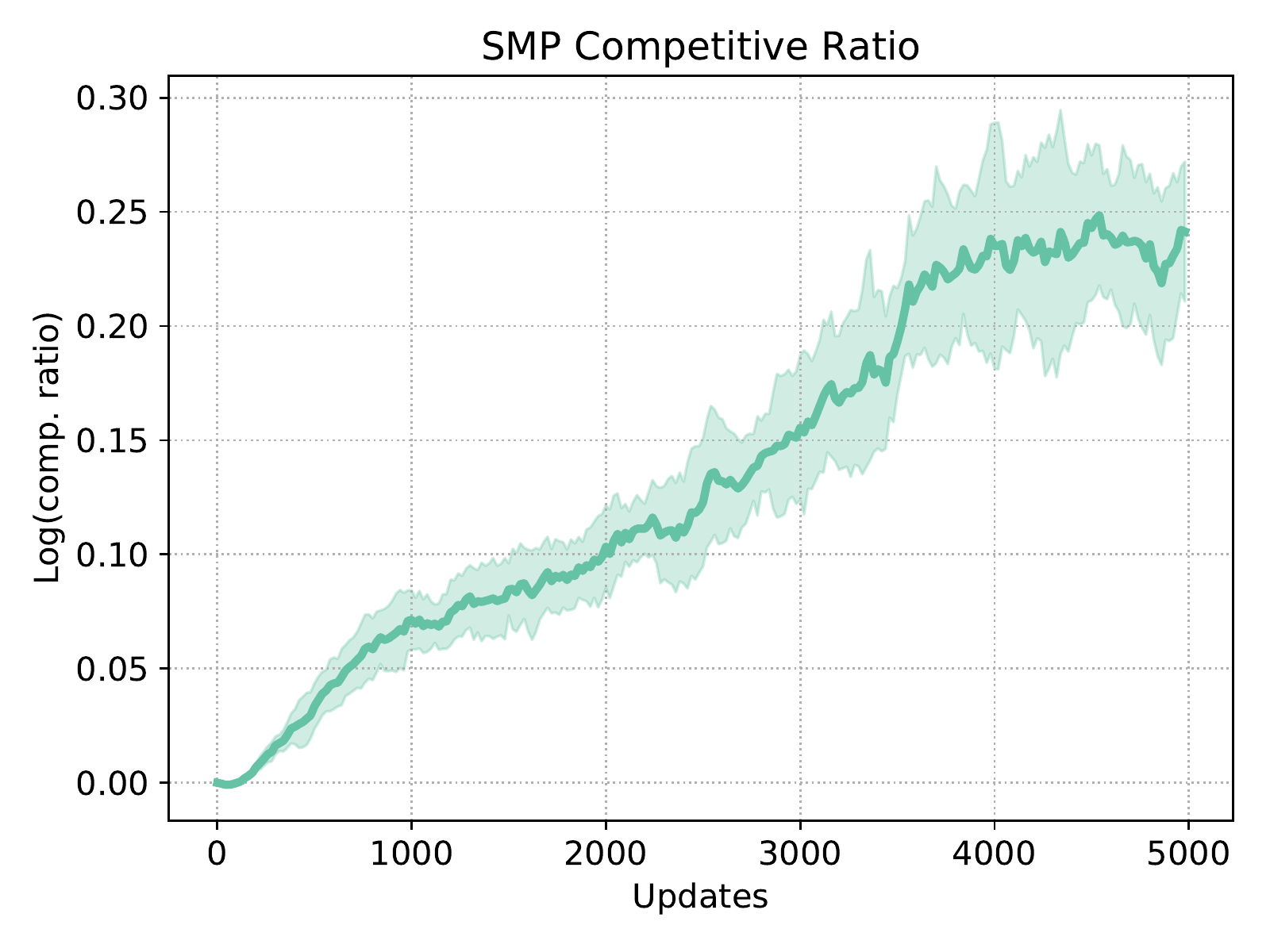}

	\caption{
	\textit{(Left column)}: The original primal objective (\ref{eq::original}) on a log scale is compared for the standard algorithms and their accelerated variants, as well as standard (accelerated) coordinate descent, over 10 trials on an Erd\H{o}s-R\'enyi random graph with $n = 100$. Error bars denote standard deviation. 
	\textit{(Center and right columns)}: The log-competitive ratio of \emp{} and \smp{} with respect to their accelerated variants, \aemp{} and \asmp{}, on the primal objective (\ref{eq::original}). 
\vspace{-.3cm}
}
	\label{fig:plot_curve}
\end{figure*}

Inspired by recent results regarding the approximation error achieved by entropy regularization in linear programming \cite{weed2018explicit}, we are able to derive rounding guarantees for our algorithms under the assumption that the LP relaxation is tight and the solution is unique. We use a simple rounding scheme: for any $\mu$ that may not lie in $\LL_2$, 
$(\mathrm{round}(\mu))_i = \arg \max_x \mu_i(x)$. The main challenge in achieving the results we present here is surpassing the difficulty in obtaining bounds for the $l_1$ distance between $\widehat{\mu} = \mu^{\widehat{\lambdad}}$, the candidate solution obtained from the final iterate $\widehat{\lambdad}$ resulting from our algorithms, and $\mu^*$, the optimal solution of (\ref{eq::original}). Define $\mu^{\lambdad^*}$ be the solution to the regularized problem where $\lambdad^* \in \Lambda^*$.

We proceed in two steps. First, we bound the approximation error $\| \mu^{\lambdad^*} - \mu^*\|_1$ using recent results on the quality of solutions for entropy regularized $\LL_2$ \citep{lee2019approximate}. Then we bound the optimization error of $\widehat \mu$ using the results derived in the previous section.
The proof and a comparison to standard \emp{} are in Appendix~\ref{section::rounding_guarantees}.
 Let $\deg$ denote the maximum degree of the graph $G$ and define $\Delta$ as the suboptimality gap of the LP over $\LL_2$.

\begin{restatable}{theorem}{throunding}

Let $\delta \in (0,1)$. If $\LL_2$ is tight for potential vector $C$ and there exist a unique solution to the MAP problem and $\eta = \frac{ 16(m+n)(\log(m+n)+ \log(d))}{\Delta}$
then with probability $1-\delta$ in at most
\begin{equation*}
O \left( \frac{d^3 m^7 \deg^2 \| C\|_\infty^2 \log^2 dm}{\delta \Delta} \right)
\end{equation*}
iterations of \emph{\aemp{}}, $\mathrm{round}(\widehat{\mu})$ is the optimal MAP assignment.
\end{restatable}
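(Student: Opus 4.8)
The plan is to show that, with the stated $\eta$ and iteration budget, the vertex marginals of the returned candidate $\widehat\mu = \mu^{\widehat\lambdad}$ of \aemp{} are close enough in $\ell_1$ to those of the (integral) LP optimum $\mu^*$ that argmax rounding cannot err. Since $\LL_2$ is tight and the MAP solution is unique, $\mu^*$ is the integral vertex of $\LL_2$, i.e. $\mu^*_i(x) = \1\{x = x^*_i\}$ for the MAP labeling $x^*$, and $\mathrm{round}(\widehat\mu)_i = \argmax_x \mu^{\widehat\lambdad}_i(x)$ equals $x^*_i$ as soon as $\|\mu^{\widehat\lambdad}_i - \mu^*_i\|_1 < 1$, since that forces $\mu^{\widehat\lambdad}_i(x^*_i) > 1/2$. (Note \proj{} leaves vertex marginals untouched, so the claim is the same whether one rounds $\mu^{\widehat\lambdad}$ or its projection onto $\LL_2$.) I would then interpose the exact solution $\mu^{\lambdad^*}$ of the regularized problem (\ref{eq::primal}) at parameter $\eta$ and use the triangle inequality
\[
\max_{i\in V}\|\mu^{\widehat\lambdad}_i - \mu^*_i\|_1 \;\le\; \max_{i\in V}\|\mu^{\widehat\lambdad}_i - \mu^{\lambdad^*}_i\|_1 \;+\; \max_{i\in V}\|\mu^{\lambdad^*}_i - \mu^*_i\|_1,
\]
bounding the approximation term and the optimization term each by $1/4$.

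For the approximation term I would invoke the analysis of entropy-regularized $\LL_2$ of \citet{lee2019approximate} (in the spirit of \citet{weed2018explicit} on the entropic penalty in linear programming): when the LP over $\LL_2$ is tight with a unique optimum and suboptimality gap $\Delta$, the regularized optimum $\mu^{\lambdad^*}$ converges to $\mu^*$ at a rate exponential in $\eta\Delta$, up to a polynomial prefactor in $m,n,d$ and the degree. The choice $\eta = \frac{16(m+n)(\log(m+n)+\log d)}{\Delta}$ is calibrated exactly so that $e^{-\Theta(\eta\Delta)}$ swamps that prefactor, forcing $\max_i\|\mu^{\lambdad^*}_i - \mu^*_i\|_1 \le 1/4$. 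This is the only place the tightness/uniqueness assumption enters, and — crucially — because the exponential rate only needs $\eta\Delta$ to exceed a logarithmic quantity, $\eta$ scales like $\Delta^{-1}$ up to logs, which is ultimately what produces the $\Delta^{-1}$ in the iteration count rather than the $\Delta^{-2}$ that a naive appeal to Theorem~\ref{th::accel} with $\epsilon\sim\Delta$ would give.

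For the optimization term I would start from the dual convergence guarantee $\E[L(\widehat\lambdad) - L(\lambdad^*)] \le G(\eta)^2/(K+2)^2$ of Lemma~\ref{lem::acc-dual-conv} and convert it into an $\ell_1$ bound on the vertex marginals. The clean way uses that $L$ is (up to the $1/\eta$ factor) a sum of block log-partition functions whose gradients are exactly the recovered primal blocks $\mu^{\lambdad}$ (Proposition~\ref{prop::dual-obj}); since $\nabla L(\lambdad^*) = 0$ by dual optimality, the dual gap equals a Bregman divergence that decomposes into block KL divergences between $\mu^{\lambdad^*}$ and $\mu^{\widehat\lambdad}$, and Pinsker's inequality yields, up to an $O(\eta)$ factor, $\sum_{i\in V}\|\mu^{\widehat\lambdad}_i - \mu^{\lambdad^*}_i\|_1^2 \lesssim \eta\,(L(\widehat\lambdad) - L(\lambdad^*))$; an alternative route, closer to the rest of the paper, passes through the slack bounds implied by Lemma~\ref{lem::emp-improvement} and a projection/transportation-polytope argument analogous to Proposition~\ref{prop::approx-bound}, at the cost of extra factors of $d$, $\deg$, and $m$. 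Either way, the vertex-marginal optimization error is $\le 1/4$ once $L(\widehat\lambdad) - L(\lambdad^*)$ is below a $1/\mathrm{poly}(\eta,m,d,\deg)$ threshold; Jensen together with Markov's inequality upgrades the in-expectation bound of Lemma~\ref{lem::acc-dual-conv} to a $1-\delta$ high-probability statement at the price of a $\delta^{-1}$ factor, and substituting $\eta = \Theta((m+n)\log(dm)/\Delta)$ into $G(\eta)$ (whose dominant term is $\sqrt\eta\,\|C\|_\infty$, so that $\sqrt\eta\,G(\eta) = \Theta(\eta\,\|C\|_\infty\,md(m+n))$ scales as $\Delta^{-1}$) and collecting the polynomial factors gives the stated iteration bound.

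The main obstacle is precisely this conversion from a dual-objective (or slack-norm) bound to an $\ell_1$ bound on the primal vertex marginals with the right dependence on the graph parameters, while handling the fact that $\mu^{\widehat\lambdad}$ is feasible not for $\LL_2$ but only for the slack polytope $\LL_2^{\nu^{\widehat\lambdad}}$ of Definition~\ref{def::slack}: the Bregman/Pinsker identity sidesteps feasibility but requires care with the $\eta$-scaling in the primal recovery map of Proposition~\ref{prop::dual-obj}, whereas the slack-polytope route must track how the projection and the transportation-polytope structure inflate constants (this is where $\deg$ and the extra powers of $m$ and $d$ enter). Everything else — the reduction to vertex-marginal closeness, the triangle split, and the final Markov step — is routine once these two estimates are in hand.
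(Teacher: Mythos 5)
Your proposal is correct in outline but takes a genuinely different route from the paper at the key step. The paper's proof has the same two-part skeleton (approximation error of the regularized optimum, plus optimization error of the iterate), and like you it handles the first part by citing \citet{lee2019approximate} (their Theorem 1, which under tightness and uniqueness gives $\|\mu^{\lambdad^*}-\mu^*\|_1\le 1/8$ once $\eta\gtrsim(\mathcal{R}_1\log \mathcal{R}_1+\mathcal{R}_H)/\Delta$, which the stated $\eta$ satisfies). But for the optimization part the paper does not convert the dual gap into an $\ell_1$ bound on vertex marginals itself: it takes the expected slack bound $\E\|\nu_{e,i}^{\widehat\lambdad}\|_1\le\epsilon'$ after $K=\sqrt{4\eta}\,G(\eta)/\epsilon'$ iterations, upgrades it to a high-probability bound via Markov (this is where the $1/\delta$ comes from), and then invokes Theorem~3 of \citet{lee2019approximate}, an external rounding guarantee stated directly in terms of the slack norms, whose admissible threshold $\epsilon< O(d^{-2}m^{-2}\deg^{-2}\max(1,\eta\|C\|_\infty)^{-1})$ is exactly what produces the $d^3m^7\deg^2\|C\|_\infty^2$ factors in the final count. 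Your primary route — writing $L(\widehat\lambdad)-L(\lambdad^*)$ as a Bregman divergence of the dual, decomposing it into per-vertex and per-edge KL terms scaled by $1/\eta$ (using $\nabla L(\lambdad^*)=0$), and applying Pinsker — is self-contained, sidesteps the slack polytope and the external Theorem~3 entirely, and if carried out carefully (the $\eta$-scaling in the primal recovery map of Proposition~\ref{prop::dual-obj} must be tracked, and Markov applied to the dual gap rather than the slacks) it would in fact yield a \emph{smaller} iteration count, roughly $\widetilde O(dm^3\|C\|_\infty/(\Delta\sqrt{\delta}))$, with no $\deg$ dependence; since the theorem asserts an $O(\cdot)$ upper bound this still proves the statement, though your closing claim that collecting factors "gives the stated iteration bound" is loose — the stated $d^3m^7\deg^2/\delta$ factors are artifacts of the paper's slack-norm-plus-Theorem-3 path, not of yours. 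Your fallback route (slack bounds plus a projection argument) is the one closest in spirit to what the paper actually does, except that the paper outsources that conversion to prior work rather than redoing it.
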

\vspace{-.3cm}

\section{Numerical Experiments}

 Our goal now is to understand the empirical differences between the above algorithms and also where certain theoretical guarantees can likely be improved, if at all. To this end, we compare the convergence rates of \emp{} and \smp{} and their accelerated variants on several synthesized Erd\H{o}s-R\'enyi random graphs.  First, we constructed a graph with $n = 100$ vertices and then generated edges between each pair of vertices with probability $1.1 \frac{\log n }{n}$. We considered the standard multi-label Potts model with $d = 3$ labels. The cost vector $C$ was initialized randomly in the following way: $C_i(x_i)  \sim \unif([-0.01, 0.01])$, $\forall x_i \in \chi$ and $C_{ij}(x_i, x_j) \sim \unif(\{-1.0, 1.0\})$,  $\forall x_i, x_j \in \chi$.

 We consider two different metrics.
(1) The first is the original primal objective value (\ref{eq::original}). This metric computes the objective value of the projection $\widehat \mu = \proj(\mu^{\lambdad}, 0)$.
(2) The second reports the log-competitive ratio between the standard and accelerated variants. The competitive ratio is computed as $\log \left( \frac{\< C, \widehat\mu_{\emp} - \mu^* \>  }{ \<C, \widehat \mu_{\aemp} - \mu^* \>}  \right)$, where $\widehat \mu_{\emp}$ and $\widehat \mu_{\aemp}$ are the projections due to \proj{}. Positive values indicate that the accelerated variant has lower error.
We implemented the four message passing algorithms exactly as they are described in  Algorithms~\ref{alg::semp}, \ref{alg::accel-emp}, and \ref{alg::accel-smp}. We also implemented block-coordinate descent and its accelerated variant as baselines \citep{lee2013efficient} with a step size of $1/\eta$. 
Each algorithm used $\eta = 1000$ over 10 trials, measuring means and standard deviations. We computed the ground-truth optimal value of (\ref{eq::original}) using a standard solver in CVXPY.

Figure~\ref{fig:plot_curve} depicts convergence on the primal objective in the left column. \smp{} achieves convergence in the fewest iterations, and \emp{} converges faster than coordinate descent. Interestingly, we find that the accelerated variants, including accelerated coordinate descent, appear to have marginal improvement on this metric. However, the competitive ratio figures confirm that the accelerated variants are consistently faster, especially for \smp{}. 
 These results suggest that, at least for this particular problem, the upper bounds for standard algorithms may be overly conservative.
It would be interesting to investigate tighter bounds for the standard algorithms in future work. Further details can be found in the appendix.
\vspace{-.3cm}

\section{Conclusion}
We analyze the convergence of message passing algorithms on the MAP inference problem over $\LL_2$. In addition to providing a novel rate of convergence rate for standard schemes derived from entropy regularization, we show that they can be directly accelerated in the sense of Nesterov with significant theoretical improvement.
In future work it would be interesting to consider accelerating greedy message passing algorithms; however, \citet{lu2018accelerating} suggest that, despite empirical success, proving accelerated rates for greedy methods is an open question even in the basic coordinate descent case.
The tightness of the presented guarantees is also an open question, motivated by the empirical results here. Finally, we conjecture that reductions from the active area of optimal transport could yield novel, faster algorithms.



\bibliography{main.bib}
\bibliographystyle{icml2020}

\newpage
\onecolumn

\appendix

\onecolumn

\section{Numerical Experiments Details}

All experiments were run on an Intel Core i7 processor with 16 GB RAM.
 We consider two different metrics.

\begin{enumerate}
\item The first metric is the original primal objective value (\ref{eq::original}), which is the actual value we wish to minimize. Since the optimization is done in the dual variables, we use \proj{} to project the point $\mu^{\lambdad}$ onto $\LL_2$ and report the projection's function value, additively normalized with respect to the optimal value. 
\item The second metric emphasizes the first by reporting the log-competitive ratio between the standard and accelerated variants of the the algorithms. The competitive ratio is computed as $\log \left( \frac{\< C, \widehat\mu_{\emp} - \mu^* \>  }{ \<C, \widehat \mu_{\aemp} - \mu^* \>}  \right)$, where $\widehat \mu_{\emp}$ and $\widehat \mu_{\aemp}$ are the projections due to \proj{} of the outputs of \emp{} and \aemp{}, respectively, and $\mu^*$ is a minimizer over $\LL_2$. The same is computed for \smp{} and \asmp{}. Thus, positive values at a given time indicate that the accelerated variant has lower error on the original objective.
\end{enumerate}

\begin{wrapfigure}{R}{0.4\textwidth}
\centering
	\includegraphics[width=.4\textwidth]{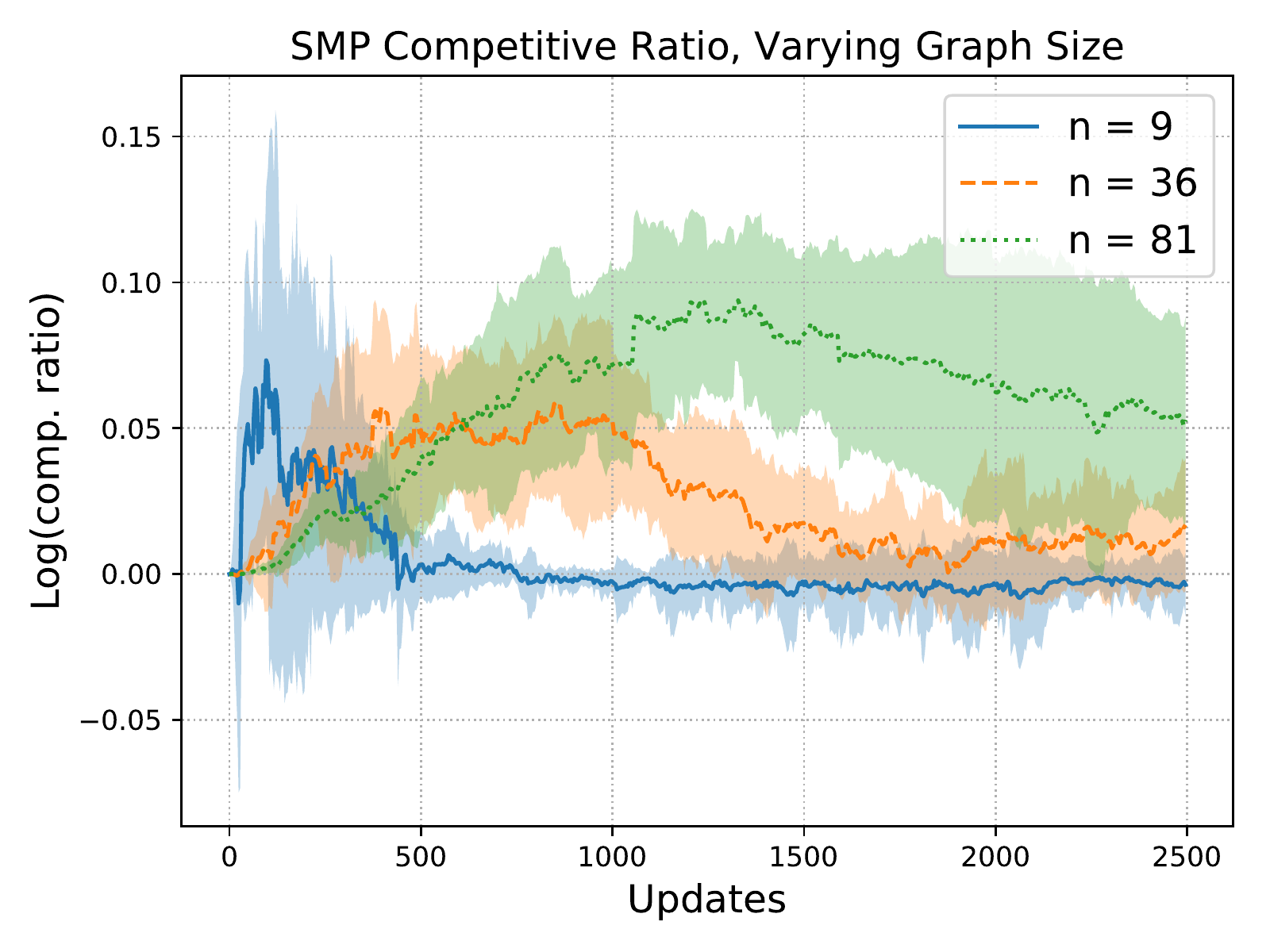}
\caption{
	The competitive ratio of \smp{} with respect to \asmp{} on (\ref{eq::original}) is compared across random graphs of varying sizes $n = 9$, $36$, and $81$.
	}
	\label{fig:sizes}
\end{wrapfigure}

\paragraph{Message Passing Algorithms}

We implemented the message passing algorithms with their update rules exactly as prescribed in Algorithms~\ref{alg::semp}, \ref{alg::accel-emp}, and \ref{alg::accel-smp}. The algorithms are compared with respect to the number of updates (i.e. iterations); however, we note that the cost of each update is greater for \smp{} and \asmp{} since they both require computing slacks of the entire neighborhood surrounding a give vertex. 

\paragraph{Block-Coordinate Methods}
In addition to studying the empirical properties of the message passing algorithms, we present a supplementary empirical comparison with block-coordinate descent and its accelerated variant \citep{lee2013efficient}. The purpose of this inclusion is to standardize how much we expect acceleration to improve the algorithms. We chose a stepsize of $1 / \eta$. We note that each update in block-coordinate descent is essentially as expensive as an update of \emp{}.

%

This choice of cost vectors $C$ ensures that vertices cannot be trivially set to their minimal vertex potentials to achieve a reasonable objective value; the MAP algorithm must actually consider pairwise interactions. We evaluated each of the four algorithms on the same graph with $\eta = 1000$. Due to the inherent stochasticity of the randomized algorithms, we ran each one 10 times and took the averages and standard deviations. Since the graphs are small enough, we computed the ground-truth optimal value of (\ref{eq::original}) using a standard LP solver in CVXPY.

In order to understand the effect of the graph size on the competitive ratio between the standard and accelerated algorithms, we generated random graphs of sizes $n = 9$, $36$, and $81$ with the same randomly drawn edges and cost vectors. We ran \smp{} and \asmp{} for a fixed number of iterations over 10 random trials and computed the average log competitive ratio. Again, we used $\eta = 1000$. Figure~\ref{fig:sizes} shows that \asmp{} runs faster in all regimes, especially at beginning, and then the performance improvement eventually tapers after many iterations.




\section{Omitted Proofs and Derivations of Section~\ref{sec::setup}}

\subsection{Proof of Proposition~\ref{prop::dual-obj}}

Recall that the primal objective is to solve the following:
	\begin{align}\label{eq:obj}\tag{obj}
	\minimize \quad  \< C, \gam\> - \frac{1}{\eta} H(\gam) \quad  \st \quad \gam \in \LL_2,
	\end{align}
	where
	\begin{align*}
	H(\gam) = - \sum_{i \in V} \sum_{x_i \in \chi}  \gam_i(x_i)( \log \gam_i(x_i) - 1) - \sum_{e \in E} \sum_{x_e \in \chi^2} \gam_{e} (x_e) (\log \gam_e(x_e) - 1).
	\end{align*}
	Though it is not strictly necessary, we will also include a normalization constraint on the pseudo-marginal edges, which amounts to $\sum_{x_e} \gam_e(x_e) = 1$ for all $e \in E$. The Lagrangian is therefore
	\begin{align*}
	\L(\gam, \lambdad, \xi) = \<  C, \gam\> -\frac{1}{\eta} H(\gam)  + \sum_{e \in E, i \in e} \lambdad_{e,i}^\top (S_{e, i} - \gam_i) + \sum_{i \in V} \xi_i (\sum_{x_i} \gam_i(x_i) - 1) + \sum_{e\in E} \xi_e ( \sum_{x_e} \gam_e(x_e) - 1)
	\end{align*}
	Taking the derivative w.r.t~$\gam$ yields
	\begin{align}
	\frac{\partial \L(\gam,\lambdad, \xi) }{\partial \gam_i(x_i)} & =  C_{i}(x_i) + \frac{1}{\eta} \log \gam_i(x_i) + \xi_i - \sum_{e \in N_i} \lambdad_{e, i}(x_i) \\
	\frac{\partial \L(\gam,\lambdad, \xi) }{\partial \gam_e(x_e)} & =   C_{e}(x_e) + \frac{1}{\eta}\log \gam_e(x_e) + \xi_{e} + \sum_{i \in e} \lambdad_{e, i}((x_e)_i).
	\end{align}
	Here we are using $(x_e)_i$ to denote selecting the label associated with endpoint $i \in V$ from the pair of labels $x_e \in \chi^2$. The necessary conditions for optimality imply that
	\begin{align}
	\mu_{i}^{\lambdad, \xi}(x_i) & = \exp\left(-\eta C_{i} (x_i)  - \eta \xi_i + \eta \sum_{e \in N_i} \lambdad_{e, i}(x_i) \right) \\
	\mu_{e}^{\lambdad, \xi}(x_e) & = \exp\left(-\eta C_{e} (x_e)  - \eta \xi_e - \eta \sum_{i \in e} \lambdad_{e, i}((x_e)_i) \right),
	\end{align}
	where we use the superscripts to show that these optimal values are dependent on the dual variables, $\lambdad$ and $\xi$. The dual problem then becomes
	\begin{align*}
	\maximize_{\lambdad, \xi} \quad -\frac{1}{\eta}\sum_{i}\sum_{x_i} \mu_i^{\lambda, \xi}(x_i) - \frac{1}{\eta}\sum_{c}\sum_{x_c} \mu_c^{\lambdad, \xi}(x_c)  - \sum_{i \in V}  \xi_i - \sum_{e \in E}  \xi_e
	\end{align*}
	Note that we can solve exactly for $\xi$ as well, which simply normalizes the individual pseudo-marginals for each edge and vertex so that
	\begin{align*}
	    \xi_i & = \frac{1}{\eta} \log \sum_{x_i} \exp\left(-\eta C_{i} (x_i)  - \eta \xi_i + \eta \sum_{e \in N_i} \lambdad_{e, i}(x_i)\right) \\
	    \xi_e & = \frac{1}{\eta} \log \sum_{x_e}\exp\left(-\eta C_{e} (x_e)  - \eta \xi_e - \eta \sum_{i \in e} \lambda_{e, i}((x_e)_i)\right) 
	\end{align*}
Plugging this into $\mu^{\lambdad, \xi}$ ensures that each local vertex and edge distribution is normalized to $1$. Therefore the final objective becomes
\begin{align*}
    \minimize_{\lambdad} \quad & \frac{m + n}{\eta} + \frac{1}{\eta} \sum_{i \in V} \log \sum_{x_i} \exp\left(-\eta C_{i} (x_i)  - \eta \xi_i + \eta \sum_{e \in N_i} \lambdad_{e, i}(x_i)\right) \\
    & \quad + \frac{1}{\eta} \sum_{e \in E} \log \sum_{x_e}\exp\left(-\eta C_{e} (x_e)  - \eta \xi_e - \eta \sum_{i \in e} \lambda_{e, i}((x_e)_i)\right),
\end{align*}
and we can ignore the constant.

\subsection{Entropy-Regularized Message Passing Derivations}

In this section, we derive the standard message passing algorithms that will be the main focus of the paper. Both come from simply computing the gradient and choosing additive updates to satisfy the optimality conditions directly.

\propempupdate*

\begin{proof}
    	From (\ref{eq::derivative}), the partial gradient of $L$ with respect to coordinate $(e, i, x_i)$ yields the following necessary and sufficient optimality condition:
\begin{align*}
S_{e, i}^{\lambdad}(x_i) = \mu^{\lambdad}_i(x_i).
\end{align*}
Suppose that $\lambdad'$ satisfies this condition, and thus minimizes $L_{e, i}(\cdot; \lambdad)$. We can decompose $\lambdad'$ at coordinate $(e, i, x_i)$ additively as $\lambdad'_{e, i}(x_i) = \lambdad_{e, i}(x_i) + \delta_{e, i}(x_i)$. From the definition of $\mu^{\lambdad}$, the optimality condition becomes
\begin{align*}
\exp( 2 \eta \delta_{c, i}(x_i)  ) = \frac{\sum_{x_j \in \chi} \mu^{\lambdad}_e(x_i, x_j)}{\mu^{\lambdad}_i(x_i)}
\end{align*}
Rearranging to find $\delta_{e, i}(x_i)$ and then substituting into $\lambdad'_{e, i}(x_i)$ yields the desired result.
\end{proof}

Now, we can derive a lower bound on the improvement on the dual objective $L$ from applying an update of \emp{}.
\lemempimprovement*

\begin{proof}
	Let $\tilde \mu$ denote the unnormalized marginals. From the definition of $L$,
	\begin{align*}
	L(\lambdad) - L(\lambdad') & =  \frac{1}{\eta}\log  \sum_{x_i} \exp\left( -\eta C_{i}(x_i) + \eta \sum_{e \in N_i} \lambdad_{e, i}(x_i)\right) + \frac{1}{\eta}\log \sum_{x_e} \exp\left( -\eta C_{e}(x_e) - \eta \sum_{i \in e} \lambdad_{e, i}(x_i)\right)  \\
	& \quad  - \frac{1}{\eta}\log  \sum_{x_i} \exp\left( -\eta C_{i}(x_i) + \eta \delta_{e, i}(x_i) + \eta \sum_{e \in N_i} \lambdad_{e, i}(x_i)\right) \\
	& \quad  - \frac{1}{\eta}\log \sum_{x_e} \exp\left( -\eta C_{e}(x_e) - \eta \delta_{e, i}(x_i) -\eta  \sum_{i \in e} \lambdad_{e, i}(x_i)\right)
	\end{align*}
	Define $\tilde \mu_{i}^{\lambdad}(x_i) = \exp\left( -\eta C_{i}(x_i) + \sum_{e \in E} \lambdad_{e, i}(x_i)\right)$ and $\tilde \mu_{e}(x_e) = \exp\left( -\eta C_{e}(x_e) - \sum_{i \in e} \lambdad_{e, i}(x_i)\right)$. The cost difference can then be written as
	\begin{align*}
	L(\lambdad) - L(\lambdad') & = - \frac{1}{\eta} \log \left(\sum_{x_i} \frac{\tilde \mu_{i}^{\lambdad}(x_i)e^{\eta\delta_{e, i}(x_i)}}{\sum_{x_i'}\tilde \mu_{i}^{\lambdad}(x_i')} \right) - \frac{1}{\eta} \log \left( \sum_{x_e} \frac{\tilde \mu_{e}^{\lambdad}(x_e)e^{-\eta\delta_{e, i}(x_c)}}{\sum_{x_e'} \tilde \mu_{e}^{\lambdad}(x_e')} \right) \\
	& = - \frac{1}{\eta}\log \left(\sum_{x_i} \mu_i^{\lambdad}(x_i) \sqrt{\frac{S_{e, i}(x_i)}{\mu_{i}^{\lambdad}(x_i)}}\right) - \frac{1}{\eta}\log \left( \sum_{x_e} \mu_{e}^{\lambdad}(x_e)\sqrt {\frac{\mu_{i}^{\lambdad}(x_i)  }{ S_{e, i}^{\lambdad} (x_i)}}  \right) \\
	& = -\frac{2}{\eta} \log \left( \sum_{x_i} \sqrt { S_{e, i}^{\lambdad}(x_i) \mu_i^{\lambdad}(x_i) } \right)
	\end{align*}
	Note that the right-hand contains the Bhattacharyya coefficient $BC(p, q) := \sum_{i} \sqrt{p_i q_i}$ which has the following relationship with the Hellinger distance: $BC(p, q) = 1 - h^2(p, q)$.
	
	The inequality then follows from exponential inequalities:
	\begin{align*}
	L(\lambdad) - L(\lambdad')  = - \frac{2}{\eta}\log (1 - h^2(S_{e, i}^{\lambdad}, \mu_i^{\lambdad})) \geq -\frac{2}{\eta} \log \exp( {-h^2(S_{e, i}^{\lambdad}, \mu_i^{\lambdad})) } = \frac{2}{\eta} h^2(S_{e, i}^{\lambdad}, \mu_i^{\lambdad})
	\end{align*}
	Furthermore, the Hellinger inequality gives us
	\begin{align*}
	\frac{1}{4}\| p - q\|_1^2 \leq 2 h^2(p, q).
	\end{align*}
	We conclude the result by applying this inequality with $p = S_{e, i}^{\lambdad}$ and $q = \mu_i^{\lambdad}$.
\end{proof}

\propsmpupdate*

\begin{proof}
The optimality conditions require, for all $e \in N_i$, 
	\begin{align*}
	S_{e, i}^{\lambdad'}(x_i) = \gam_{i}^{\lambdad'}(x_i),
	\end{align*}
	which implies that
	\begin{align*}
	S_{e, i}^{\lambdad}(x_i) = \gam_{i}^{\lambdad}(x_i)\exp\left(\eta \delta_{e, i}(x_i) + \eta\sum_{e' \in N_i} \delta_{e' , i}(x_i) \right),
	\end{align*}
	where $\delta_{e, i}(x_i) = \lambdad'_{e, i}(x_i) -\lambdad_{e, i}(x_i)$. Then, let $e_1, e_2 \in N_i$. At optimality, it holds that
	\begin{align*}
	\frac{S_{e_1, i}^{\lambdad}(x_i)}{S_{e_2, i}^{\lambdad}(x_i)} = \frac{\exp( {\eta\delta_{e_1, i}(x_i)})} {\exp( {\eta\delta_{e_2, i}(x_i)}) }
	\end{align*}
	Substituting each $\delta_{e_2, i}(x_i)$ in terms of $\delta_{e_1, i}(x_i)$, we then have
	\begin{align*}
	S_{e, i}^{\lambdad}(x_i) =\gam_{i}^{\lambdad}(x_i) \exp({\eta\delta_{e, i}(x_i)}) \prod_{e'\in N_i}  \frac{S_{e', i}^{\lambdad}(x_i)}{S_{e, i}^{\lambdad}(x_i)}\exp({\eta\delta_{e, i} (x_i) }).
	\end{align*} 
	Collecting and then rearranging the above results in
	\begin{align*}
    \exp({(|N_i|+1)\eta\delta_{e, i}(x_i)}) & = (S_{e, i}^{\lambdad}(x_i))^{|N_i|+1}  \left(\gam_{i}^{\lambdad}(x_i) \prod_{e' \in N_i} {S_{e', i}^{\lambdad}(x_i)}\right)^{-1}.
	\end{align*}
	In additive form, the update equation is
	\begin{align*}
	\delta_{e, i}(x_i) = \frac{1}{\eta}\log S_{e, i}^{\lambdad} (x_i) - \frac{1}{\eta(|N_i| + 1)} \log \left( \gam_{i}^{\lambdad}(x_i)\prod_{e' \in N_i} {S_{e', i}^{\lambdad}(x_i)} \right).
	\end{align*}
\end{proof}

\section{Omitted Proofs of Technical Lemmas of Section~\ref{sec::main-results}}\label{sec::technical-proofs}

\subsection{Proof of Random Estimate Sequences Lemma~\ref{lem::est-seq}}

\lemestseq*

\begin{proof}
	First we show that it is an estimate sequence by induction. Clearly this holds for the base case $\phi_0$ when $\delta_0 = 1$. Then, we assume the inductive hypothesis that $\E[\phi_k(\lambdad)] \leq (1 - \delta_k) L(\lambdad)+ \delta_k \phi_0(\lambdad)$. From, here we can show
	\begin{align*}
	\E [ \phi_{k + 1}(\lambdad)] & = (1 - \theta_k) \E [\phi_k(\lambdad)] + \theta_k \E \left[ L(\y^{(k)}) -  \< q \nu_{e_k, i_k}^{\y^{(k)}}, \lambdad_{e_k, i_k} -  \y^{(k)}_{e_k, i_k}\>\right] \\
	& = (1 - \theta_k) \E [\phi_k(\lambdad)] + \theta_k \E \left[ L(\y^{(k)}) +  \< \nabla L(\y^{(k)}), \lambdad-  \y^{(k)}\>\right] \\
	& \leq (1 - \theta_k) ( (1 - \delta_k) L(\lambdad) + \delta_k \phi_0(\lambdad) ) + \theta_k L(\lambdad) \\
	& = (1 - \delta_{k + 1} ) L(\lambdad) +  \delta_{k + 1} \phi_0(\lambdad)
	\end{align*}
	The first line uses the definition of $\phi_{k + 1}$ and the second line uses the law of total expectation and the fact that $(e_k, i_k)$ is sampled uniformly. The inequality leverages the inductive hypothesis and convexity of $L$. From \citet[\S 2]{nesterov2018lectures}, we know that the definition of $\delta_k$ from $\theta_k$ ensures that $\delta_k \stackrel{k}{\rightarrow} 0$. Therefore, $\{\phi_k, \delta_k\}_{k = 0}^K$ is a random estimate sequence.

As noted, the identities are fairly standard \citep{nesterov2018lectures,lee2013efficient}. We prove each claim in order. 
\begin{itemize}
    \item From definition of $\phi_{k + 1}$, computing the second derivative of the combination shows that it is constant at $(1 - \theta_k) \gamma_k$.
    
    \item Computing the gradient with respect to block-coordinate $\lambdad_{e_k, i_k}$ of the combination shows, at optimality, we have
    \begin{align*}
        (1 - \theta_k) \gamma_k (\lambdad_{e_k, i_k}- \v^{(k)}_{e_k, i_k}) - q\theta_k \nu_{e_k, i_k}^{\y^{(k)}} = 0    
    \end{align*}
    which implies
    \begin{align*}
        \lambdad_{e_k, i_k} = \v^{(k)}_{e_k, i_k} + \frac{q\theta_k}{\gamma_{k + 1}} \nu_{e_k, i_k}^{\y^{(k)}}
    \end{align*}
    For any other block-coordinate $(e, i)$, the optimality condition simply implies $\lambdad_{e, i} = \v_{e, i}^{(k)}$.

    \item The last claim can be show by inserting the minimizer, $\v_{e_k, i_k}^{(k)}$, into $\phi_{k + 1}$. Therefore, we have
    \begin{align*}
        \omega_{k + 1} & := \min_{\lambdad} \phi_{k + 1} \\
                        & = \phi_{k + 1}(\v^{(k + 1)}) \\
                        & = (1 - \theta_k) \omega_k + \frac{\gamma_{k + 1}}{2}\| \v^{(k)} - \v^{(k + 1)}\|^2_2  + \theta_k L(\y^{(k)}) - \theta_k q \<\nu_{e_k, i_k}^{\y^{(k)}}, \v^{(k + 1)}_{e_k, i_k} - \y^{(k)}_{e_k, i_k} \> \\
                        & = (1 - \theta_k) \omega_k + \frac{(q\theta_k)^2}{2\gamma_{k  +1}}\| \nu_{e_k, i_k}^{\y^{(k)}}\|^2_2  + \theta_k L(\y^{(k)}) - \theta_k q \<\nu_{e_k, i_k}^{\y^{(k)}}, \v^{(k )}_{e_k, i_k} + \frac{q\theta_k}{\gamma_{k + 1} }\nu_{e_k, i_k}^{\y^{(k)}} - \y^{(k)}_{e_k, i_k} \> \\
                        & = (1 - \theta_k) \omega_k + \theta_k L (\y^{(k)}) - \frac{(q\theta_k)^2}{2\gamma_{k + 1} } \| \nu_{e_k, i_k}^{\y^{(k)}}\|^2_2 - q\theta_k \< \nu_{e_k, i_k}^{\y^{(k)}}, \v^{(k )}_{e_k, i_k} - \y^{(k)}_{e_k, i_k} \>
    \end{align*}
\end{itemize}
\end{proof}

\subsection{Proof of $\LL_2$ Projection Lemma~\ref{lem::proj} and $\LL_2^{\lambdad}$ Projection  Lemma~\ref{lem::proj-slack}}\label{sec::approx-bound-lemma-proofs}


\begin{restatable}{lemma}{lemproj}
\label{lem::proj}
 For $\lambdad \in \R^{r_D}$ and $\mu^{\lambdad} \in \LL_{2}^{\nu^{\lambdad}}$, Algorithm~\ref{alg::proj}  returns a point $\widehat \mu = \emph \proj(\mu^{\lambdad}, 0)$ such that $\widehat \mu_i = \mu_i^{\lambdad}$ for all $i \in V$ and
\begin{align*}
    \sum_{e \in E} \| \mu^{\lambdad}_e - \widehat \mu_e \|_1 \leq 2 \sum_{e \in E, i \in e}  \| \nu_{e, i}^{\lambdad} \|_1.
\end{align*}
\end{restatable}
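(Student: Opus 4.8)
The plan is to reduce the statement to a per-edge application of the rounding guarantee of \citet[Lemma 7]{altschuler2017near}, which controls the $\ell_1$ displacement caused by their rounding routine (Algorithm 2, called inside the loop of Algorithm~\ref{alg::proj}) in terms of how far the input matrix's marginals are from the target marginals of the transportation polytope it is rounded into.

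First I would record the two easy structural facts. Since the call in question is $\proj(\mu^{\lambdad}, 0)$, line~1 of Algorithm~\ref{alg::proj} sets $\widehat\mu_i = \mu_i^{\lambdad}$ for every $i \in V$, which is exactly the first conclusion. Next, for each edge $e = ij$, I would identify the two marginals of the input matrix $\mu_e^{\lambdad}$: by the recovery formula of Proposition~\ref{prop::dual-obj} together with the definition $S_{e,i}^{\lambdad}(x_i) = \sum_{x_j} \mu_e^{\lambdad}(x_i, x_j)$, the marginals of $\mu_e^{\lambdad}$ are precisely $S_{e,i}^{\lambdad}$ and $S_{e,j}^{\lambdad}$, and by Definition~\ref{def::slack} these equal $\mu_i^{\lambdad} + \nu_{e,i}^{\lambdad}$ and $\mu_j^{\lambdad} + \nu_{e,j}^{\lambdad}$ respectively (this is exactly what $\mu^{\lambdad} \in \LL_2^{\nu^{\lambdad}}$ asserts). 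Because the slack argument passed is $0$, the transportation polytope used for edge $e$ is $\U_d(\mu_i^{\lambdad}, \mu_j^{\lambdad})$, and both target marginals sum to $1$ (the recovered pseudo-marginals are normalized), so the rounding routine is well defined and returns $\widehat\mu_e \in \U_d(\mu_i^{\lambdad}, \mu_j^{\lambdad})$.

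Then I would invoke \citet[Lemma 7]{altschuler2017near} with input $F = \mu_e^{\lambdad}$ and desired marginals $(\mu_i^{\lambdad}, \mu_j^{\lambdad})$. Since the marginals of $F$ differ from the desired ones by exactly $\nu_{e,i}^{\lambdad}$ and $\nu_{e,j}^{\lambdad}$, the lemma gives
\begin{equation*}
\| \mu_e^{\lambdad} - \widehat\mu_e \|_1 \;\le\; 2\left( \|\nu_{e,i}^{\lambdad}\|_1 + \|\nu_{e,j}^{\lambdad}\|_1 \right).
\end{equation*}
Summing over all $e \in E$ and regrouping the right-hand side as a sum over edge–endpoint pairs $(e,i)$ yields $\sum_{e \in E} \| \mu_e^{\lambdad} - \widehat\mu_e \|_1 \le 2 \sum_{e \in E, i \in e} \|\nu_{e,i}^{\lambdad}\|_1$, as claimed.

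I do not expect a real obstacle here: the lemma is, as the text notes, essentially a direct transcription of the optimal-transport rounding bound. The only points needing care are matching the marginal convention of $\U_d(p,q)$ to the row/column convention used by Algorithm 2 of \citet{altschuler2017near}, and checking the normalization hypothesis of their Lemma 7 (equal marginal sums, here both $1$); everything else is bookkeeping over the edge set.
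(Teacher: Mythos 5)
Your proposal is correct and follows essentially the same route as the paper's proof: both reduce the claim to a per-edge application of Lemma~7 of \citet{altschuler2017near}, noting that the marginals of $\mu_e^{\lambdad}$ are $S_{e,i}^{\lambdad} = \mu_i^{\lambdad} + \nu_{e,i}^{\lambdad}$ so the displacement bound is $2\|\nu_{e,i}^{\lambdad}\|_1 + 2\|\nu_{e,j}^{\lambdad}\|_1$, and then sum over edges. Your write-up is somewhat more careful about the bookkeeping (normalization and marginal conventions), but the substance is identical.
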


\begin{proof}
    Since $\nu = 0$, we know that $\mu_i^{\lambdad} + \nu_{e, i} = \mu_i^{\lambdad} \in \Delta_n$ for all $i \in V$ and $e \in N_i$. For any $(i,j) = e \in E$, \proj{} applies Algorithm~2 of \citet{altschuler2017near} to generate $\widehat \mu_e$ from $\mu^{\lambdad}_e$ with the following guarantee due to \citet[Lemma 7]{altschuler2017near}:
    \begin{align*}
        \| \widehat \mu_e - \mu_e^{\lambdad} \|_1 \leq 2 \| S_{e, i}^{\lambdad} -  \mu_i^{\lambdad}\|_1 + 2 \| S_{e, j}^{\lambdad} - \mu_j^{\lambdad}\|_1 
    \end{align*}
    and $\widehat \mu_e \in \U_d (\mu_i^{\lambdad}, \mu_j^{\lambdad})$. Applying this guarantee for all edges in $E$ gives the result. 
\end{proof}

\begin{restatable}{lemma}{lemprojslack}
\label{lem::proj-slack}
Let $\mu \in \LL_2$ and $\lambdad \in \R^{r_D}$. Define $\delta = \max_{e \in E, i \in e} \|\nu_{e, i}^{\lambdad}\|_1$ There exists $\widehat \mu$ in the slack polytope $\LL_2^{\nu^{\lambdad}}$ such that
\begin{align*}
    \| \mu - \widehat \mu \|_1 \leq  16 (m + n) d \delta + 2 \sum_{e \in E, i \in e}  \| \nu_{e, i}^{\lambdad} \|_1
\end{align*}
\end{restatable}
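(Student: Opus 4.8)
\textbf{Proof proposal for Lemma~\ref{lem::proj-slack}.}

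The plan is to construct the desired point $\widehat \mu \in \LL_2^{\nu^{\lambdad}}$ explicitly by perturbing $\mu \in \LL_2$ so that its vertex marginals satisfy the shifted normalization/consistency requirements imposed by the slack vector, and then invoke Lemma~\ref{lem::proj} (i.e.\ the edge-repair step of \citet{altschuler2017near}) to fix up the edge marginals. The first observation is that the slack polytope $\LL_2^{\nu^{\lambdad}}$ differs from $\LL_2$ only in that, for each edge $e = ij$, the edge marginal $\mu_e$ must lie in $\U_d(\mu_i + \nu_{e,i}, \mu_j + \nu_{e,j})$ rather than $\U_d(\mu_i, \mu_j)$. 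So the natural strategy is: keep (a slightly corrected version of) the vertex marginals of $\mu$, and then for each edge build a valid coupling against the shifted targets.

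First I would handle the subtlety that $\mu_i + \nu_{e,i}$ need not be a probability vector: it is nonnegative-ish but its entries may be negative or may not sum to one, and moreover a single vertex $i$ participates in several edges each wanting a different shift. The cleanest fix is to define $\widehat\mu_i := \mu_i$ unchanged (the definition of $\LL_2^{\nu^{\lambdad}}$ only constrains $\widehat\mu_i \in \Sigma^d$, which $\mu_i$ already satisfies), and instead absorb all the discrepancy into the edge marginals. Concretely, for each edge $e = ij$, note that $\sum_{x_i}\mu_e(x_i,x_j) = \mu_i(x_i)$ since $\mu \in \LL_2$, whereas we need the row/column sums to be $\mu_i + \nu_{e,i}$ and $\mu_j + \nu_{e,j}$; the total $\ell_1$ "budget" of this mismatch per edge-endpoint is exactly $\|\nu_{e,i}^{\lambdad}\|_1$. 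I would then construct $\widehat\mu_e$ by starting from $\mu_e$ and applying the rounding/repair procedure of \citet[Algorithm 2, Lemma 7]{altschuler2017near} targeting the transportation polytope $\U_d(\mu_i + \nu_{e,i}, \mu_j + \nu_{e,j})$ — but this requires the targets to be genuine probability vectors, which forces a preliminary truncation-and-renormalization of $\mu_i + \nu_{e,i}$. Each such truncation/renormalization moves the target by $O(d\,\delta)$ in $\ell_1$ (truncating negative entries costs at most $\|\nu_{e,i}\|_1 \le \delta$ coordinatewise, and renormalizing a near-probability vector whose mass defect is at most $d\delta$ costs another $O(d\delta)$); this is where the $(m+n)d\delta$ term and the factor $16$ will come from, summing over the $O(m+n)$ relevant $(e,i)$ pairs with a little slack in the constants.

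The second, cleaner term $2\sum_{e,i}\|\nu_{e,i}^{\lambdad}\|_1$ is then exactly the cost of the edge-repair step itself, directly analogous to Lemma~\ref{lem::proj}: once the (truncated, renormalized) targets are fixed, the distance $\|\mu_e - \widehat\mu_e\|_1$ is bounded by twice the row/column-sum violations, which telescopes into $2\sum_{e\in E, i\in e}\|\nu_{e,i}^{\lambdad}\|_1$. Adding the two contributions, together with the fact that the vertex marginals are left untouched so contribute nothing, yields the claimed bound $\|\mu - \widehat\mu\|_1 \le 16(m+n)d\delta + 2\sum_{e\in E, i\in e}\|\nu_{e,i}^{\lambdad}\|_1$. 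Finally I would verify feasibility: the construction guarantees $\widehat\mu_i = \mu_i \in \Sigma^d$ and $\widehat\mu_e \in \U_d(\mu_i+\nu_{e,i}, \mu_j+\nu_{e,j})$ by the output guarantee of \citet{altschuler2017near}, hence $\widehat\mu \in \LL_2^{\nu^{\lambdad}}$.

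The main obstacle I anticipate is the bookkeeping around the shifted targets $\mu_i + \nu_{e,i}$ not being probability vectors: one must carefully argue that projecting them onto the simplex costs only $O(d\delta)$ in $\ell_1$ (not, say, $O(d)$ unconditionally), and that this projection step is compatible with the \citet{altschuler2017near} repair guarantee, which is stated for genuine transportation polytopes. Getting the dimension dependence ($d$ versus $d^2$) and the precise constant ($16$) right in this truncation argument — and making sure the per-edge costs sum correctly over the $O(m)$ edges and $O(n)$ vertices — is the delicate part; the rest is a direct adaptation of the projection lemma already proved.
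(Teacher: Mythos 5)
There is a genuine gap in your feasibility step. Membership in $\LL_2^{\nu^{\lambdad}}$ requires the edge marginal $\widehat\mu_e$ to have row/column sums equal \emph{exactly} to $\widehat\mu_i + \nu^{\lambdad}_{e,i}$ and $\widehat\mu_j + \nu^{\lambdad}_{e,j}$, where $\widehat\mu_i,\widehat\mu_j$ are the vertex marginals of the point you construct and the slack $\nu^{\lambdad}$ is fixed. If you keep $\widehat\mu_i = \mu_i$ and some entry of $\mu_i + \nu^{\lambdad}_{e,i}$ is negative, then $\U_d(\mu_i + \nu^{\lambdad}_{e,i}, \mu_j + \nu^{\lambdad}_{e,j})$ is empty (a nonnegative matrix cannot have a negative row sum), so no feasible $\widehat\mu_e$ exists at all with those vertex marginals. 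Your proposed remedy --- truncating and renormalizing the shifted targets before running the repair step of \citet{altschuler2017near} --- produces an edge marginal whose row/column sums are the \emph{truncated} targets, i.e.\ a point of a different slack polytope, not of $\LL_2^{\nu^{\lambdad}}$; your closing claim that ``$\widehat\mu_e \in \U_d(\mu_i+\nu_{e,i},\mu_j+\nu_{e,j})$ by the output guarantee'' therefore does not hold. Since the slack cannot be altered, the discrepancy has to be absorbed into the vertex marginals themselves, not into the targets.

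This is exactly what the paper's proof does, using two structural facts you did not exploit: each $\nu^{\lambdad}_{e,i}$ sums to zero (both $S^{\lambdad}_{e,i}$ and $\mu^{\lambdad}_i$ are normalized), so the only obstruction is entrywise, and $\mu^{\lambdad}_i + \nu^{\lambdad}_{e,i} = S^{\lambdad}_{e,i}$ is automatically a distribution. When $\delta \le \tfrac{1}{2d}$, it replaces $\mu_i$ by $\mu_i' = (1-\theta)\mu_i + \theta\,\unif(\chi)$ with $\theta = d\delta$, which forces every entry into $[\delta, 1-\delta]$; since each entry of $\nu^{\lambdad}_{e,i}$ is at most $\delta$ in magnitude and the slack sums to zero, $\mu_i' + \nu^{\lambdad}_{e,i}$ is a genuine element of the simplex, so \proj{} with slack $\nu^{\lambdad}$ applies, at a cost $\|\mu_i' - \mu_i\|_1 \le 2d\delta$ per vertex that also enters the edge-repair bound. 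When $\delta > \tfrac{1}{2d}$ the mixing weight $d\delta$ would be too large, and the proof instead takes the vertex marginals to be $\mu^{\lambdad}_i$, using the trivial bound $\|\mu_i - \mu^{\lambdad}_i\|_1 \le 2 \le 4d\delta$. The worst of the two cases yields the $16(m+n)d\delta$ term. Your outline does recover the $2\sum_{e \in E, i \in e}\|\nu^{\lambdad}_{e,i}\|_1$ contribution correctly via the analogue of Lemma~\ref{lem::proj}, but the $(m+n)d\delta$ term must arise from perturbing the vertex marginals (and counting that perturbation both in the vertex part of $\|\mu - \widehat\mu\|_1$ and in the edge repair), not from projecting the shifted targets onto the simplex.
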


\begin{proof}
    For convenience, we just write $\nu$ for the slack, dropping the notational dependence on $\lambdad$. We will proceed with this proof by constructing such a $\widehat \mu$ in two cases. We would like to show that the edge marginals $\mu_e$ can be modified to give $\widehat \mu \in \LL_2^{\nu}$. To do this, we aim to use Algorithm~\ref{alg::proj} to match $\widehat \mu_e$ to the modified marginals $\mu_i + \nu_{e,i}$ for every $e \in E$ and $i \in e$. As long as $\mu_i + \nu_{e, i} \in \Delta_d$, setting $ \mu_i' = \mu_i$ and $ \mu_e' = \mu_e$ and computing $\widehat \mu = \proj(\widetilde \mu, \nu )$ would return $\widehat \mu \in \LL_{2}^{\nu}$ that satisfies the condition by Lemma~\ref{lem::proj}.
    
    However, if $\mu_i + \nu_{e, i} \not \in \Delta_d$, then $\exists  \ x \in \chi$ such that $\mu_i(x) + \nu_{e, i}(x) \not \in [0, 1]$. Consider the case where $\delta \leq \frac{1}{2d}$. We aim to create a temporary marginal vector $\mu'$ which is made by  modifying $\mu_i$ appropriately until the slack can be added to $\mu'_i$ while maintaining a valid distribution. To do this, we set $ \mu_i'$ as the convex combination with the uniform distribution
    \begin{align*}
         \mu_i' = (1 - \theta) \mu_i + \theta \unif(\chi)
    \end{align*}
    for some $\theta \in [0, 1]$. Choosing $\theta = d\delta$ ensures that \begin{align*}
        \delta \leq  \mu'(x) \leq 1 - \delta \quad \forall \ x\in \chi,
    \end{align*}
    Furthermore, $ \mu_i' \in \Delta_d$ because $\Delta_d$ is convex and  we have\begin{align*}
    \| \mu_i' - \mu_i \|_1 & = \sum_{x \in \chi} \delta | 1 - d  \mu_{i}(x) | \\
    & \leq \sum_x  \delta + \delta d \mu_i(x) \\
    & = 2d\delta
    \end{align*} Then we set $\mu'_e = \mu_e$ for all $e\in E$. Using Algorithm~\ref{alg::proj}, we compute $\widehat \mu = \proj(\mu', \nu) \in \LL_2^{\nu}$. Together with Lemma~\ref{lem::proj}, we have that
    \begin{align*}
        \| \widehat \mu - \mu\|_1 & = \sum_{i \in V} \| \widehat \mu_i - \mu_i\|_1 + \sum_{e \in E} \| \widehat \mu_e - \mu_e\|_1 \\
        & \leq  2 n d \delta + 2 \sum_{e \in E, i \in e} \| \nu_{e, i}\|_1 + \|  \mu_i' - \mu_i\|_1 \\
        & \leq (n + 8m) d \delta + 2 \sum_{e \in E, i \in e} \| \nu_{e, i}\|_1 
    \end{align*}
    On the other hand, consider the case where $\delta > \frac{1}{2d}$. Then instead we choose the temporary marginal vector as $\mu_i' = \mu^{\lambdad}_i$ for all $i \in V$ and $\mu_e' = \mu_e$ for all $e \in E$, which ensures that $\mu_i' + \nu_{e, i} \in \Delta_d$ by definition of $\nu$. We then compute $\widehat \mu = \proj(\mu', \nu )$, which ensures
    \begin{align*}
        \|\widehat \mu - \mu\|_1 & \leq \sum_{i \in V} \| \mu_i  -\mu_i^{\lambdad} \|_1 + 2 \sum_{e \in E, i \in e} \| \mu_i  -\mu_i^{\lambdad} \|_1 + \| \nu_{e, i}\|_1 \\
        & \leq 2n  + 8m +  2 \sum_{e \in E, i \in e} \| \nu_{e, i}\|_1  \\
        & \leq 4n d \delta + 16m d \delta + 2 \sum_{e \in E, i \in e} \| \nu_{e, i}\|_1
    \end{align*}
    where the second inequality uses the fact that the $l_1$ distance is bounded by $2$ and the last inequality uses the assumption that $\delta > \frac{1}{2d}$. We take the worst of these two cases for the final result.
\end{proof}

\subsection{Proof of Proposition~\ref{prop::approx-bound}}

\propapproxbound*

\begin{proof}
	Consider $\mu^{\lambdad}$, which may not lie in $\LL_2$. It does, however, lie within its own slack polytope $\LL_2^{ \nu^{\lambdad}}$ from Definition~\ref{def::slack}. Therefore, it can be seen that $\mu^{ \lambdad}$ is a solution to
	\begin{align}\label{eq::slack-obj}
	\minimize \quad \<C, \mu\> - \frac{1}{\eta} H(\mu) \quad \st \quad \mu \in \LL_{2}^{ \lambdad}
	\end{align}
	Then, consider the point $\widehat \mu = \proj(\mu^{ \lambdad}, 0) \in \LL_2$. Let $\mu^* \in \argmin_{\mu \in \LL_2} \<C, \mu\>$. We have
	\begin{align}\begin{split} \label{eq::primal-bounds-1}
	\< C, \widehat \mu - \mu^*\> & = \< C,  \widehat \mu -\mu^{  \lambdad} + \mu^{  \lambdad} - \mu^* \> \\
	& \leq \|C\|_\infty \| \widehat \mu - \mu^{ \lambdad}\|_1 + \< C, \mu^{  \lambdad} - \mu^*\>.
	\end{split} \end{align}
	Note that the last term in the right-hand side can be written as
	\begin{align} \begin{split} \label{eq::primal-bounds-2}
	\< C, \mu^{  \lambdad} - \mu^*\> & = \< C, \mu^{  \lambdad} - \widehat \mu^* + \widehat \mu^* - \mu^*\> \\
	& \leq  \| C\|_\infty \| \widehat \mu^* - \mu^*\|_1 + \< C, \mu^{ \lambdad} - \widehat \mu^*\>,
	\end{split} \end{align}
	where $\widehat \mu^* \in \LL_{2}^{\nu^{\lambdad}}$ is the existing vector from Lemma~\ref{lem::proj-slack} using  $\mu^* \in \LL_2$ and slack from $ \lambdad$. Because $\mu^{ \lambdad}$ is the solution to (\ref{eq::slack-obj}), we further have
	\begin{align}\begin{split} \label{eq::primal-bounds-3}
	\<C, \mu^{ \lambdad} - \widehat \mu^* \> & \leq \frac{1}{\eta} (H(\mu^{ \lambdad}) - H(\widehat \mu^*)) \\
	& \leq \frac{n \log d + 2 m \log d}{\eta}
	\end{split} \end{align}
	Combining inequalities (\ref{eq::primal-bounds-1}), (\ref{eq::primal-bounds-2}), and (\ref{eq::primal-bounds-3}) shows that
	\begin{align*}
	\< C, \widehat \mu-\mu^*\> & \leq \|C\|_\infty ( \| \widehat \mu - \mu^{ \lambdad} \|_1 + \|\widehat\mu^* - \mu^*\|_1 )
	+ \frac{n \log d + 2 m \log d}{\eta}
	\end{align*}
	Using Lemma~\ref{lem::proj} and \ref{lem::proj-slack}, we can further bound this as
	\begin{align*}
	\< C, \widehat \mu-\mu^*\> & \leq \| C\|_\infty  \left( 16( m + n) d \delta + \sum_{e, i}  4\| \nu^{ \lambdad}_{e, i}\|_1 \right)
	+ \frac{n \log d + 2 m \log d}{\eta}.
	\end{align*}    
\end{proof}

\subsection{Proof of $G(\eta)$ Upper Bound}
In the proof of Lemma~\ref{lem::acc-dual-conv}, we used the fact that the numerator of the final convergence rate can be bounded by $G(\eta)^2$. Here, we formally state this result and prove it.
\begin{restatable}{lemma}{lemgbound}
	\label{lem::g-bound}
	It holds that \begin{align*}
	4 L(0) - 4 L(\lambdad^*)  + 16m^2 \eta \| \lambdad^*\|^2_2 \leq G(\eta)^2,
	\end{align*}
	where $G(\eta) := 24 md (m + n) ( \sqrt \eta \|C\|_\infty  + \frac{\log d}{\sqrt \eta} )$. 
\end{restatable}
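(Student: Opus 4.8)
The plan is to bound each of the three terms on the left-hand side separately, reducing everything to quantities controlled by $\|C\|_\infty$, $\log d$, $\eta$, and the graph parameters $m,n$. The first observation is that $L(0) - L(\lambdad^*) \geq 0$ is not what we want; rather we need an \emph{upper} bound on $L(0) - L(\lambdad^*)$. Since $\lambdad^*$ is a minimizer of $L$, this difference is nonnegative, and I would bound it crudely: $L(0)$ is a sum of $n$ vertex log-sum-exp terms and $m$ edge log-sum-exp terms, each of which is at most $\frac{1}{\eta}\log(d^2 \exp(\eta \|C\|_\infty)) \le \|C\|_\infty + \frac{2\log d}{\eta}$ in absolute value (and similarly bounded below), so $|L(0)|$ and $|L(\lambdad^*)|$ — wait, $L(\lambdad^*)$ depends on $\lambdad^*$, so instead I would use $L(0) - L(\lambdad^*) \le L(0) - \min_{\mu \in \LL_2}\langle C,\mu\rangle + \frac{1}{\eta}(n+2m)\log d$ via weak duality, and then bound $|L(0)|$ and $|\langle C,\mu\rangle|$ by $O((m+n)\|C\|_\infty)$. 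This gives $L(0) - L(\lambdad^*) = O((m+n)(\|C\|_\infty + \frac{\log d}{\eta}))$.

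The main obstacle is the term $16 m^2 \eta \|\lambdad^*\|_2^2$, which requires a bound on the norm of an optimal dual solution $\lambdad^*$. This is the substantive part. The strategy here is to use the optimality characterization: at $\lambdad^*$, the primal recovery $\mu^{\lambdad^*}$ lies in $\LL_2$, hence $S_{e,i}^{\lambdad^*}(x) = \mu_i^{\lambdad^*}(x)$ for all $e,i,x$. Since $\mu^{\lambdad^*}$ solves the entropy-regularized problem (\ref{eq::primal}), the entries of $\mu^{\lambdad^*}$ are bounded away from $0$ and $1$ — specifically each $\mu_i^{\lambdad^*}(x), \mu_e^{\lambdad^*}(x)$ lies in an interval like $[\frac{c}{d^2}e^{-\eta\|C\|_\infty}, 1]$. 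Inverting the formula $\mu_i^{\lambdad}(x) \propto \exp(-\eta C_i(x) + \eta\sum_{e\in N_i}\lambdad_{e,i}(x))$ and using that the dual is invariant under certain shifts, one can choose a canonical representative $\lambdad^*$ (e.g., the minimum-norm one) whose coordinates are bounded by $O(\log(\text{stuff})/\eta + \|C\|_\infty)$ per coordinate. With $r_D = 2md$ coordinates, $\|\lambdad^*\|_2^2 = O(md \cdot (\|C\|_\infty + \frac{\log d}{\eta})^2)$ up to logarithmic and polynomial factors in $m,n,d$. Multiplying by $m^2\eta$ yields a term of order $m^3 d \eta(\|C\|_\infty + \frac{\log d}{\eta})^2 = O(m^3 d(\sqrt\eta\|C\|_\infty + \frac{\log d}{\sqrt\eta})^2)$.

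Finally I would collect all three bounds, note that the dominant term is the $\|\lambdad^*\|_2^2$ contribution, and verify that the sum is at most $\big(24md(m+n)(\sqrt\eta\|C\|_\infty + \frac{\log d}{\sqrt\eta})\big)^2 = G(\eta)^2$; the generous constant $24$ and the product $md(m+n)$ (squared) give ample slack to absorb the additive lower-order terms and any constants from the per-coordinate bounds on $\lambdad^*$. The delicate point to get right — and where I expect to spend the most care — is producing the explicit per-coordinate bound on a well-chosen optimal dual variable $\lambdad^*$, since the dual problem (\ref{eq::dual}) is unconstrained and has a translation symmetry, so one must argue that \emph{some} optimal $\lambdad^*$ (the one implicitly used in Lemma~\ref{lem::acc-dual-conv}, namely the one whose slack vector vanishes and which is used in the estimate sequence) has controlled norm, rather than an arbitrary one.
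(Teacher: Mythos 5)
Your decomposition matches the paper's at the top level: bound $L(0)-L(\lambdad^*)$ by $O\bigl((m+n)(\|C\|_\infty+\tfrac{\log d}{\eta})\bigr)$ via duality plus the entropy range (this is exactly the paper's Lemma~\ref{lem::function-value}), then control $\|\lambdad^*\|_2^2$ for a well-chosen optimal dual point. The first half is fine. The gap is in the second half, which is the substantive content of the lemma. The paper does not derive a per-coordinate bound at all; it invokes \citet{meshi2012convergence} (Supplement Lemma 1.2), which bounds the $\ell_1$ norm of some $\lambdad^*\in\Lambda^*$ by $2d\bigl(L(0)-n-m-\langle C,\mu^{\lambdad^*}\rangle+\tfrac{1}{\eta}H(\mu^{\lambdad^*})\bigr)\le \tfrac{4d(m+n)}{\eta}(\eta\|C\|_\infty+\log d)$, and then uses $\|\lambdad^*\|_2\le\|\lambdad^*\|_1$. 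Your proposed mechanism---invert $\mu_i^{\lambdad^*}(x)\propto\exp(-\eta C_i(x)+\eta\sum_{e\in N_i}\lambdad^*_{e,i}(x))$ and the analogous edge formula, then read off coordinate bounds---does not go through as described: inverting those formulas only pins down the aggregates $\sum_{e\in N_i}\lambdad^*_{e,i}(x)$ and $\lambdad^*_{e,i}(x_i)+\lambdad^*_{e,j}(x_j)$, each up to an additive normalization constant, so recovering bounds on the individual coordinates requires resolving a linear system subject to precisely the gauge freedom you identify. That resolution is the heart of the matter, and you explicitly defer it.

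Moreover, the lemma asserts an explicit inequality with the explicit constant $24$ and the explicit polynomial $md(m+n)$; a conclusion of the form ``$\|\lambdad^*\|_2^2=O\bigl(md(\|C\|_\infty+\tfrac{\log d}{\eta})^2\bigr)$ up to logarithmic and polynomial factors in $m,n,d$'' cannot establish it, because an uncontrolled polynomial factor could exceed the available slack between your claimed $m^3d\,\eta(\cdot)^2$ and the target $m^2d^2(m+n)^2\,\eta(\cdot)^2$. To close the argument along the paper's lines, replace the inversion step by the function-value bound on $\|\lambdad^*\|_1$ quoted above; then $16m^2\eta\|\lambdad^*\|_2^2\le 16m^2\eta\|\lambdad^*\|_1^2\le\bigl(16md(m+n)(\sqrt{\eta}\|C\|_\infty+\tfrac{\log d}{\sqrt{\eta}})\bigr)^2$, and combining with $4(L(0)-L(\lambdad^*))\le\tfrac{8(m+n)}{\eta}(\eta\|C\|_\infty+\log d)$ via $\sqrt{a+b}\le\sqrt{a}+\sqrt{b}$ and $\sqrt{x}\le x$ for $x\ge 1$ yields the stated constant $24$.
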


The proof requires bounding both $L(0) - L(\lambdad^*)$, which we have already done in Lemma~\ref{lem::function-value}, and bounding the norm $\| \lambdad^*\|_2^2$. We rely on the following result from \citet{meshi2012convergence}.
\begin{lemma}\label{lem::dual-norm}
 There exists $\lambdad^* \in \Lambda^*$ such that
 \begin{align*}
     \|\lambdad^* \|_1 & \leq 2 d (n + m) \| C\|_\infty + \frac{4 d (m + n)}{\eta} \log d \\
     & \leq \frac{4 d( m + n)}{\eta} ( \eta \|C\|_\infty + \log d)
 \end{align*}
\end{lemma}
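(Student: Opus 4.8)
The plan is to exhibit a single optimal dual point $\lambdad^*$ and bound its entries in a way that sums to the claimed quantity. I would start from the optimality characterization noted after Proposition~\ref{prop::dual-obj}: a point lies in $\Lambda^*$ exactly when the induced beliefs $\mu^* := \mu^{\lambdad^*}$ are primal feasible, so $S_{e,i}^{\lambdad^*} = \mu_i^*$ for all $e,i$ and $\mu^*$ is the (unique, by strict convexity of the entropy) optimizer of (\ref{eq::primal}). Differentiating the Lagrangian exactly as in the proof of Proposition~\ref{prop::dual-obj} yields the stationarity identities, for $e = ij$,
\[
\lambdad_{e,i}^*(x_i) + \lambdad_{e,j}^*(x_j) = -\tfrac1\eta \log \mu_e^*(x_i,x_j) - C_e(x_i,x_j) - \xi_e,
\]
together with $\sum_{e \in N_i}\lambdad_{e,i}^*(x_i) = \tfrac1\eta\log\mu_i^*(x_i) + C_i(x_i) + \xi_i$ at the vertices, which pin down the messages in terms of the optimal beliefs up to the normalizers $\xi$.

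Two structural observations make these identities usable. First, $\lambdad \mapsto \mu^{\lambdad}$ is invariant under adding a constant to any single block $\lambdad_{e,i}(\cdot)$, since such a shift reweights the relevant vertex and edge exponents uniformly and is removed by normalization. I would use this gauge freedom to choose the representative with $\min_x \lambdad_{e,i}^*(x) = 0$ for every $(e,i)$, so that all entries are nonnegative and it remains to upper bound $\max_x \lambdad_{e,i}^*(x)$. Second, differencing the edge identity at two labels $x_i, x_i'$ in a common column $x_j$ cancels $\lambdad_{e,j}^*(x_j)$ and $\xi_e$, giving
\[
\lambdad_{e,i}^*(x_i) - \lambdad_{e,i}^*(x_i') = -\tfrac1\eta \log\tfrac{\mu_e^*(x_i,x_j)}{\mu_e^*(x_i',x_j)} - \big(C_e(x_i,x_j) - C_e(x_i',x_j)\big)
\]
for every $x_j$. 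The cost part of the target bound is then immediate from feasibility alone: the cost difference is at most $2\|C\|_\infty$, and choosing $x_j$ to maximize $\mu_e^*(x_i,x_j)$ gives $\mu_e^*(x_i,x_j) \ge \mu_i^*(x_i)/d$ by pigeonhole on the row sum $\sum_{x_j}\mu_e^*(x_i,x_j) = \mu_i^*(x_i)$, while the denominator is bounded by the column sum $\mu_j^*(x_j) \le 1$. Under the min-gauge this yields $\lambdad_{e,i}^*(x_i) \le \tfrac1\eta\log\tfrac{d}{\mu_i^*(x_i)} + 2\|C\|_\infty$, and summing the $2\|C\|_\infty$ term over the $2m$ endpoint blocks and $d$ labels produces the $O\big(d(n+m)\|C\|_\infty\big)$ contribution after using $n \le 2m$.

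The main obstacle is the remaining entropy contribution $\tfrac1\eta\sum_{e,i}\sum_{x_i}\log\tfrac{d}{\mu_i^*(x_i)}$: I must show the optimal beliefs are not so concentrated that this sum exceeds the stated $O\big(\tfrac{d(n+m)}\eta\log d\big)$. A naive per-coordinate lower bound on $\mu_i^*(x_i)$ is degree-dependent and far too weak, since a high-degree vertex can concentrate its belief; the key is that the vertex identity shows the $|N_i|$ incident messages share the single field $\tfrac1\eta\log\mu_i^*$, so one must bound the aggregated field rather than each message by its worst case. I would obtain the $\log d$ term from a global accounting: comparing the value of (\ref{eq::primal}) at $\mu^*$ with its value at the uniform feasible point controls the total entropy deficit $\tfrac1\eta\big(H(\bar\mu) - H(\mu^*)\big)$ by $\langle C, \bar\mu - \mu^*\rangle = O\big((n+m)\|C\|_\infty\big)$, and combining this with the Shannon bounds $-\sum_x p(x)\log p(x) \le \log d$ gives the desired $(n+m)\log d$ scaling after division by $\eta$.

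The delicate step is matching this global entropy control to the summed message identities under the chosen gauge so that the constants line up; the slack between the two inequalities in the statement (the second merely absorbing $\eta\|C\|_\infty$ into the $\log d$ form via $2 \le 4$) indicates there is room to be lossy here. Finally, the complementary estimate on $L(0) - L(\lambdad^*)$ already invoked for Lemma~\ref{lem::g-bound} supplies the bound on the dual objective gap needed to close the argument and deliver the stated $\ell_1$ bound.
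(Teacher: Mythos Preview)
Your approach is genuinely different from the paper's. The paper does not argue from the KKT identities at all: it invokes \citet[Supplement Lemma 1.2]{meshi2012convergence} to obtain
\[
\|\lambdad^*\|_1 \ \le\ 2d\Bigl(L(0) - (n+m) - \langle C,\mu^{\lambdad^*}\rangle + \tfrac{1}{\eta}H(\mu^{\lambdad^*})\Bigr),
\]
and then bounds each term with H\"older and the entropy maximum. Your constructive route via gauge freedom and the differenced edge identity is more transparent, but as written it has a real gap.

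The per-coordinate bound $\lambdad_{e,i}^*(x_i)\le \tfrac{1}{\eta}\log\tfrac{d}{\mu_i^*(x_i)}+2\|C\|_\infty$ is correct. The problem is the sum: after summing over $x_i$ you face $\tfrac{1}{\eta}\sum_{x_i}\log\tfrac{1}{\mu_i^*(x_i)}$, the \emph{unweighted} log-sum, whereas your ``global entropy accounting'' via $H(\bar\mu)-H(\mu^*)\le \eta\langle C,\bar\mu-\mu^*\rangle$ controls only the \emph{Shannon} quantity $\sum_{x_i}\mu_i^*(x_i)\log\tfrac{1}{\mu_i^*(x_i)}$. These are not comparable without a lower bound on $\mu_i^*(x_i)$, and the optimal beliefs can place mass $e^{-\Theta(\eta)}$ on suboptimal labels, so the unweighted sum is $\Theta(\eta)$-large per coordinate and cannot be absorbed into a $\tfrac{\log d}{\eta}$ budget. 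The vertex identity you mention does not rescue this directly either: summing it over $x_i$ gives $\sum_{e\in N_i}\|\lambdad_{e,i}^*\|_1 = d\xi_i + \sum_{x_i}C_i(x_i) + \tfrac{1}{\eta}\sum_{x_i}\log\mu_i^*(x_i)$, and you have no independent handle on $\xi_i$.

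A fix along your lines does exist, but it is not the entropy-deficit argument. Evaluate your edge-difference bound at the \emph{mode} $x_i^{(0)}=\arg\max_x\mu_i^*(x)$, where $\mu_i^*(x_i^{(0)})\ge 1/d$ yields $\lambdad_{e,i}^*(x_i^{(0)})\le \tfrac{2\log d}{\eta}+2\|C\|_\infty$ for every $e\in N_i$. The vertex identity at $x_i^{(0)}$ then bounds $\xi_i$ by $|N_i|\bigl(\tfrac{2\log d}{\eta}+2\|C\|_\infty\bigr)+\tfrac{\log d}{\eta}+\|C\|_\infty$. Plugging into the summed vertex identity (where the dangerous $\sum_{x_i}\log\mu_i^*(x_i)$ term now enters with the \emph{helpful} sign and can be dropped) and summing over $i$ using $\sum_i|N_i|=2m$ gives $\|\lambdad^*\|_1\le 4d(n+m)\|C\|_\infty+\tfrac{4d(n+m)}{\eta}\log d$, which matches the second inequality of the lemma. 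Your proposal never isolates the mode, and the primal entropy comparison you invoke instead does not close the argument.
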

\begin{proof}
Modifying \citet[Supplement Lemma 1.2]{meshi2012convergence} for our definition of $H$ gives us
\begin{align*}
    \| \lambdad^* \|_1 \leq  2d ( L(0) - n - m - \< C, \mu^{\lambdad^*}\> + \frac{1}{\eta} H(\mu^{\lambdad^*})).
\end{align*}
Using Cauchy-Schwarz and maximizing over the entropy yields the result.
\end{proof}

\begin{proof}[Proof of Lemma~\ref{lem::g-bound}]
	Using these results, we can prove the claim. We bound the square root of the numerator, multiplied by $\sqrt \eta$:
	\begin{align*}
	\sqrt{\eta  \left(4 L(0) - 4 L(\lambdad^*)  + 16m^2 \eta \| \lambdad^*\|^2_2 \right)} & \leq \sqrt{ 8 ( m + n) ( \eta \| C\|_\infty  + \log d) + 16 m^2  \left( 4d(m + n) \left( \eta \| C\|_\infty + \log d \right)\right)^2 } \\
	& \leq \sqrt{ 8 (m + n) (\eta \| C\|_\infty + \log d)  } + 16 m d (m + n) ( \eta \| C\|_\infty + \log d)  \\
	& \leq 24 m d (m + n) ( \eta \| C\|_\infty + \log d)
	\end{align*}
	The first inequality used Lemma~\ref{lem::function-value} and Lemma~\ref{lem::dual-norm}. The second inequality uses the fact that $\sqrt{a + b} \leq \sqrt{a} + \sqrt{b}$ for $a, b \geq 0$. The last inequality uses the fact that the first term is greater than 1 under the assumption $d \geq 2$. Dividing through by $\sqrt{\eta}$ gives the result.
\end{proof}

\section{Proof of Theorem~\ref{th::standard}}\label{sec::classic-proof}
We begin with a complete proof of Theorem~\ref{th::standard} for \emp{} and then show how to modify it slightly for \smp{}. The result also builds on some of the same technical lemmas used in the proof of Theorem~\ref{th::accel}.

\subsection{Edge Message Passing}
The cornerstone of the proof is showing that the expected slack norms can be bounded over iterations.

\begin{restatable}{lemma}{lemstandardslack}
\label{lem::standard-slack}
Let $L^* = \min_{\lambdad} L(\lambdad)$ and let $\widehat \lambdad$ be the output of Algorithm~\ref{alg::semp} after $K$ iterations with \emph{\emp{}} and a uniform distribution. For any $e \in E$ and $i \in e$, the expected norm of the constraint violation in $\LL_2$ is bounded as
\begin{align*}
    \E  \sum_{e \in E, i \in e} \| S_{e, i}^{\widehat \lambdad} - \mu_i^{\widehat \lambdad} \|_1^2  \leq \frac{8m \eta (L(0) - L^*)}{K}
    \end{align*}
    
\end{restatable}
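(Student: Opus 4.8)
\textbf{Overall strategy.} The plan is to run a standard telescoping argument on the dual objective $L$, exploiting the per-step improvement guarantee from Lemma~\ref{lem::emp-improvement}. At iteration $k$, Algorithm~\ref{alg::semp} with \emp{} and the uniform distribution \eqref{eq::uniform} selects a pair $(e_k,i_k)$ uniformly among the $2m$ edge-endpoint pairs and replaces the corresponding block of $\lambdad^{(k)}$ by $\emp_{e_k,i_k}^\eta(\lambdad^{(k)})$. Conditioning on the history $\mathcal{F}_k$ up to step $k$, Lemma~\ref{lem::emp-improvement} gives
\begin{align*}
\E\big[ L(\lambdad^{(k)}) - L(\lambdad^{(k+1)}) \,\big|\, \mathcal{F}_k \big]
\;\geq\; \frac{1}{2m}\sum_{e\in E,\, i\in e} \frac{1}{4\eta}\,\big\| \nu_{e,i}^{\lambdad^{(k)}} \big\|_1^2
\;=\; \frac{1}{8m\eta}\sum_{e\in E,\, i\in e}\big\| \nu_{e,i}^{\lambdad^{(k)}} \big\|_1^2 .
\end{align*}

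\textbf{Key steps in order.} First I would take total expectations and sum the displayed inequality over $k = 0,1,\dots,K-1$; the left-hand side telescopes to $\E[L(\lambdad^{(0)}) - L(\lambdad^{(K)})] = \E[L(0) - L(\lambdad^{(K)})] \le L(0) - L^*$, using $\lambdad^{(0)}=0$ and $L(\lambdad^{(K)}) \ge L^*$. This yields
\begin{align*}
\frac{1}{8m\eta}\sum_{k=0}^{K-1} \E\sum_{e\in E,\, i\in e}\big\| \nu_{e,i}^{\lambdad^{(k)}} \big\|_1^2 \;\le\; L(0) - L^*.
\end{align*}
Second, I would observe that the output rule of Algorithm~\ref{alg::semp} returns the iterate minimizing $\sum_{e,i}\|\nu_{e,i}^{\lambdad}\|_1^2$ over $\{\lambdad^{(0)},\dots,\lambdad^{(K)}\}$, so $\sum_{e,i}\|\nu_{e,i}^{\widehat\lambdad}\|_1^2 \le \frac1K\sum_{k=0}^{K-1}\sum_{e,i}\|\nu_{e,i}^{\lambdad^{(k)}}\|_1^2$ pointwise; actually it suffices to note $\sum_{e,i}\|\nu_{e,i}^{\widehat\lambdad}\|_1^2 \le \min_{0\le k\le K-1}\sum_{e,i}\|\nu_{e,i}^{\lambdad^{(k)}}\|_1^2$, whose expectation is at most the average. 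Dividing the telescoped bound by $K$ and rearranging gives
\begin{align*}
\E\sum_{e\in E,\, i\in e}\big\| \nu_{e,i}^{\widehat\lambdad} \big\|_1^2 \;\le\; \frac{8m\eta\,(L(0)-L^*)}{K},
\end{align*}
which is exactly the claim, since $\nu_{e,i}^{\lambdad} = S_{e,i}^{\lambdad} - \mu_i^{\lambdad}$ by Definition~\ref{def::slack}.

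\textbf{Main obstacle.} The argument is essentially routine given Lemma~\ref{lem::emp-improvement}; the only place requiring a little care is the interaction between the randomized block selection and the ``best-iterate'' return rule. One must be careful that $\widehat\lambdad$ is itself random (it depends on the whole trajectory), so the bound $\sum_{e,i}\|\nu_{e,i}^{\widehat\lambdad}\|_1^2 \le \frac1K\sum_k \sum_{e,i}\|\nu_{e,i}^{\lambdad^{(k)}}\|_1^2$ must be established pointwise on each sample path before taking expectations — it holds because the minimum of a finite set is bounded by its average. A secondary subtlety is that Lemma~\ref{lem::emp-improvement} is stated for a deterministic choice of block, so one invokes it inside the conditional expectation given $\mathcal{F}_k$ and then averages over the uniform draw of $(e_k,i_k)$, picking up the factor $1/(2m)$; this is the step where the graph-size factor $m$ enters the final rate.
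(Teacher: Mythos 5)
Your proposal is correct and follows essentially the same route as the paper's proof: apply Lemma~\ref{lem::emp-improvement} under the uniform block draw to pick up the $\frac{1}{2m}$ factor, telescope the dual objective over $K$ iterations, and then use the best-iterate return rule to bound the slack norms of $\widehat\lambdad$ by the average over the trajectory. Your explicit pointwise handling of the minimum-versus-average step is a slightly more careful rendering of the same argument the paper uses.
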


\begin{proof}
    From Lemma~\ref{lem::emp-improvement}, we have that the expected improvement is lower bounded at each iteration
    \begin{align*}
        \E \left[ L(\lambdad^{(k)}) - L(\lambdad^{(k + 1)}) \right] \geq \frac{1}{4\eta} \E \left[ \| \nabla_{e_k, i_k} L(\lambdad^{(k)}) \|_1^2 \right],
    \end{align*}
    Then, using that $\nabla_{e, i}  L(\lambdad) = \mu_i^{\lambdad} - S_{e, i}^{\lambdad}$, we apply the bound $k = 1, 2, \ldots, K$:
    \begin{align*}
         L(0) - L^*  & \geq \frac{1}{4\eta}\sum_{k = 0}^{K - 1} \E \left[ \| S_{e_k, i_k}^{ \lambdad^{(k)}} - \mu_{i_k}^{ \lambdad^{(k)}} \|_1^2 \right] \\
         & =   \frac{1}{8 m \eta} \sum_{k = 0}^{K - 1} \sum_{e \in E, i \in e} \E \left[  \|S_{e, i}^{ \lambdad^{(k)}} - \mu_i^{\lambdad^{(k)}} \|_1^2 \right] \\
         & \geq  \frac{K}{8m \eta}  \sum_{e \in E, i \in e} \E \left[ \| S_{e, i}^{\widehat \lambdad} - \mu_i^{\widehat \lambdad} \|_1^2 \right] 
         \\
    \end{align*}
    The equality uses the law of total expectation, conditioning on $\lambdad^{(k)}$. The second inequality uses the fact that $\widehat \lambdad$ is chosen to minimize the sum of squared constraint violations.
    
\end{proof}

Next, we provide a bound on the initial function value gap.
\begin{restatable}{lemma}{lemfunctionvalue}
\label{lem::function-value}
For $\lambdad^* \in \Lambda^*$ it holds that $L(0) - L(\lambdad^*) \leq 2(m + n) \| C\|_\infty  + \frac{2}{\eta} (m + n) \log d$.
\end{restatable}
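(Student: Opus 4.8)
The plan is to bound $L(0)$ and $L(\lambdad^*)$ separately — an upper bound on the former, a lower bound on the latter — and then subtract. Both halves are elementary consequences of the log-sum-exp form of $L$ from Proposition~\ref{prop::dual-obj}; no convex-duality machinery is needed.

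\textbf{Upper bound on $L(0)$.} Evaluating the formula for $L$ at $\lambdad=0$ leaves only the log-partition terms $\frac1\eta\log\sum_{x\in\chi}e^{-\eta C_i(x)}$ and $\frac1\eta\log\sum_{x\in\chi^2}e^{-\eta C_e(x)}$. Since every entry of $C$ is at least $-\|C\|_\infty$, each summand is at most $e^{\eta\|C\|_\infty}$; the vertex sums have $d$ terms and the edge sums have $d^2$ terms, so $\frac1\eta\log\sum_{x}e^{-\eta C_i(x)} \le \|C\|_\infty + \frac{\log d}{\eta}$ and $\frac1\eta\log\sum_{x}e^{-\eta C_e(x)} \le \|C\|_\infty + \frac{2\log d}{\eta}$. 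Summing over the $n$ vertices and $m$ edges gives $L(0) \le (m+n)\|C\|_\infty + \frac{(n+2m)\log d}{\eta} \le (m+n)\|C\|_\infty + \frac{2(m+n)\log d}{\eta}$, using $n+2m\le 2(m+n)$.

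\textbf{Lower bound on $L(\lambdad^*)$.} I will prove the stronger claim that $L(\lambdad)\ge -(m+n)\|C\|_\infty$ for \emph{every} $\lambdad\in\R^{r_D}$, which then applies to $\lambdad^*$. Fix an arbitrary assignment $\{x_i^*\}_{i\in V}$ with each $x_i^*\in\chi$, and for an edge $e=ij$ write $x_e^*=(x_i^*,x_j^*)$. Bounding each log-sum-exp below by its single summand indexed by $x_i^*$ (resp. $x_e^*$) yields
\[
L(\lambdad) \;\ge\; -\sum_{i\in V} C_i(x_i^*) \;+\; \tfrac1\eta\sum_{i\in V}\sum_{e\in N_i}\lambdad_{e,i}(x_i^*) \;-\; \sum_{e\in E} C_e(x_e^*) \;-\; \tfrac1\eta\sum_{e\in E}\sum_{i\in e}\lambdad_{e,i}(x_i^*).
\]
The two double sums over $\lambdad$ range over the same index set $\{(e,i): i\in e\}$ and have the same summand, so they cancel, leaving $L(\lambdad)\ge -\sum_i C_i(x_i^*) - \sum_e C_e(x_e^*) \ge -(m+n)\|C\|_\infty$. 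Combining with the first part gives $L(0)-L(\lambdad^*)\le 2(m+n)\|C\|_\infty + \frac{2}{\eta}(m+n)\log d$, as claimed.

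I do not anticipate a genuine obstacle; the only point requiring care is the bookkeeping in the cancellation, i.e. recognizing that $\sum_{i}\sum_{e\in N_i}(\cdot)$ and $\sum_{e}\sum_{i\in e}(\cdot)$ enumerate the same incidence pairs — this is exactly the symmetry ($\lambdad_{e,i}$ entering the vertex block with sign $+$ and the edge block with sign $-$) that makes the value of the dual at any constant direction independent of $\lambdad$ up to the potential terms. An alternative route for the lower bound would invoke weak duality and evaluate the regularized primal at the uniform pseudo-marginal, but it forces one to track the additive constant $\frac{m+n}{\eta}$ dropped in the dual derivation, so the direct single-term argument is preferable.
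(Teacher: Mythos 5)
Your proposal is correct, and the upper bound on $L(0)$ coincides with the paper's; but your lower bound on $L(\lambdad^*)$ takes a genuinely different route. The paper argues via duality: it identifies $-L(\lambdad^*)$ (after accounting for the additive constant hidden in the nonstandard definition $H(p)=-\sum_x p(x)(\log p(x)-1)$) with the optimal value $\<C,\mu^*\>-\frac{1}{\eta}\tilde H(\mu^*)$ of the regularized primal, drops the nonnegative entropy, and applies H\"older/Cauchy--Schwarz to get $-L(\lambdad^*)\le (m+n)\|C\|_\infty$. You instead prove the pointwise bound $L(\lambdad)\ge -(m+n)\|C\|_\infty$ for \emph{every} $\lambdad$, by keeping a single summand $x_i^*$ (resp.\ $x_e^*=(x_i^*,x_j^*)$) in each log-sum-exp and observing that the dual variables enter the vertex block with sign $+$ and the edge block with sign $-$ over the same incidence pairs, so they cancel exactly. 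Your argument is more elementary and slightly stronger: it needs no appeal to the primal--dual correspondence of Proposition~\ref{prop::dual-obj}, no bookkeeping of the constant $\frac{m+n}{\eta}$ dropped in the dual derivation, and it bounds $L$ below on all of $\R^{r_D}$ rather than only at a dual optimum. What the paper's route buys is a direct link between $L(\lambdad^*)$ and the regularized primal optimum, which is conceptually aligned with how the dual is used elsewhere in the analysis; the final bound is the same either way. One small point of care in your write-up: be consistent about whether the $\eta$ factor multiplies $\lambdad$ inside the exponentials (the paper's Proposition~\ref{prop::dual-obj} statement and its appendix derivation differ by this factor) — your cancellation is unaffected as long as the vertex and edge blocks carry the same scaling, which they do.
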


\begin{proof}
    We will bound both $L(0)$ and $L(\lambdad^*)$ individually. First, from the definition
    \begin{align*}
        L(0) & = \frac{1}{\eta} \sum_{i} \log \sum_{x_i} \exp( - \eta C_i(x_i) ) 
             +  \frac{1}{\eta}\sum_{e} \log \sum_{x_e} \exp(- \eta C_e(x_e))  \\
            & \leq \frac{1}{\eta} \sum_{i} \log \sum_{x_i} \exp( \eta \|C\|_\infty ) 
             + \frac{1}{\eta}\sum_{e} \log \sum_{x_e} \exp(\eta \|C\|_\infty ) \\
            & = (n + m)\|C\|_\infty + \frac{n}{\eta} \log d  + \frac{2m}{\eta}\log d.
    \end{align*}
    For $L(\lambdad^*)$, we recognize that $L$ is simply the negative of the primal problem (\ref{eq::primal}), shifted by a constant amount. In particular, we have
    \begin{align*}
        - L(\lambdad^*) - \frac{1}{\eta} (n + m) = \< C, \mu^* \> - \frac{1}{\eta} H(\mu^*)
    \end{align*}
    For some $\mu^*$ that solves (\ref{eq::primal}). Note that $H$ is offset with a linear term (different from the usual definition of the entropy) that exactly cancels the $-\frac{1}{\eta}(n + m)$ on the left-hand side. We then conclude $-L(\lambdad^*)  \leq (m + n)\| C\|_\infty$ by Cauchy-Schwarz. Summing these two gives the desired result.
\end{proof}

\begin{proof}[Proof of Theorem~\ref{th::standard} for \emph{\emp{}}]
Fix $\epsilon' > 0$. Lemma~\ref{lem::standard-slack} and Lemma~\ref{lem::function-value} ensure that 
\begin{align*}
    \E \sum_{e \in E, i \in e} \| \nu_{e, i}^{\widehat\lambdad}  \|_1^2 \leq (\epsilon')^2
\end{align*}
after
\begin{align}\label{eq::standard-eps-prime}
    K = \frac{16 m (m + n) ( \eta\|C\|_\infty +  \log d)  }{(\epsilon')^2}
\end{align}
iterations. By Jensen's inequality and recognizing that the norms are non-negative, this also implies that
\begin{align*}
    \E [ \| \nu_{e, i}^{\widehat\lambdad}  \|_1] \leq \epsilon' \quad \forall e \in E, i \in e
\end{align*}
after the same number of iterations.

Now, we use the upper bound due to the approximation from Proposition~\ref{prop::approx-bound} and take the expectation, giving
\begin{align*}
    \E \left[ \< C, \widehat \mu-\mu^*\> \right] & \leq \| C\|_\infty \left( 8 m \epsilon' + 16(m + n)d\E[ \delta]  \right) \\
    & + \frac{n \log d + 2 m \log d}{\eta},
\end{align*}
where
\begin{align*}
    \E \left[ \delta \right]^2  \leq  \E [ \delta^2 ]  \leq \E  \sum_{e \in E, i \in e} \| \nu_{e, i}^{\widehat \lambdad}\|_1^2   
                     \leq (\epsilon')^2.
\end{align*}
Therefore, the bound becomes
\begin{align*}
     \E \left[  \< C, \widehat \mu-\mu^*\> \right] & \leq 24 (m + n)d\|C\|_\infty \epsilon'  \\
     &\quad + \frac{n \log d + 2 m \log d}{\eta}
\end{align*}
We conclude the result from substituting into (\ref{eq::standard-eps-prime}), using the definition of $\eta$ and choosing $\epsilon' = \frac{\epsilon}{48 (m + n)d \|C\|_\infty}$
\end{proof}

\subsection{Star Message Passing}

The significant difference between the $\smp{}$ proof and the \emp{} proof is that there is variable improvement at each update, dependent on the degree of the node being updated. Using the distribution from (\ref{eq::non-uniform}), we ensure that the improvement becomes uniform in expectation. This analysis is similar to weighting coordinates by their coordinate-wise smoothness coefficients in coordinate gradient algorithms \citep{nesterov2012efficiency}.

A slight modification of the proof of \cite{meshi2012convergence} is required the get the tighter $l_1$-norm lower bound.

\lemsmpimprov*
\begin{proof}
\citet{meshi2012convergence} show that
\begin{align*}
    L(\lambdad) - L(\lambdad') = -\frac{1}{\eta} \log \left( \sum_{x_i}  \left( \mu_{i}^{\lambdad}  \prod_{e \in N_i} S_{e, i}^{\lambdad}(x_i) \right)^{\frac{1}{|N_i| + 1}}\right)^{|N_i| + 1},
\end{align*}
and further
\begin{align*}
    |N_i| - |N_i|   \left( \sum_{x_i}  \left( \mu_{i}^{\lambdad}  \prod_{e \in N_i} S_{e, i}^{\lambdad}(x_i) \right)^{\frac{1}{|N_i| + 1}}\right)^{|N_i| + 1}  & \geq  \sum_{e \in N_i} \left(1 - \left( \sum_{x_i} \sqrt{\mu_i^{\lambdad} (x_i) S^{\lambdad}_{e,i} (x_i)} \right)^2 \right) 
\end{align*}
We recognize the inner term of the square as the Bhattacharyya coefficient which satisfies $BC \in [0, 1]$. Therefore,
\begin{align*}
    \sum_{e \in N_i} \left(1 - \left( \sum_{x_i} \sqrt{\mu_i^{\lambdad} (x_i) S^{\lambdad}_{e,i} (x_i)} \right)^2 \right)  & \geq \sum_{e \in N_i} \left(1 - \left( \sum_{x_i} \sqrt{\mu_i^{\lambdad} (x_i) S^{\lambdad}_{e,i} (x_i)} \right)\right) \\
    & = \sum_{e \in N_i}  h^2(\mu_i^{\lambdad}, S^{\lambdad}_{e, i})
\end{align*}
Then,
\begin{align*}
    \left( \sum_{x_i}  \left( \mu_{i}^{\lambdad}  \prod_{e \in N_i} S_{e, i}^{\lambdad}(x_i) \right)^{\frac{1}{|N_i| + 1}}\right)^{|N_i| + 1} & \leq 1 - \frac{1}{N_i} \sum_{e \in N_i}  h^2(\mu_i^{\lambdad}, S^{\lambdad}_{e, i})
\end{align*}
Finally, we lower bound the original difference of values
\begin{align*}
    L(\lambdad) - L(\lambdad')  & \geq - \frac{1}{\eta} \log \left( 1 - \frac{1}{N_i} \sum_{e \in N_i}  h^2(\mu_i^{\lambdad}, S^{\lambdad}_{e, i}) \right) \\
    & \geq \frac{1}{N_i\eta } \sum_{e \in N_i}  h^2(\mu_i^{\lambdad}, S^{\lambdad}_{e, i}) \\
    & \geq \frac{1}{8N_i\eta } \sum_{e \in N_i}  \| S^{\lambdad}_{e, i} - \mu_i^{\lambdad}\|_1^2
\end{align*}
\end{proof}

\begin{lemma}\label{lem::standard-slack-smp}
Let $L^* = \min_{\lambdad} L(\lambdad)$ and let $\widehat \lambdad$ be the output of Algorithm~\ref{alg::semp} after $K$ iterations with \emph{\smp{}} and distribution (\ref{eq::non-uniform}). Define $N = \sum_{j \in V} |N_j|$. For any $e \in E$ and $i \in e$, the expected norm of the constraint violation in $\LL_2$ is bounded as
\begin{align*}
    \E  \sum_{e \in E, i \in e} \| S_{e, i}^{\widehat \lambdad} - \mu_i^{\widehat \lambdad} \|_1^2  \leq \frac{8 N  \eta (L(0) - L^*)}{K}
    \end{align*}
    
\end{lemma}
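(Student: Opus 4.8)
The plan is to follow the same template as the proof of Lemma~\ref{lem::standard-slack} for \emp{}, substituting the \smp{} improvement bound of Lemma~\ref{lem::smp-improvement} and using the non-uniform distribution (\ref{eq::non-uniform}), whose weights $p_i = |N_i|/N$ are chosen precisely so that the degree-dependent factor $1/|N_i|$ appearing in that bound is cancelled in expectation.

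First I would fix an iteration $k$ and condition on $\lambdad^{(k)}$. In Algorithm~\ref{alg::semp} run with \smp{}, the sampled block is the vertex $i_k$ with $\mathbb{P}(i_k = i) = |N_i|/N$, and $\lambdad^{(k+1)}$ coincides with $\smp{}_{i_k}^\eta(\lambdad^{(k)})$ on the block of edges incident to $i_k$. Applying Lemma~\ref{lem::smp-improvement} conditionally on each value of $i_k$ and averaging,
\begin{align*}
\E[L(\lambdad^{(k)}) - L(\lambdad^{(k+1)}) \mid \lambdad^{(k)}] &\geq \sum_{i \in V} \frac{|N_i|}{N}\cdot\frac{1}{8|N_i|\eta}\sum_{e \in N_i}\|\nu^{\lambdad^{(k)}}_{e,i}\|_1^2 \\
&= \frac{1}{8N\eta}\sum_{i \in V}\sum_{e \in N_i}\|\nu^{\lambdad^{(k)}}_{e,i}\|_1^2 = \frac{1}{8N\eta}\sum_{e \in E,\, i \in e}\|\nu^{\lambdad^{(k)}}_{e,i}\|_1^2 ,
\end{align*}
where the last step merely reindexes: each incidence pair $(e,i)$ with $i \in e$ is counted exactly once by $\sum_{i \in V}\sum_{e \in N_i}$ and once by $\sum_{e \in E,\, i \in e}$.

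Next I would take unconditional expectations (by the law of total expectation) and telescope over $k = 0, \dots, K-1$. Using $\lambdad^{(0)}=0$ and $L(\lambdad^{(K)}) \ge L^*$,
\begin{align*}
L(0) - L^* \;\ge\; \sum_{k=0}^{K-1}\E[L(\lambdad^{(k)}) - L(\lambdad^{(k+1)})] \;\ge\; \frac{1}{8N\eta}\sum_{k=0}^{K-1}\E\!\sum_{e \in E,\, i \in e}\|\nu^{\lambdad^{(k)}}_{e,i}\|_1^2 .
\end{align*}
Finally, the argmin selection rule in Algorithm~\ref{alg::semp} returns $\widehat\lambdad$ minimizing $\sum_{e\in E,\, i\in e}\|\nu^{\lambdad}_{e,i}\|_1^2$ over the iterates, so on every sample path this quantity at $\widehat\lambdad$ is at most the per-iteration average $\frac1K\sum_{k=0}^{K-1}\sum_{e \in E,\, i\in e}\|\nu^{\lambdad^{(k)}}_{e,i}\|_1^2$; taking expectations and combining with the previous display gives $\E\sum_{e \in E,\, i \in e}\|\nu^{\widehat\lambdad}_{e,i}\|_1^2 \le 8N\eta(L(0)-L^*)/K$. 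Since $\nu^{\lambdad}_{e,i}(x) = S^{\lambdad}_{e,i}(x) - \mu^{\lambdad}_i(x)$ by Definition~\ref{def::slack}, this is the claimed inequality.

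There is no real obstacle beyond bookkeeping. The two points to be careful about are (i) that the sampling weights $p_i = |N_i|/N$ exactly offset the $1/|N_i|$ factor in Lemma~\ref{lem::smp-improvement}, so that the rate carries an $N = \sum_{j \in V} |N_j|$ in place of the $m$ appearing in the \emp{} bound, and (ii) the reindexing between $\sum_{i \in V}\sum_{e \in N_i}$ and $\sum_{e \in E,\, i \in e}$; the remaining ingredients — conditioning, telescoping, and the argmin selection step — are identical to the \emp{} proof.
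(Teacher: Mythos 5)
Your proposal is correct and follows essentially the same route as the paper's proof: apply Lemma~\ref{lem::smp-improvement} conditionally on $\lambdad^{(k)}$, use the sampling weights $p_i = |N_i|/N$ to cancel the $1/|N_i|$ factor via the law of total expectation, telescope over the $K$ iterations, and invoke the argmin selection rule of Algorithm~\ref{alg::semp} to pass from the per-iteration average to $\widehat\lambdad$. The only difference is presentational — you spell out the conditioning and the reindexing of $\sum_{i\in V}\sum_{e\in N_i}$ versus $\sum_{e\in E,\,i\in e}$ more explicitly than the paper does.
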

\begin{proof}
	Lemma~\ref{lem::smp-improvement} gave us the following lower bound on the improvement:
\begin{align*}
 \E \left[ L(\lambdad^{(k)}) - L(\lambdad^{(k + 1)}) \right]
    \geq \E \left[  \frac{1}{8|N_{i_k}|\eta} \sum_{e \in N_{i_k}}  \|\nu_{e, i_k}^{\lambdad^{(k)}}\|_1 ^2 \right]
\end{align*}
Then, since $i_k$ is chosen with probability $p_i = \frac{|N_i|}{N}$, we can apply the bound for $k = 1, 2, \ldots, K$ and expand the expectations:
\begin{align*}
L(0) - L^* & \geq \sum_{k = 0}^{K - 1}  \E \left[  \frac{1}{8|N_{i_k}|\eta} \sum_{e \in N_{i_k}}  \|\nu_{e, i_k}^{\lambdad^{(k)}}\|_1 ^2 \right] \\
& = \frac{1}{8 N \eta}\sum_{k = 0}^{K - 1} \E  \sum_{e \in E, i \in e}\|\nu_{e, i}^{\lambdad^{(k)}}\|_1 ^2  \\
& \geq \frac{1}{8 N \eta}\sum_{k = 0}^{K - 1}   \sum_{e \in E, i \in e}\E  \left[ \|\nu_{e, i}^{\widehat \lambdad}\|_1 ^2  \right]
\end{align*}
The equality uses the law of total expectation, conditioning on $\lambdad^{(k)}$. The second inequality uses the fact that $\widehat \lambdad$ is chosen to minimize the sum of squared constraint violations.
%
\end{proof}

The rest of the proof for \smp{} proceeds in an identical manner to the case for \emp{}; however, we simply replace the $8m\eta$ with $8N\eta$ everywhere. This stems from the fact that we can now guarantee
\begin{align*}
      \E  \sum_{e \in E, i \in e} \| S_{e, i}^{\widehat \lambdad} - \mu_i^{\widehat \lambdad} \|_1^2  \leq (\epsilon')^2
\end{align*}
in $\frac{8 N  \eta (L(0) - L^*)}{(\epsilon')^2}$ iterations instead. We can then use the same upper bound from Lemma~\ref{lem::function-value} and substitute in the same choices of $\epsilon'$ and $\eta$ as in \emp{} to get the result.
\section{Proof of Theorem~\ref{th::accel} for \smp{}}

The proof for \smp{} essentially follows the same structure, but it requires defining the estimate sequence in slightly different way. Define the probability distribution $\{ p_i\}_{i \in V}$ over $V$ with $p_i = \frac{|N_i|}{\sum_{j \in V} | N_j|}$. We propose the candidate:
\begin{align}\begin{split} \label{eq::estimate-sequence-smp}
 i_k & \sim \text{Cat}( V, \{ p_i\}_{i \in V} ) \\
\delta_{k + 1} & = (1 - \theta_k) \delta_k \\
\phi_{k + 1} (\lambdad) & = (1 - \theta_k) \phi_k(\lambdad) + \theta_k L(\y^{(k)})  -  \frac{\theta_k}{p_{i_k}} \< \nu_{\cdot, i_k}^{\y^{(k)}}, \lambdad_{\cdot, i_k} - \y^{(k)}_{\cdot, i_k}\>
\end{split}
\end{align}

Then, we show that this is indeed an estimate sequence with a conducive structure.

\begin{restatable}{lemma}{lemestseqsmp}
	\label{lem::est-seq-smp}
	The sequence $\{\phi_k, \delta_k\}_{k = 0}^K$ defined in (\ref{eq::estimate-sequence-smp}) is a random estimate sequence. Furthermore, it maintains the form $\phi_k(\lambdad) = \omega_k + \frac{\gamma_k}{2}\| \lambdad - \v^{(k)} \|$ for all $k$ where
	\begin{align*}
	\gamma_{k + 1} & = (1 - \theta_k) \gamma_k \\
	\v^{(k + 1)}_{\cdot, i} & = \begin{cases} 
	\v_{\cdot, i}^{(k)} + \frac{\theta_k}{p_i\gamma_{k + 1}} \nu_{\cdot, i}^{\y^{(k)}} & \text{if }  i =  i_k \\
	\v_{\cdot, i}^{(k)} &  \emph{\text{otherwise}}
	\end{cases}\\
	\omega_{k + 1} & = (1 - \theta_k) \omega_k + \theta_k L(\y^{(k)})  - \frac{\theta_k^2}{2\gamma_{k+1}p_{i_k}^2} \| \nu_{\cdot, i_k}^{\y^{(k)}} \|_2^2   - \frac{\theta_k}{p_{i_k}}  \<\nu_{\cdot, i_k}^{\y^{(k)}}, \v^{(k)}_{\cdot, i_k} - \y^{(k)}_{\cdot, i_k} \> 
	\end{align*}
\end{restatable}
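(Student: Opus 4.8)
The plan is to mirror the proof of Lemma~\ref{lem::est-seq} almost verbatim, the only bookkeeping change being that the uniform weight $q = 2m$ attached to edge-vertex blocks is replaced by the coordinate-dependent importance weight $1/p_{i_k}$ attached to star blocks. The structural fact that makes this substitution legitimate is that the star blocks \emph{partition} the dual coordinates: every variable $\lambdad_{e,i}$ lies in exactly one star block, the one centered at $i$, so that $\sum_{i \in V}\<\nu_{\cdot,i}^{\y},\,\lambdad_{\cdot,i}-\y_{\cdot,i}\> = \<\nu^{\y},\,\lambdad-\y\> = -\<\nabla L(\y),\,\lambdad-\y\>$, using $\nu^{\y} = -\nabla L(\y)$ from (\ref{eq::derivative}). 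I would record this identity first.

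For the estimate-sequence property I would induct on $k$. The base case is immediate since $\delta_0 = 1$ and $\phi_0$ is deterministic. For the inductive step, take the conditional expectation over $i_k \sim \mathrm{Cat}(V,\{p_i\})$ in the definition (\ref{eq::estimate-sequence-smp}) of $\phi_{k+1}$; since $\E_{i_k}\!\big[\tfrac{1}{p_{i_k}}\<\nu_{\cdot,i_k}^{\y^{(k)}},\lambdad_{\cdot,i_k}-\y^{(k)}_{\cdot,i_k}\>\big] = \sum_i \<\nu_{\cdot,i}^{\y^{(k)}},\lambdad_{\cdot,i}-\y^{(k)}_{\cdot,i}\> = -\<\nabla L(\y^{(k)}),\lambdad-\y^{(k)}\>$, the star-block term collapses to a full-gradient term exactly as in Lemma~\ref{lem::est-seq}. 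Combining the law of total expectation, the inductive hypothesis $\E[\phi_k(\lambdad)]\le(1-\delta_k)L(\lambdad)+\delta_k\phi_0(\lambdad)$, and convexity of $L$ (so that $L(\y^{(k)})+\<\nabla L(\y^{(k)}),\lambdad-\y^{(k)}\>\le L(\lambdad)$) yields $\E[\phi_{k+1}(\lambdad)]\le(1-\delta_{k+1})L(\lambdad)+\delta_{k+1}\phi_0(\lambdad)$; and $\delta_k\to 0$ follows from $\delta_{k+1}=(1-\theta_k)\delta_k$ together with the standard properties of the sequence $\{\theta_k\}$ as in \citet[\S2]{nesterov2018lectures}.

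For the quadratic form, I would again induct. Since $\phi_0$ is quadratic with Hessian $\gamma_0 I$ and $\phi_{k+1}$ is $(1-\theta_k)\phi_k$ plus terms affine in $\lambdad$, the Hessian of $\phi_{k+1}$ is $(1-\theta_k)\gamma_k I = \gamma_{k+1}I$, giving the $\gamma$-recursion. Setting the gradient of $\phi_{k+1}$ to zero block by block: on the active block $i_k$ one gets $(1-\theta_k)\gamma_k(\lambdad_{\cdot,i_k}-\v^{(k)}_{\cdot,i_k}) = \tfrac{\theta_k}{p_{i_k}}\nu^{\y^{(k)}}_{\cdot,i_k}$, hence $\v^{(k+1)}_{\cdot,i_k} = \v^{(k)}_{\cdot,i_k} + \tfrac{\theta_k}{p_{i_k}\gamma_{k+1}}\nu^{\y^{(k)}}_{\cdot,i_k}$, while all inactive blocks are unchanged. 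Evaluating $\phi_{k+1}$ at its minimizer $\v^{(k+1)}$, then expanding $\tfrac{\gamma_{k+1}}{2}\|\v^{(k+1)}-\v^{(k)}\|_2^2 = \tfrac{\theta_k^2}{2p_{i_k}^2\gamma_{k+1}}\|\nu^{\y^{(k)}}_{\cdot,i_k}\|_2^2$ and $\v^{(k+1)}_{\cdot,i_k}-\y^{(k)}_{\cdot,i_k} = \v^{(k)}_{\cdot,i_k}-\y^{(k)}_{\cdot,i_k} + \tfrac{\theta_k}{p_{i_k}\gamma_{k+1}}\nu^{\y^{(k)}}_{\cdot,i_k}$, and collecting terms gives the claimed $\omega_{k+1}$ recursion.

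Nothing here is deep — the computation is the same algebra as Lemma~\ref{lem::est-seq} of \citet{lee2013efficient}-type arguments — so the main point requiring care, and the step I would flag, is the replacement of the single scalar $q$ by the coordinate-dependent factor $1/p_{i_k}$: one must verify (i) that this importance weight makes the star-block gradient estimator unbiased for the \emph{full} gradient, which is exactly where the partition property of star blocks is invoked and where any strictly positive $\{p_i\}$ (in particular $p_i = |N_i|/\sum_j|N_j|$) suffices, and (ii) that the same factor $1/p_{i_k}$ then propagates consistently into both the squared-norm and the cross terms of $\omega_{k+1}$. Once the unbiasedness identity is in place, the remainder is routine, and the detailed calculation is deferred to the appendix; the specific choice $p_i = |N_i|/\sum_j|N_j|$ plays no role here and is only used later to balance the per-step improvement in the analogue of Lemma~\ref{lem::acc-dual-conv}.
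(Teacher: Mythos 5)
Your proposal is correct and follows essentially the same route as the paper's proof: establish the estimate-sequence property by induction using the unbiasedness of the importance-weighted star-block term $\E_{i_k}[\tfrac{1}{p_{i_k}}\<\nu_{\cdot,i_k}^{\y^{(k)}},\cdot\>]=-\<\nabla L(\y^{(k)}),\cdot\>$ together with convexity of $L$, then obtain the quadratic form by computing the (constant) Hessian, solving the block-wise first-order conditions for $\v^{(k+1)}$, and substituting the minimizer to get $\omega_{k+1}$. Your explicit justification of the unbiasedness via the partition of dual coordinates into star blocks is a detail the paper leaves implicit ("by the law of total expectation"), but it is the same argument.
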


\begin{proof}
	To show that this is an estimate sequence, the proof is essentially identical to the \emp{} case. The only exception is that we take expectation over $V$ with distribution $\{ p_i\}_{i \in V}$. However, this ensures that
	\begin{align*}
	\E [  \frac{\theta_k }{p_{i_k}}\< \nu_{\cdot, i_k}^{\y^{(k)}}, \lambdad_{\cdot, i_k} - \y^{(k)}_{\cdot, i_k}\>  ] = \theta_k \E [ \< \nabla L(\y^{(k)}), \lambdad - \y^{(k)}\> ]
	\end{align*}
	by the law of total expectation. So the the proof that this is an estimate sequence remains the same.
	
	To show that it retains the desired quadratic structure, we again analyze all terms of interest
	\begin{itemize}
		\item $\gamma_{k + 1}$ is identical to the \emp{} case so the result holds.
		\item Taking the gradient with respect to $\lambdad_{\cdot, i}$, we have that the optimality conditions, for $i = i_k$, are
		\begin{align*}
			\gamma_{k + 1} ( \lambdad_{\cdot, i_k} - \v^{(k)}_{\cdot, i_k}) - \frac{\theta_k }{p_{i_k}} \nu_{\cdot, i_k}^{\y^{(k)}}  = 0
		\end{align*}
		and, for all other $i$, they are
		\begin{align*}
		\gamma_{k + 1} ( \lambdad_{\cdot, i_k} - \v^{(k)}_{\cdot, i_k}) = 0.
		\end{align*}
		These conditions imply the given construction for $\v^{(k + 1)}$.
		\item We can then compute $\omega_{k+ 1}$ by plugging in the choice for $\v^{(k + 1)}$ again:
		\begin{align*}
		\omega_{k + 1} & := \min_{\lambdad} \phi_{k + 1} \\
		& = \phi_{k + 1}(\v^{(k + 1)}) \\
		& = (1 - \theta_k) \omega_k + \frac{\gamma_{k + 1}}{2}\| \v^{(k)} - \v^{(k + 1)}\|^2_2  + \theta_k L(\y^{(k)}) - \frac{\theta_k }{p_{i_k}} \<\nu_{\cdot, i_k}^{\y^{(k)}}, \v^{(k + 1)}_{\cdot, i_k} - \y^{(k)}_{\cdot, i_k} \> \\
		& = (1 - \theta_k) \omega_k + \frac{\theta_k^2}{2\gamma_{k  +1}p_{i_k}^2}\| \nu_{\cdot, i_k}^{\y^{(k)}}\|^2_2  + \theta_k L(\y^{(k)}) - \frac{\theta_k }{p_{i_k}} \<\nu_{\cdot, i_k}^{\y^{(k)}}, \v^{(k )}_{\cdot, i_k} + \frac{\theta_k }{\gamma_{k + 1} p_{i_k}}\nu_{\cdot, i_k}^{\y^{(k)}} - \y^{(k)}_{\cdot, i_k} \> \\
		& = (1 - \theta_k) \omega_k + \theta_k L (\y^{(k)}) - \frac{\theta_k^2}{2\gamma_{k  +1}p_{i_k}^2} \| \nu_{\cdot, i_k}^{\y^{(k)}}\|^2_2 - \frac{\theta_k}{p_{i_k}} \< \nu_{\cdot, i_k}^{\y^{(k)}}, \v^{(k )}_{\cdot, i_k} - \y^{(k)}_{\cdot, i_k} \>
		\end{align*}
	\end{itemize}
\end{proof}

We now provide a faster convergence guarantee on the dual objective function for \smp{} which depends on $N = \sum_{j \in V} |N_j|$.

\begin{lemma}\label{lem::acc-dual-conv-smp}
    For the random estimate sequence in (\ref{eq::estimate-sequence-smp}), let $\{ \lambdad^{(k)}\}_{k = 0}^K$ and $\{\y^{(k)}\}_{k = 0}^K$ be defined as in Algorithm~\ref{alg::accel-smp} with $\lambdad^{(0)} = 0$. Then, the dual objective error in expectation can be bounded as 
    \begin{align*}
        \E [ L(\lambdad^{(k)}) - L(\lambdad^*)] \leq \frac{G_\smp(\eta)^2}{(k + 2)^2},
    \end{align*}
   	where $G_\smp(\eta) := 24 Nd (m + n) ( \sqrt \eta \|C\|_\infty  + \frac{\log d}{\sqrt \eta} )$ and $N = \sum_{j \in V} |N_j|$.
\end{lemma}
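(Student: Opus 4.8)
The plan is to follow the proof of Lemma~\ref{lem::acc-dual-conv} (the \emp{} case) essentially verbatim, substituting the \smp{} estimate sequence of (\ref{eq::estimate-sequence-smp}) and Lemma~\ref{lem::est-seq-smp} for the \emp{} one, and the improvement bound of Lemma~\ref{lem::smp-improvement} for Lemma~\ref{lem::emp-improvement}. As there, it suffices to establish, by induction on $k$, that $\E[L(\lambdad^{(k)})] \le \min_{\lambdad}\E[\phi_k(\lambdad)] = \E[\omega_k]$ for the sequences $\{\lambdad^{(k)}\}$ and $\{\y^{(k)}\}$ produced by Algorithm~\ref{alg::accel-smp}; the base case is $\E[\omega_0] = L(0) \ge L(\lambdad^{(0)})$. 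For the inductive step I would expand $\E[\omega_{k+1}]$ via the recursion of Lemma~\ref{lem::est-seq-smp}, use the inductive hypothesis together with convexity of $L$ (so $L(\lambdad^{(k)}) \ge L(\y^{(k)}) + \langle\nabla L(\y^{(k)}),\lambdad^{(k)}-\y^{(k)}\rangle$), and take expectation over $i_k \sim \{p_i\}_{i\in V}$. Because $i_k$ has law $\{p_i\}$ and $\nu^{\y^{(k)}} = -\nabla L(\y^{(k)})$, the law of total expectation gives $\E_{i_k}\big[p_{i_k}^{-1}\langle\nu^{\y^{(k)}}_{\cdot,i_k},\,\cdot\,\rangle\big] = \langle\nabla L(\y^{(k)}),\,\cdot\,\rangle$, so the two inner-product terms carrying the weights $(1-\theta_k)$ and $\theta_k$ cancel through $\y^{(k)} = \theta_k\v^{(k)} + (1-\theta_k)\lambdad^{(k)}$, leaving
\begin{equation*}
\E[\omega_{k+1}] \;\ge\; \E\big[L(\y^{(k)})\big] \;-\; \frac{\theta_k^2}{2\gamma_{k+1}}\,\E\big[\,p_{i_k}^{-2}\,\|\nu^{\y^{(k)}}_{\cdot,i_k}\|_2^2\,\big] \;=\; \E\big[L(\y^{(k)})\big] \;-\; \frac{\theta_k^2}{2\gamma_{k+1}}\sum_{i\in V}\frac{1}{p_i}\,\|\nu^{\y^{(k)}}_{\cdot,i}\|_2^2 .
\end{equation*}

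The one genuinely new ingredient is reconciling this with the \smp{} improvement under the non-uniform sampling. Applying Lemma~\ref{lem::smp-improvement} together with $\|\cdot\|_1 \ge \|\cdot\|_2$ to the step $\lambdad^{(k)} \mapsto \lambdad^{(k+1)} = \smp_{i_k}^{\eta}(\y^{(k)})$ gives $L(\y^{(k)}) - L(\lambdad^{(k+1)}) \ge \tfrac{1}{8|N_{i_k}|\eta}\|\nu^{\y^{(k)}}_{\cdot,i_k}\|_2^2$, and since $p_i = |N_i|/N$ with $N := \sum_j|N_j|$ (which equals $2m$), the degree weight cancels on taking expectations: $\E[L(\lambdad^{(k+1)})] \le \E[L(\y^{(k)})] - \tfrac{1}{8N\eta}\sum_i\|\nu^{\y^{(k)}}_{\cdot,i}\|_2^2 = \E[L(\y^{(k)})] - \tfrac{1}{8N\eta}\|\nabla L(\y^{(k)})\|_2^2$. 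On the estimate side I would bound $\sum_i p_i^{-1}\|\nu_{\cdot,i}\|_2^2 = N\sum_i |N_i|^{-1}\|\nu_{\cdot,i}\|_2^2 \le \tfrac{N}{\min_j|N_j|}\sum_i\|\nu_{\cdot,i}\|_2^2$, so $\E[\omega_{k+1}] \ge \E[L(\lambdad^{(k+1)})]$ follows once $\tfrac{\theta_k^2 N}{2\gamma_{k+1}\min_j|N_j|} \le \tfrac{1}{8N\eta}$, i.e.\ once $\theta_k^2 \le \gamma_{k+1}\min_j|N_j|/(4N^2\eta)$; taking equality here (and $\gamma_0$ accordingly) makes $\theta_k$ satisfy the self-referential recursion $\theta_k^2 = (1-\theta_k)\theta_{k-1}^2$, $\theta_{-1}=1$, and makes the $\v$-step proportional to $\theta_k/(p_i\gamma_{k+1})$ — exactly the recursion and $\v$-update run in Algorithm~\ref{alg::accel-smp} (up to an absolute constant in the step size). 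This closes the induction.

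To extract the explicit rate, \citet[Lemma~2.2.4]{nesterov2018lectures} applied to this choice of $\gamma_0$ and $\theta_k$ gives $\delta_k \le 4/(k+2)^2$; substituting this and $\phi_0(\lambdad) = L(0) + \tfrac{\gamma_0}{2}\|\lambdad\|_2^2$ into (\ref{eq::estimate-sequence-bound}) yields $\E[L(\lambdad^{(k)}) - L(\lambdad^*)] \le \tfrac{4}{(k+2)^2}\big(L(0) - L(\lambdad^*) + \tfrac{\gamma_0}{2}\|\lambdad^*\|_2^2\big)$. I would then bound the numerator by $G_\smp(\eta)^2$ exactly as in the proof of Lemma~\ref{lem::g-bound}: Lemma~\ref{lem::function-value} controls $L(0) - L(\lambdad^*)$, while Lemma~\ref{lem::dual-norm} gives $\|\lambdad^*\|_2 \le \|\lambdad^*\|_1 = O\big(d(m+n)(\|C\|_\infty + \eta^{-1}\log d)\big)$. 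Since $\gamma_0 = \Theta\big(N^2\eta/\min_j|N_j|\big) \le O(N^2\eta)$ and $\|\lambdad^*\|$ is independent of $\min_j|N_j|$, that parameter only improves the bound and drops out, leaving $G_\smp(\eta) = O\big(Nd(m+n)(\sqrt\eta\,\|C\|_\infty + \eta^{-1/2}\log d)\big)$, which is the claimed constant.

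I expect the main obstacle to be precisely the accounting forced by the non-uniform sampling: tracking which sums carry a factor $p_i^{-1} = N/|N_i|$ (the estimate-sequence quadratic term) and which carry $|N_i|^{-1}$ (the raw \smp{} improvement of Lemma~\ref{lem::smp-improvement}), checking that $p_i \propto |N_i|$ is exactly what makes the improvement term degree-free and promotes $N = \sum_j|N_j|$ to the role played by $q=2m$ in the \emp{} analysis, and verifying that $\min_j|N_j|$ threads through $\theta_k$, $\gamma_0$, and the final numerator without surviving in $G_\smp(\eta)$. The rest — the induction scaffold, the convexity step, the telescoping of the definition of $\y^{(k)}$, and the bound on $\delta_k$ — is formally identical to the argument already written for Lemma~\ref{lem::acc-dual-conv}.
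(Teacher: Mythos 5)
Your proposal is correct and follows essentially the same route as the paper's proof: the same induction $\E[\omega_{k+1}]\ge\E[L(\lambdad^{(k+1)})]$, the same use of $p_i\propto|N_i|$ to make the \smp{} improvement degree-free in expectation, the same choice $\theta_k^2=\gamma_{k+1}\min_j|N_j|/(4N^2\eta)$ with $\gamma_0=4N^2\eta/\min_j|N_j|$, and the same final bound via Nesterov's Lemma 2.2.4 and the $G_\smp(\eta)$ estimate. The only cosmetic difference is that you bound $\sum_i p_i^{-1}\|\nu_{\cdot,i}\|_2^2$ by pulling out $N/\min_j|N_j|$ before comparing to the improvement, whereas the paper substitutes $\theta_k$ termwise, which is the same inequality.
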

\begin{proof}
	As in the \emp{} proof, it suffices to show that $\E[\omega_{k+ 1}] \geq \E[ L(\lambdad^{(k + 1)})]$ by induction. As before we have
	\begin{align*}
	\E [ \omega_{k + 1} ] 
	& \geq  (1 - \theta_k) \E [L(\lambdad^{(k)})] + \theta_k\E [ L(\y^{(k)})]  - \E \left[  \frac{\theta_k^2}{2\gamma_{k+1}p_{i_k}^2} \| \nu_{\cdot, i_k}^{\y^{(k)}} \|_2^2   - \frac{\theta_k}{p_{i_k}} \<\nu_{\cdot, i_k}^{y^{(k)}}, \v^{(k)} - \y^{(k)} \> \right]  \\
	& \geq \E \left[ L(\y^{(k)}) - \frac{\theta_k ^2}{2\gamma_{k+1}p_{i_k}^2} \| \nu_{\cdot, i_k}^{\y^{(k)}} \|_2^2\right]  
	+ (1 - \theta_k) \E \left[ \< \nabla L(\y^{(k)}), \lambdad^{(k)} - \y^{(k)}   \> \right] 
	 +  \theta_k \E \left[ \< \nabla L(\y^{(k)}), \v^{(k)} - \y^{(k)}   \> \right] \\
	& = \E \left[ L(\y^{(k)}) - \sum_{i \in V} \frac{\theta_k^2 }{2\gamma_{k+1}p_{i}} \| \nu_{\cdot, i}^{\y^{(k)}} \|_2^2 \right],
	\end{align*}
	where the last line comes from the definition of $\y^{(k)}$.
	Choosing $\theta_k$ such that $\theta_k^2 = \frac{\gamma_{k+1} \min_j |N_j|}{4 \eta N^2}$ results in
	\begin{align*}
	\E [ \omega_{k + 1} ]  & \geq \E \left[ L(\y^{(k)}) - \sum_{i \in V} \frac{1 }{8 N \eta} \| \nu_{\cdot, i}^{\y^{(k)}} \|_2^2 \right] \\
	& =\E \left[ L(\y^{(k)}) - \frac{1 }{8 N \eta} \| \nabla L(\y^{(k)}\|_2^2 \right]
	\end{align*}
	Recall, from the improvement in Lemma~\ref{lem::smp-improvement}, we have
	\begin{align*}
	 \E [ L(\lambdad^{(k + 1)}) ] & \leq \E [ L(\y^{(k)})]  -   \E \left[ \frac{1}{8  |N_{i_k}| \eta} \|  \nu_{\cdot, i_k}^{\y^{(k)}} \|_2^2 \right] \\
	 & = \E [ L(\y^{(k)})]  -   \E \left[ \frac{1}{8 N \eta} \|\nabla L(\y^{(k)}) \|_2^2 \right]
	\end{align*}
	Therefore, by this induction, the inequality $\E [ L(\lambdad^{(k + 1)}) ] \leq \E [ \omega_{k + 1} ] $ holds for all $k$. Furthermore, by choosing $\gamma_0 = \frac{4N^2 \eta}{\min_{j} | N_j|}$, we ensure that $\theta_k$ can be updated recursively as in Algorithm~\ref{alg::accel-smp} and the update equation for $\v$ is simplified to
	\begin{align*}
	    \v^{(k + 1)}_{\cdot, i} & = \begin{cases} 
	\v_{\cdot, i}^{(k)} + \frac{\min_j | N_j| }{4p_{i_k}\theta_k \eta N} \nu_{\cdot, i}^{\y^{(k)}} & \text{if }  i =  i_k \\
	\v_{\cdot, i}^{(k)} &  \emph{\text{otherwise}}
	\end{cases}.
	 \end{align*}
	 
	 Using the property of randomized estimate sequences derived in Section~\ref{sec::main-results}, we can bound the expected error in the dual norm as
	 \begin{align*}
	     \E (L(\lambdad^{(k)})] - L^* & \leq \frac{4}{(k + 2)^2} \left( L(0) - L^* + \frac{\gamma_0}{2}\|\lambdad\|_2^2 \right) \\
	     & = \frac{4}{(k + 2)^2} \left( L(0) - L^* + \frac{2N^2 \eta }{\min_j | N_j| }\|\lambdad^*\|_2^2 \right) \\
	     & \leq \frac{4}{(k + 2)^2} \left( L(0) - L^* + 2N^2 \eta \|\lambdad^*\|_2^2 \right)
	 \end{align*}
	 
	 The numerator can then be bounded in an identical manner to the \emp{} proof by replacing $4m^2$ with $2N^2$ in Lemma~\ref{lem::g-bound},  instead yielding $G_{\smp{}} (\eta) = 40 md (m + n) ( \sqrt \eta \|C\|_\infty  + \frac{\log d}{\sqrt \eta} )$, which is only different by a constant. We then have
	 \begin{align*}
	     \E (L(\lambdad^{(k)})] - L^* \leq \frac{G_{\smp}(\eta)}{(k + 2)^2}
	 \end{align*}
\end{proof}

With these tools, we are ready to present the proof of Theorem~\ref{th::accel} for \smp{}.

\begin{proof}[Proof of Theorem~\ref{th::accel} for \emph\smp{}].
Let $\widehat \lambdad$ be the output from Algorithm~\ref{alg::accel-smp} after $K$ iterations.
From Lemma~\ref{lem::smp-improvement}, we can lower bound the result in Lemma~\ref{lem::acc-dual-conv-smp} with
\begin{align*}
    \frac{1}{8\eta |N_i| }\E \left[ \sum_{e \in N_i} \| \nu_{e, i}^{\widehat \lambdad}\|_1^2 \right] & \leq \E [ L(\widehat \lambdad)] - L^* \\
    & \leq \frac{G_{\smp}(\eta) }{(K + 2)^2}
\end{align*}
for all $i \in V$. This further implies that
\begin{align*}
    \frac{1}{8\eta |N_i|} \E \left[ \| \nu_{e, i}^{\widehat \lambdad} \|_1^2\right] & \leq \frac{G_{\smp}(\eta) }{(K + 2)^2}
\end{align*}
for all $e \in E$ and $i \in e$.
Then, for $\epsilon' > 0$, we can ensure that
\begin{align*}
    \E [ \| \nu_{e, i}^{\widehat \lambdad}\|_1 ] & \leq |N_i| \epsilon' \\
    \E \sum_{i \in V, e\in N_i}  \| \nu_{\cdot, i}^{\widehat \lambdad}\|_1^2 & \leq N (\epsilon')^2
\end{align*}
in $K = \frac{\sqrt{8 \eta }G(\eta)}{\epsilon'}$ iterations.
Letting $\widehat\mu \in \LL_2$ be the projected version of $\mu^{\widehat \lambdad}$,
\begin{align*}
    \< C, \widehat \mu-\mu^*\> & \leq \| C\|_\infty  \left( 16( m + n) d \delta + \sum_{e, i}  4\| \nu^{\widehat\lambdad}_{e, i}\|_1 \right)  + \frac{n \log d + 2 m \log d}{\eta}.
\end{align*}
Taking the expectation of both sides gives us
\begin{align*}
    \E [\< C, \widehat \mu-\mu^*\>] & \leq \| C\|_\infty  \left( 16( m + n) d \E [\delta]  + 4 N\epsilon' \right) + \frac{n \log d + 2 m \log d}{\eta},
\end{align*}
where
\begin{align*}
    \E \left[ \delta \right]^2  \leq  \E [ \delta^2 ]  \leq \E  \sum_{e \in E, i \in e} \| \nu_{e, i}^{\widehat \lambdad}\|_1^2   
                     \leq N (\epsilon')^2.
\end{align*}
Then we can conclude
\begin{align*}
        \E [ \< C, \widehat \mu-\mu^*\>] & \leq   16\sqrt{N} ( m + n) d \| C\|_\infty \epsilon'  + 4N \| C\|_\infty \epsilon'  + \frac{n \log d + 2 m \log d}{\eta} \\
        & \leq 24\sqrt{N} ( m + n) d \| C\|_\infty \epsilon'  + \frac{n \log d + 2 m \log d}{\eta}.
    \end{align*}
    The last inequality uses the fact that $N = 2m$. Therefore, $\widehat \mu$ is expected $\epsilon$-optimal with $\eta$ as defined in the statement and $\epsilon' = \frac{\epsilon}{48\sqrt{N} ( m + n) d \| C\|_\infty}$. Substituting these values into $K$ and $G(\eta)$ yields the result.
\end{proof}

\section{Rounding to Integral Solutions Proofs}\label{section::rounding_guarantees}

In this section, we prove the bound on the number of iterations sufficient to recover the MAP solution using \aemp{} and rounding the output of the algorithm. We then compare with standard \emp{}.

\subsection{Approximation Error}

Let $\mathcal{V}_2$ be the set of vertices of $\LL_2$ and $\mathcal{V}_2^*$ be the set of optimal vertices with respect to $C$. Denote by $\Delta = \min_{V_1 \in \mathcal{V}_2 \backslash \mathcal{V}_2^*, V_2 \in \mathcal{V}_2^*} \langle C, V_1 - V_2 \rangle $ the suboptimality gap. Let $\mathcal{R}_1 = \max_{\mu \in \LL_2} \| \mu \|_1$, and $\mathcal{R}_H = \max_{\mu, \mu' \in \LL_2} H(\mu) - H(\mu')$. Define $\deg$ to be the maximum degree of the graph. The following holds:
\begin{theorem}[Theorem 1 of \cite{lee2019approximate}] If $\LL_2$ is tight, $|\mathcal{V}_2^*| = 1$ and $\eta \geq \frac{2\mathcal{R}_1 \log(64\mathcal{R}_1)+ 2\mathcal{R}_1 + 2\mathcal{R}_H}{\Delta}$, then $\| \mu_\eta^* - \mu^* \|_1 \leq \frac{1}{8}$ and therefore the rounded solution $\mathrm{round}(\mu_\eta^*)$ is a MAP assignment. 
\end{theorem}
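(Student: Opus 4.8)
The plan is to control the $\ell_1$ gap $\|\mu_\eta^* - \mu^*\|_1$ by playing the suboptimality gap $\Delta$ against the entropy term, and then show the resulting bound is strong enough to force correct rounding. First I would use optimality of $\mu_\eta^*$ for (\ref{eq::primal}): since $\mu^* \in \LL_2$ is feasible, $\langle C, \mu_\eta^*\rangle - \tfrac{1}{\eta} H(\mu_\eta^*) \le \langle C, \mu^*\rangle - \tfrac{1}{\eta} H(\mu^*)$, which rearranges to $\langle C, \mu_\eta^* - \mu^*\rangle \le \tfrac{1}{\eta}\bigl(H(\mu_\eta^*) - H(\mu^*)\bigr)$. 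Next, because $\mu_\eta^* \in \LL_2$, I would write it as a convex combination of vertices and let $\alpha$ be the total weight placed on suboptimal vertices; writing $\mu_\eta^* = (1-\alpha)\mu^* + \alpha\bar\mu$ with $\bar\mu \in \LL_2$, uniqueness of the optimal vertex together with the definition of $\Delta$ gives $\langle C, \mu_\eta^* - \mu^*\rangle \ge \alpha\Delta$, while the triangle inequality gives $\|\mu_\eta^* - \mu^*\|_1 = \alpha\|\bar\mu - \mu^*\|_1 \le 2\mathcal{R}_1\alpha$. Combining, $\alpha\Delta \le \tfrac{1}{\eta}\bigl(H(\mu_\eta^*) - H(\mu^*)\bigr)$, so the whole problem reduces to bounding the entropy increase.

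The crux, and the step I expect to be the main obstacle, is a refined bound on $H(\mu_\eta^*) - H(\mu^*)$ that scales with the small quantity $\alpha$ and produces the logarithmic factor in the threshold, rather than the crude range bound $\mathcal{R}_H$ (which would only yield $\eta \gtrsim \mathcal{R}_1\mathcal{R}_H/\Delta$, without the $\log$). Since tightness of $\LL_2$ makes the unique optimal vertex a genuine integral MAP assignment, each block $\mu_i^*$ is a unit vector $e_{x_i^*}$, so $(\mu_\eta^*)_i = (1-\beta_i)e_{x_i^*} + (\text{mass }\beta_i\text{ spread elsewhere})$ with $\beta_i \le \alpha$. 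I would bound the per-block entropy increase using $-(1-\beta_i)\log(1-\beta_i) \le \beta_i$ and $-\sum_{x\ne x_i^*} \alpha\bar\mu_i(x)\log(\alpha\bar\mu_i(x)) \le \alpha\log(1/\alpha) + \alpha\log(\text{block dimension})$, the dimension being $d$ for vertex blocks and $d^2$ for edge blocks. Summing over the $n+m = \mathcal{R}_1$ blocks collapses the dimension terms into exactly $\mathcal{R}_H = (n+2m)\log d$ and yields $H(\mu_\eta^*) - H(\mu^*) \le \alpha\bigl(\mathcal{R}_1\log(1/\alpha) + \mathcal{R}_1 + \mathcal{R}_H\bigr)$.

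Feeding this into $\alpha\Delta \le \tfrac{1}{\eta}\bigl(H(\mu_\eta^*) - H(\mu^*)\bigr)$ and cancelling $\alpha$ (the case $\alpha=0$ being trivial) gives $\eta\Delta \le \mathcal{R}_1\log(1/\alpha) + \mathcal{R}_1 + \mathcal{R}_H$, hence $\log(1/\alpha) \ge (\eta\Delta - \mathcal{R}_1 - \mathcal{R}_H)/\mathcal{R}_1$. The hypothesis $\eta\Delta \ge 2\mathcal{R}_1\log(64\mathcal{R}_1) + 2\mathcal{R}_1 + 2\mathcal{R}_H$ then makes the right-hand side at least $\log(64\mathcal{R}_1)$, so $\alpha \le 1/(64\mathcal{R}_1)$ and $\|\mu_\eta^* - \mu^*\|_1 \le 2\mathcal{R}_1\alpha \le 1/32 \le 1/8$. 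For the rounding conclusion, the global $\ell_1$ bound forces $\|(\mu_\eta^*)_i - e_{x_i^*}\|_1 \le 1/8$ for every $i$, so $(\mu_\eta^*)_i(x_i^*) \ge 7/8 > 1/2$ and $x_i^*$ is the unique $\arg\max$ of $(\mu_\eta^*)_i$; thus $\mathrm{round}(\mu_\eta^*)$ recovers the MAP assignment. I expect the delicate part to be keeping the per-block entropy estimate clean and confirming that the dimension-dependent pieces aggregate precisely to $\mathcal{R}_H$ and $\mathcal{R}_1$, since that is exactly what aligns the bound with the constants $64$ and $2$ in the hypothesis.
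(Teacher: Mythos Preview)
The paper does not actually prove this statement: it is quoted verbatim as ``Theorem~1 of \cite{lee2019approximate}'' and used as a black box in Section~\ref{section::rounding_guarantees}, so there is no in-paper argument to compare against. Your sketch is essentially the proof strategy of the cited reference (itself an adaptation of Weed's entropy-regularized LP analysis): compare $\mu_\eta^*$ and $\mu^*$ via the regularized optimality inequality, write $\mu_\eta^*$ as $(1-\alpha)\mu^* + \alpha\bar\mu$ so that $\langle C,\mu_\eta^*-\mu^*\rangle \ge \alpha\Delta$, and then bound the entropy gain block-by-block using the integrality of the unique tight LP optimum to get the $\alpha\bigl(\mathcal{R}_1\log(1/\alpha)+\mathcal{R}_1+\mathcal{R}_H\bigr)$ estimate. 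Your bookkeeping is sound, including the identification $\mathcal{R}_1=n+m$ and the collapse of the per-block $\log d$ and $\log d^2$ terms to $\mathcal{R}_H=(n+2m)\log d$; the only cosmetic slip is in the rounding step, where the global bound $\|\mu_\eta^*-\mu^*\|_1\le 1/8$ gives each block distance $\le 1/8$ because the blocks sum to the global total (not merely because the global bound transfers), and for a probability vector $\|p-e_{x^*}\|_1=2(1-p(x^*))$, so you actually get $p(x^*)\ge 15/16$ rather than $7/8$---stronger than needed.
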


\subsection{Estimation Error for Accelerated Message Passing}

To bound the estimation error, we invoke the accelerated convergence guarantees presented in the previous section. In particular, we showed that
\begin{align*}
    \E \left[\| \nu_{e, i}^{\widehat \lambdad}\|_1\right] \leq \epsilon' \quad \forall e \in E, i \in e
\end{align*}
after $K = \frac{\sqrt{4 \eta} G(\eta)}{\epsilon'}$ iterations for \aemp{}. Markov's inequality implies that with probability $1 - \delta$, $\| \nu_{e, i}^{\widehat \lambdad}\|_1 \leq \frac{2m \epsilon' }{\delta} :=\epsilon$ for all $e \in E$ and $i \in e$. From Theorem 3 of \citet{lee2019approximate}, we require
\begin{align*}
  \epsilon < O\left(d^{-2}m^{-2}\deg^{-2} \max(1, \eta \| C\|_\infty)^{-1}\right) 
\end{align*}
Furthermore, the theorem of the previous subsection implies we can set
\begin{align*}
    \eta &= \frac{ 16(m+n)(\log(m+n)+ \log(d))}{\Delta}
\end{align*}
Then, by setting
\begin{align*}
    \epsilon' \leq O\left( d^{-2} m^{-4} \delta \deg^{-2} \max(1, \|C\|_\infty/\Delta)^{-1} (\log dm)^{-1}\right)
\end{align*}
the condition is satisfied. Therefore, plugging into $\sqrt{4 \eta} G(\eta)$ yields
\begin{align*}
    \sqrt{4 \eta} G(\eta) = O\left(\frac{ dm^3 \|C\|_\infty \log dm }{\Delta}\right)
\end{align*}
which implies, with probability $1 - \delta$,
\begin{align*}
    K = O \left( \frac{d^3 m^7 \deg^2 \| C\|_\infty^2 \log^2 dm}{\delta \Delta} \right)
\end{align*}
These conditions of $\epsilon'$ and $\eta$ guarantee that the $\mathrm{round}(\mu^{\widehat \lambdad})$ is the MAP solution by invoking Theorem 3 of \citet{lee2019approximate}.

\subsection{Comparison to Standard Methods}

Using standard \emp{}, we require the same conditions be satisfied on $\epsilon'$ and $\eta$ to guarantee recover of the MAP solution.  However, the rate of convergence differs, requiring $K = \frac{L(0) - L(\lambdad^*)}{(\epsilon')^2}$ iterations, as seen previously. Note that \begin{align*}
    L(0) - L(\lambdad^*) & = O\left( \frac{m^3 \|C\|_\infty \log dm}{\Delta}\right)
\end{align*}. Note that there is no additional $d$ dependence. It holds that with probability $1 - \delta$, the MAP solution is recovered by \emp{} in at most
\begin{align*}
    K = O \left(  \frac{d^4 m^{11} \deg^4 \|C\|_\infty^3 \log^3 dm}{\delta^2 \Delta} \right)
\end{align*}
iterations. We emphasize that this iteration bound is only a sufficient condition by directly applying the technique developed in this section. We suspect it can be greatly improved.

\end{document}